\documentclass[]{bytedance_seed}

\usepackage{airesearch}

\usepackage{wrapfig}
\setlist[itemize]{leftmargin=*,topsep=1pt,itemsep=1pt}
\usetikzlibrary{positioning, calc, shapes.geometric, arrows.meta, backgrounds, fit, matrix, decorations.pathreplacing, shadows.blur}

\definecolor{appleBlue}{RGB}{0, 122, 255}
\definecolor{appleLightBlue}{RGB}{235, 245, 255}
\definecolor{appleRed}{RGB}{255, 59, 48}
\definecolor{appleLightRed}{RGB}{255, 235, 235}
\definecolor{appleGreen}{RGB}{52, 199, 89}
\definecolor{appleLightGreen}{RGB}{235, 255, 235}
\definecolor{appleGray}{RGB}{142, 142, 147}
\definecolor{appleLightGray}{RGB}{242, 242, 247}
\definecolor{appleDarkGray}{RGB}{44, 44, 46}
\definecolor{appleDark}{RGB}{28, 28, 30}
\definecolor{applePurple}{RGB}{175, 82, 222}
\definecolor{appleLightPurple}{RGB}{243, 233, 255}
\definecolor{appleOrange}{RGB}{255, 149, 0}
\definecolor{appleLightOrange}{RGB}{255, 235, 205}
\definecolor{appleTeal}{RGB}{48, 176, 199}
\definecolor{appleLightTeal}{RGB}{213, 240, 245}
\usepackage{booktabs}
\usepackage{colortbl,xcolor}
\usepackage{siunitx}
\usepackage{array}     % for \newcolumntype
\usepackage{collcell}  % for \collectcell

\newcommand{\AvgCell}[1]{\cellcolor{red!8}\tablenum{#1}}
\newcolumntype{A}{>{\collectcell\AvgCell}c<{\endcollectcell}}

\usepackage{algorithm}
\usepackage{algpseudocode} % Provides the algorithmic commands like \Require, \State, etc.

\makeatletter
\newenvironment{breakablealgorithm}
  {%
    \par\addvspace{0.5\baselineskip}
    \noindent\refstepcounter{algorithm}%
    \hrule height .8pt\relax
    \kern 2pt
    \renewcommand{\caption}[2][\relax]{%
      {\raggedright\textbf{Algorithm \thealgorithm} ##2\par}%
      \ifx\relax##1\relax
        \addcontentsline{loa}{algorithm}{\protect\numberline{\thealgorithm}##2}%
      \else
        \addcontentsline{loa}{algorithm}{\protect\numberline{\thealgorithm}##1}%
      \fi
      \kern 2pt\hrule\kern 2pt
    }%
  }{%
    \kern 2pt\hrule
    \par\addvspace{0.5\baselineskip}
  }
\makeatother
\usepackage{dsfont}

\tikzset{
  section_label/.style={
    font=\sffamily\bfseries\small,
    text=appleGray,
    align=center
  },
  base_node/.style={
    draw=appleGray!55,
    fill=white,
    rounded corners=6pt,
    inner sep=5pt,
    outer sep=0pt,
    align=center,
    font=\sffamily\small
  },
  state/.style={
    base_node,
    draw=appleGray!60,
    fill=appleLightGray!25,
    minimum height=8mm,
    inner xsep=6pt,
    font=\sffamily\bfseries\scriptsize
  },
  tensor/.style={
    base_node,
    draw=appleGray!55,
    fill=white,
    rounded corners=5pt,
    minimum height=7mm,
    inner xsep=4pt,
    font=\small
  },
  value_tensor/.style={
    tensor,
    draw=appleRed!70!black,
    fill=appleRed!8
  },
  signal_arrow/.style={
    -{Latex[length=2.2mm,width=1.6mm]},
    line width=0.9pt,
    draw=appleBlue!75
  },
  arrow/.style={
    -{Latex[length=2.2mm,width=1.6mm]},
    line width=0.9pt,
    draw=appleGray!75
  }
}

\newcommand{\Falcon}{Falcon}

\title{Fast Weight Attention for Continual Learning}

\author[1,2,*]{Yifan Zhang}
\author[2]{Steve Ta}
\author[5]{Jasper Zhang}
\author[2]{Jichen Feng}
\author[2]{Shuzhen Li}
\author[1]{\\Yongxin Zhang}
\author[1]{Yifeng Liu}
\author[1]{Huizhuo Yuan}
\author[2, \dagger]{Mengdi Wang}
\author[1, \dagger]{Quanquan Gu}
\author[3, \dagger]{Andrew Chi-Chih Yao}

\affiliation[1]{Bytedance Seed}
\affiliation[2]{Princeton University}
\affiliation[3]{Tsinghua University}
\affiliation[4]{UCLA}
\affiliation[5]{Hyperbolic Labs}

\contribution[*]{Work was partially done when Yifan was at ByteDance Seed}
\contribution[\dagger]{Corresponding authors}

\abstract{
    Recurrent fast-weight memories and selective state-space models compress an expanding context into a fixed-size recurrent state, making the state transition an online learning rule. We study this rule under read-after-write autoregressive semantics. For the prefix-prediction objective considered here, the local fast-memory example revealed at step $t$ is the prefix-aligned pair $(\mathbf{x}_t,\mathbf{y}_t)=(\phi(\mathbf{k}_{t-1}),\mathbf{v}_t)$. The common same-step association $(\phi(\mathbf{k}_t),\mathbf{v}_t)$ remains causal, but optimizes a different internal objective. We derive normalized first-order updates for squared-error regression and negative inner-product objectives. The regression family comprises \Falcon-1 (a scalar NLMS update), \Falcon-2 (its per-column extension), and \Falcon-3 (a sliding-window mini-batch update); \Falcon-1A/\Falcon-2A/\Falcon-3A are the corresponding inner-product variants. We provide recurrent, masked-parallel, and chunk-parallel forms, together with numerically stable positive-decay renormalization. Representative variants remain competitive in language modeling and improve length extrapolation on variable-digit addition. This framework separates temporal alignment, plasticity, forgetting, and bounded rehearsal in recurrent sequence models.
}

\date{March 9, 2026}
\checkdata[Project Page]{\url{https://github.com/yifanzhang-pro/fast-weight-attention}}

\begin{document}
\maketitle

\begin{figure}[ht!]
\centering
\resizebox{0.85\linewidth}{!}{%
\begin{tikzpicture}[
    font=,
    >={Latex[length=2.4mm,width=1.8mm]},
    card/.style={rounded corners=8pt, fill=white, draw=appleGray!45, line width=0.6pt},
    pill/.style 2 args={rounded corners=3pt, fill=#1!10, draw=#1, line width=0.4pt,
                        minimum width=#2, minimum height=0.46cm,
                        font=\scriptsize, text=#1, inner sep=2pt},
    arr/.style={->, line width=0.7pt, draw=appleGray!70, rounded corners=2pt},
    arrW/.style={->, line width=1pt, draw=appleOrange, rounded corners=2pt},
    arrR/.style={->, line width=1pt, draw=appleBlue, rounded corners=2pt},
    arrEta/.style={->, line width=0.8pt, draw=appleOrange!75, rounded corners=2pt},
    panelbg/.style={rounded corners=12pt, fill=appleLightGray!22, draw=appleGray!35, line width=0.6pt},
]

% ===== Panel backgrounds =====
\begin{scope}[on background layer]
  \node[panelbg, minimum width=15cm, minimum height=5.8cm] at (7.5,  4.0) {}; % A
  \node[panelbg, minimum width=15cm, minimum height=5.8cm] at (7.5, -2.3) {}; % B
  \node[panelbg, minimum width=15cm, minimum height=5.8cm] at (7.5, -8.6) {}; % C
\end{scope}

% ============================================================
% PANEL A: FALCON-1 -- scalar plasticity
% ============================================================
\node[font=\large\bfseries, text=appleDark] at (7.5, 6.55)
    {A. Falcon-1: scalar plasticity};
\node[font=\footnotesize, text=appleGray] at (7.5, 6.15)
    {one scalar $\eta_t$ shared across all $d_v$ value channels};

\node[card, minimum width=2.3cm, minimum height=2.9cm] (inputA) at (1.5, 3.3) {};
\node[font=\footnotesize\bfseries, text=appleDark] at ($(inputA.north)+(0,-0.28)$) {Token $t$};
\node[pill={appleBlue}{1.85cm}] (xtA) at ($(inputA.center)+(0,0.55)$)
    {$\mathbf{x}_t\!=\!\phi(\mathbf{k}_{t-1})$};
\node[pill={appleGreen!55!black}{1.85cm}] (ytA) at ($(inputA.center)+(0,0.00)$)
    {$\mathbf{y}_t\!=\!\mathbf{v}_t$};
\node[pill={applePurple}{1.85cm}] (qtA) at ($(inputA.center)+(0,-0.55)$)
    {$\mathbf{q}_t$};

\def\cw{0.24}\def\rh{0.24}
\pgfmathsetmacro{\matAX}{4.7}\pgfmathsetmacro{\matAY}{3.25}
\node[card, minimum width=2.7cm, minimum height=3.3cm] (stateAprev) at (\matAX, \matAY) {};
\node[font=\footnotesize\bfseries, text=appleDark] at (\matAX, \matAY+1.35) {$\mathbf{S}_{t-1}$};
\node[font=\tiny, text=appleGray] at (\matAX, \matAY+1.10) {$d_x\!\times\!d_v$};
\pgfmathsetmacro{\gridAleft}{\matAX - 4*\cw}
\pgfmathsetmacro{\gridAbot}{\matAY - 0.05 - 3*\rh}
\foreach \r in {0,...,5}{\foreach \c in {0,...,7}{%
  \pgfmathsetmacro{\cx}{\gridAleft + \c*\cw}\pgfmathsetmacro{\cy}{\gridAbot + \r*\rh}%
  \fill[appleLightBlue, rounded corners=0.6pt]
    (\cx+0.025, \cy+0.025) rectangle (\cx+\cw-0.025, \cy+\rh-0.025);}}

\node[circle, draw=appleOrange, fill=appleLightOrange, line width=0.7pt,
      minimum size=0.95cm, font=\footnotesize\bfseries, text=appleOrange!40!black]
    (etaA) at (8.0, 5.30) {$\eta_t$};

\node[card, minimum width=2.5cm, minimum height=1.5cm] (formulaA) at (8.0, 3.3) {};
\node[font=\footnotesize\bfseries, text=appleDark] at ($(formulaA.north)+(0,-0.25)$) {Update};
\node[font=\scriptsize, text=appleDarkGray, align=center] at ($(formulaA.center)+(0,-0.07)$)
    {$\mathbf{S}_t\!=\!(1{-}\eta_t\lambda_t)\mathbf{S}_{t-1}$\\[-1pt]
     $+\,\eta_t\,\mathbf{x}_t\mathbf{r}_t^{\!\top}$};

\pgfmathsetmacro{\matBX}{11.3}
\node[card, minimum width=2.7cm, minimum height=3.3cm] (stateAcur) at (\matBX, \matAY) {};
\node[font=\footnotesize\bfseries, text=appleDark] at (\matBX, \matAY+1.35) {$\mathbf{S}_t$};
\node[font=\tiny, text=appleGray] at (\matBX, \matAY+1.10) {all cols updated equally};
\pgfmathsetmacro{\gridBleft}{\matBX - 4*\cw}
\foreach \r in {0,...,5}{\foreach \c in {0,...,7}{%
  \pgfmathsetmacro{\cx}{\gridBleft + \c*\cw}\pgfmathsetmacro{\cy}{\gridAbot + \r*\rh}%
  \fill[appleOrange!55, rounded corners=0.6pt]
    (\cx+0.025, \cy+0.025) rectangle (\cx+\cw-0.025, \cy+\rh-0.025);}}

\pgfmathsetmacro{\barAY}{\gridAbot - 0.32}
\foreach \c in {0,...,7}{\pgfmathsetmacro{\cx}{\gridBleft + \c*\cw}%
  \fill[appleOrange, rounded corners=0.6pt]
    (\cx+0.03, \barAY) rectangle (\cx+\cw-0.03, \barAY+0.20);}
\node[font=\tiny, text=appleOrange!40!black] at (\matBX, \barAY-0.17)
    {$\eta_t$ per col (uniform)};

\node[card, minimum width=1.9cm, minimum height=1.4cm] (outputA) at (13.95, \matAY) {};
\node[font=\footnotesize\bfseries, text=appleDark] at ($(outputA.north)+(0,-0.25)$) {Output};
\node[pill={applePurple}{1.6cm}] at ($(outputA.center)+(0,-0.05)$)
    {$\mathbf{S}_t^{\!\top}\mathbf{q}_t$};

\draw[arr]    (xtA.east) -- ++(0.20,0) |- (formulaA.west |- xtA);
\draw[arr]    (ytA.east) -- ++(0.20,0) |- (formulaA.west |- ytA);
\draw[arr]    (stateAprev.east) -- (formulaA.west);
\draw[arrW]   (formulaA.east)   -- (stateAcur.west);
\draw[arrR]   (stateAcur.east)  -- (outputA.west);
\draw[arrEta] (etaA.south)      -- (formulaA.north);
\draw[arrR]   (qtA.south) -- ++(0,-1.20) -| (outputA.south);

% ============================================================
% PANEL B: FALCON-2 -- per-channel plasticity (yshift=-6.3cm)
% ============================================================
\begin{scope}[yshift=-6.3cm]

\node[font=\large\bfseries, text=appleDark] at (7.5, 6.55)
    {B. Falcon-2: per-channel plasticity};
\node[font=\footnotesize, text=appleGray] at (7.5, 6.15)
    {vector $\boldsymbol{\eta}_t\in\mathbb{R}^{d_v}$, one rate per value channel};

\node[card, minimum width=2.3cm, minimum height=2.9cm] (inputB) at (1.5, 3.3) {};
\node[font=\footnotesize\bfseries, text=appleDark] at ($(inputB.north)+(0,-0.28)$) {Token $t$};
\node[pill={appleBlue}{1.85cm}] (xtB) at ($(inputB.center)+(0,0.55)$)
    {$\mathbf{x}_t\!=\!\phi(\mathbf{k}_{t-1})$};
\node[pill={appleGreen!55!black}{1.85cm}] (ytB) at ($(inputB.center)+(0,0.00)$)
    {$\mathbf{y}_t\!=\!\mathbf{v}_t$};
\node[pill={applePurple}{1.85cm}] (qtB) at ($(inputB.center)+(0,-0.55)$)
    {$\mathbf{q}_t$};

\node[card, minimum width=2.7cm, minimum height=3.3cm] (stateBprev) at (\matAX, \matAY) {};
\node[font=\footnotesize\bfseries, text=appleDark] at (\matAX, \matAY+1.35) {$\mathbf{S}_{t-1}$};
\node[font=\tiny, text=appleGray] at (\matAX, \matAY+1.10) {$d_x\!\times\!d_v$};
\foreach \r in {0,...,5}{\foreach \c in {0,...,7}{%
  \pgfmathsetmacro{\cx}{\gridAleft + \c*\cw}\pgfmathsetmacro{\cy}{\gridAbot + \r*\rh}%
  \fill[appleLightBlue, rounded corners=0.6pt]
    (\cx+0.025, \cy+0.025) rectangle (\cx+\cw-0.025, \cy+\rh-0.025);}}

% Vector \eta_t shown as a horizontal strip with per-channel intensities
\def\etaBY{5.30}
\node[font=\scriptsize\bfseries, text=appleOrange!40!black]
    at (8.0, \etaBY+0.42) {$\boldsymbol{\eta}_t\!\in\!\mathbb{R}^{d_v}$};
\foreach \i/\op in {0/22, 1/78, 2/38, 3/88, 4/18, 5/62, 6/48, 7/32}{%
  \pgfmathsetmacro{\cx}{8.0 - 4*\cw + \i*\cw}%
  \fill[appleOrange!\op, rounded corners=0.6pt]
    (\cx+0.03, \etaBY-0.18) rectangle (\cx+\cw-0.03, \etaBY+0.18);}
\pgfmathsetmacro{\etaBL}{8.0-4*\cw-0.02}
\pgfmathsetmacro{\etaBR}{8.0+4*\cw+0.02}
\pgfmathsetmacro{\etaBT}{\etaBY+0.21}
\pgfmathsetmacro{\etaBB}{\etaBY-0.21}
\draw[draw=appleOrange!50, line width=0.4pt, rounded corners=1.5pt]
    (\etaBL, \etaBB) rectangle (\etaBR, \etaBT);

\node[card, minimum width=2.9cm, minimum height=1.5cm] (formulaB) at (8.0, 3.3) {};
\node[font=\footnotesize\bfseries, text=appleDark] at ($(formulaB.north)+(0,-0.25)$) {Update};
\node[font=\scriptsize, text=appleDarkGray, align=center] at ($(formulaB.center)+(0,-0.07)$)
    {$\mathbf{S}_t\!=\!\mathbf{S}_{t-1}\!\bigl(\mathbf{I}{-}\lambda_t\!\operatorname{Diag}(\boldsymbol{\eta}_t)\bigr)$\\[-1pt]
     $+\,\mathbf{x}_t(\boldsymbol{\eta}_t\!\odot\!\mathbf{r}_t)^{\!\top}$};

\node[card, minimum width=2.7cm, minimum height=3.3cm] (stateBcur) at (\matBX, \matAY) {};
\node[font=\footnotesize\bfseries, text=appleDark] at (\matBX, \matAY+1.35) {$\mathbf{S}_t$};
\node[font=\tiny, text=appleGray] at (\matBX, \matAY+1.10) {cols updated by $\eta_{j,t}$};
\foreach \c/\op in {0/22, 1/78, 2/38, 3/88, 4/18, 5/62, 6/48, 7/32}{%
  \foreach \r in {0,...,5}{%
    \pgfmathsetmacro{\cx}{\gridBleft + \c*\cw}\pgfmathsetmacro{\cy}{\gridAbot + \r*\rh}%
    \fill[appleOrange!\op, rounded corners=0.6pt]
      (\cx+0.025, \cy+0.025) rectangle (\cx+\cw-0.025, \cy+\rh-0.025);}}

\foreach \c/\h in {0/0.06, 1/0.22, 2/0.11, 3/0.25, 4/0.05, 5/0.18, 6/0.14, 7/0.09}{%
  \pgfmathsetmacro{\cx}{\gridBleft + \c*\cw}%
  \fill[appleOrange, rounded corners=0.6pt]
    (\cx+0.03, \barAY) rectangle (\cx+\cw-0.03, \barAY+\h);}
\node[font=\tiny, text=appleOrange!40!black] at (\matBX, \barAY-0.17)
    {$\eta_{j,t}$ per col (varies)};

\node[card, minimum width=1.9cm, minimum height=1.4cm] (outputB) at (13.95, \matAY) {};
\node[font=\footnotesize\bfseries, text=appleDark] at ($(outputB.north)+(0,-0.25)$) {Output};
\node[pill={applePurple}{1.6cm}] at ($(outputB.center)+(0,-0.05)$)
    {$\mathbf{S}_t^{\!\top}\mathbf{q}_t$};

\draw[arr]    (xtB.east) -- ++(0.20,0) |- (formulaB.west |- xtB);
\draw[arr]    (ytB.east) -- ++(0.20,0) |- (formulaB.west |- ytB);
\draw[arr]    (stateBprev.east) -- (formulaB.west);
\draw[arrW]   (formulaB.east)   -- (stateBcur.west);
\draw[arrR]   (stateBcur.east)  -- (outputB.west);
\draw[arrEta] (8.0, \etaBB)     -- (formulaB.north);
\draw[arrR]   (qtB.south) -- ++(0,-1.20) -| (outputB.south);

\end{scope}

% ============================================================
% PANEL C: FALCON-3 -- sliding-window mini-batch (yshift=-12.6cm)
% ============================================================
\begin{scope}[yshift=-12.6cm]

\node[font=\large\bfseries, text=appleDark] at (7.5, 6.55)
    {C. Falcon-3: sliding-window mini-batch};
\node[font=\footnotesize, text=appleGray] at (7.5, 6.15)
    {scalar $\eta_t$, residual averaged over window $\mathcal{I}_t$ of size $B$};

% Input card -- Window with explicit B mini-cells
\node[card, minimum width=2.3cm, minimum height=2.9cm] (inputC) at (1.5, 3.3) {};
\node[font=\footnotesize\bfseries, text=appleDark] at ($(inputC.north)+(0,-0.28)$)
    {Window $\mathcal{I}_t$ ($B\!=\!4$)};

% Window-cell geometry (4 cells, 0.36cm wide, 0.05cm gap)
\pgfmathsetmacro{\miniW}{0.36}
\pgfmathsetmacro{\miniG}{0.05}
\pgfmathsetmacro{\stripL}{1.5 - 2*\miniW - 1.5*\miniG}
\pgfmathsetmacro{\xLastC}{\stripL + 3*(\miniW+\miniG)}

% --- X-window: 4 blue mini-cells, gradient (older -> newer) ---
\foreach \i/\op in {0/22, 1/42, 2/65, 3/95}{%
  \pgfmathsetmacro{\xL}{\stripL + \i*(\miniW+\miniG)}%
  \fill[appleBlue!\op, rounded corners=0.6pt]
    (\xL, 3.70) rectangle (\xL+\miniW, 4.00);}
% Latest cell (j=t): thicker border
\draw[appleBlue, line width=0.7pt, rounded corners=0.6pt]
  (\xLastC, 3.70) rectangle (\xLastC+\miniW, 4.00);
% Phantom anchor matching strip right edge
\node[draw=none, fill=none, inner sep=0pt,
      minimum width=1.59cm, minimum height=0.30cm]
  (xtC) at (1.5, 3.85) {};
\node[font=\scriptsize\bfseries, text=appleBlue] at (0.50, 3.85) {$\mathbf{x}_j$};

% --- V-window: 4 green mini-cells, same gradient ---
\foreach \i/\op in {0/22, 1/42, 2/65, 3/95}{%
  \pgfmathsetmacro{\xL}{\stripL + \i*(\miniW+\miniG)}%
  \fill[appleGreen!\op, rounded corners=0.6pt]
    (\xL, 3.15) rectangle (\xL+\miniW, 3.45);}
\draw[appleGreen!55!black, line width=0.7pt, rounded corners=0.6pt]
  (\xLastC, 3.15) rectangle (\xLastC+\miniW, 3.45);
\node[draw=none, fill=none, inner sep=0pt,
      minimum width=1.59cm, minimum height=0.30cm]
  (ytC) at (1.5, 3.30) {};
\node[font=\scriptsize\bfseries, text=appleGreen!55!black] at (0.50, 3.30) {$\mathbf{v}_j$};

% q_t pill (single, regular)
\node[pill={applePurple}{1.85cm}] (qtC) at (1.5, 2.75) {$\mathbf{q}_t$};

% S_{t-1} grid
\node[card, minimum width=2.7cm, minimum height=3.3cm] (stateCprev) at (\matAX, \matAY) {};
\node[font=\footnotesize\bfseries, text=appleDark] at (\matAX, \matAY+1.35) {$\mathbf{S}_{t-1}$};
\node[font=\tiny, text=appleGray] at (\matAX, \matAY+1.10) {$d_x\!\times\!d_v$};
\foreach \r in {0,...,5}{\foreach \c in {0,...,7}{%
  \pgfmathsetmacro{\cx}{\gridAleft + \c*\cw}\pgfmathsetmacro{\cy}{\gridAbot + \r*\rh}%
  \fill[appleLightBlue, rounded corners=0.6pt]
    (\cx+0.025, \cy+0.025) rectangle (\cx+\cw-0.025, \cy+\rh-0.025);}}

% scalar \eta_t
\node[circle, draw=appleOrange, fill=appleLightOrange, line width=0.7pt,
      minimum size=0.95cm, font=\footnotesize\bfseries, text=appleOrange!40!black]
    (etaC) at (8.0, 5.30) {$\eta_t$};

% Update formula C (windowed mini-batch)
\node[card, minimum width=3.1cm, minimum height=1.7cm] (formulaC) at (8.0, 3.3) {};
\node[font=\footnotesize\bfseries, text=appleDark] at ($(formulaC.north)+(0,-0.25)$) {Update};
\node[font=\scriptsize, text=appleDarkGray, align=center] at ($(formulaC.center)+(0,-0.10)$)
    {$\mathbf{S}_t\!=\!(1{-}\eta_t\lambda_t)\mathbf{S}_{t-1}$\\[1pt]
     $+\,\dfrac{\eta_t}{B_t}\!\sum\limits_{j\in\mathcal{I}_t}\!\mathbf{x}_j\mathbf{r}_{j,t}^{\!\top}$};

% S_t (uniform orange columns, mini-batch)
\node[card, minimum width=2.7cm, minimum height=3.3cm] (stateCcur) at (\matBX, \matAY) {};
\node[font=\footnotesize\bfseries, text=appleDark] at (\matBX, \matAY+1.35) {$\mathbf{S}_t$};
\node[font=\tiny, text=appleGray] at (\matBX, \matAY+1.10) {windowed mini-batch step};
\foreach \r in {0,...,5}{\foreach \c in {0,...,7}{%
  \pgfmathsetmacro{\cx}{\gridBleft + \c*\cw}\pgfmathsetmacro{\cy}{\gridAbot + \r*\rh}%
  \fill[appleOrange!55, rounded corners=0.6pt]
    (\cx+0.025, \cy+0.025) rectangle (\cx+\cw-0.025, \cy+\rh-0.025);}}

% Per-column \eta bar (uniform, scalar)
\foreach \c in {0,...,7}{\pgfmathsetmacro{\cx}{\gridBleft + \c*\cw}%
  \fill[appleOrange, rounded corners=0.6pt]
    (\cx+0.03, \barAY) rectangle (\cx+\cw-0.03, \barAY+0.20);}
\node[font=\tiny, text=appleOrange!40!black] at (\matBX, \barAY-0.17)
    {$\eta_t$ per col (uniform)};

% Output C
\node[card, minimum width=1.9cm, minimum height=1.4cm] (outputC) at (13.95, \matAY) {};
\node[font=\footnotesize\bfseries, text=appleDark] at ($(outputC.north)+(0,-0.25)$) {Output};
\node[pill={applePurple}{1.6cm}] at ($(outputC.center)+(0,-0.05)$)
    {$\mathbf{S}_t^{\!\top}\mathbf{q}_t$};

% Arrows C
\draw[arr]    (xtC.east) -- ++(0.20,0) |- (formulaC.west |- xtC);
\draw[arr]    (ytC.east) -- ++(0.20,0) |- (formulaC.west |- ytC);
\draw[arr]    (stateCprev.east) -- (formulaC.west);
\draw[arrW]   (formulaC.east)   -- (stateCcur.west);
\draw[arrR]   (stateCcur.east)  -- (outputC.west);
\draw[arrEta] (etaC.south)      -- (formulaC.north);
\draw[arrR]   (qtC.south) -- ++(0,-1.20) -| (outputC.south);

\end{scope}

\end{tikzpicture}}
\caption{\textbf{Falcon-1 vs.\ Falcon-2 vs.\ Falcon-3.}
Each panel shows one fast-weight update step on the fixed-size matrix state
$\mathbf{S}_{t-1}\!\in\!\mathbb{R}^{d_x\times d_v}$. We write
$\mathbf{q}_t:=\phi(\qb_t)$ for the query feature. In panels (A)--(B),
$\mathbf{r}_t:=\mathbf{v}_t-\mathbf{S}_{t-1}^{\!\top}\mathbf{x}_t$.
\textbf{(A)~Falcon-1} applies one scalar $\eta_t$ to all $d_v$ value channels;
every column of $\mathbf{S}_t$ receives the same plasticity.
\textbf{(B)~Falcon-2} promotes the step size to a vector
$\boldsymbol{\eta}_t\!\in\!\mathbb{R}^{d_v}$ shown as the orange strip above the
update card: each column of $\mathbf{S}_t$ is updated with its own gain
$\eta_{j,t}$, while the write feature $\mathbf{x}_t$ and residual
$\mathbf{r}_t$ remain shared.
\textbf{(C)~Falcon-3} uses a scalar $\eta_t$ and a sliding window
$\mathcal{I}_t$ of $B$ recent causal pairs, with window residuals
$\mathbf{r}_{j,t}:=\mathbf{v}_j-\mathbf{S}_{t-1}^{\!\top}\mathbf{x}_j$;
the blue/green strips show four
$\{\mathbf{x}_j,\mathbf{v}_j\}_{j\in\mathcal{I}_t}$ entries. The read
$\mathbf{o}_t=\mathbf{S}_t^{\!\top}\mathbf{q}_t$ uses the updated state in all three cases.}
\label{fig:falcon123-overview}
\end{figure}

%%%%%%%%%%%%%%%%%%%%%%%%%%%%%%
\section{Introduction}

Transformers \citep{vaswani2017attention} dominate modern language modeling because self-attention effectively captures global dependencies. Their main limitation is cost: standard attention scales quadratically, $\mathcal{O}(N^2)$, in sequence length $N$. For long contexts, both the attention matrix and the memory traffic of the key-value (KV) cache become major bottlenecks.

Beyond efficiency, long-context modeling is also a continual-learning problem: the model must bind new evidence online without catastrophic interference. Transformers externalize this fast memory as a growing KV cache, whereas SSMs and fast-weight models compress it into a fixed-size recurrent state. In such architectures, the state-update rule acts as a local learning rule, and its temporal alignment determines whether the fast memory is trained on information that was actually available at prediction time~\citep{sun2024learning, liu2024longhorn, behrouz2024titans, wang2025test}.

Linear Attention~\citep{katharopoulos2020transformers}, Fast Weight Programmers and Delta Networks~\citep{schlag2021linear}, RWKV~\citep{peng2023rwkv}, and Mamba~\citep{gu2023mamba} show that recurrent alternatives can be competitive while maintaining $\mathcal{O}(N)$ training and $\mathcal{O}(1)$ per-step inference. Their state-update equations are often presented at the architectural level, leaving the local objective and temporal alignment implicit.

Once read-after-write semantics are fixed, there is a convention mismatch: many recurrences bind the same-step pair $(\phi(\mathbf{k}_t),\mathbf{v}_t)$ (or $(\mathbf{k}_t,\mathbf{v}_t)$ when $\phi$ is the identity), whereas the prefix-prediction objective studied here pairs the newly revealed target with the prefix feature that was available when it was predicted, namely $(\phi(\mathbf{k}_{t-1}),\mathbf{v}_t)$, or equivalently $(\phi(\mathbf{k}_i),\mathbf{v}_{i+1})$ under standard indexing. The same-step pairing is still causal, but it optimizes a different internal fast-memory objective.

We therefore recast state-based sequence modeling as \textit{autoregressive next-latent prediction}. The recurrent state $\Sbb_t$ acts as a fast linear predictor from the prefix write feature $\mathbf{x}_t:=\phi(\mathbf{k}_{t-1})$ to the newly revealed target $\vb_t$. Building on Titans and ATLAS-style internal-memory views~\citep{behrouz2024titans, behrouz2025atlas}, we make this alignment explicit and derive normalized first-order updates that remain compatible with SSD-style chunk-parallel training~\citep{dao2024transformers}. Our contributions are as follows:

\begin{itemize}
\item We identify the fast-memory training pair induced by read-after-write autoregressive modeling and separate it from the common same-step association.
\item We derive \Falcon-1, \Falcon-2, and \Falcon-3 as normalized first-order regression writes. Scalar or per-column gains $\beta$ control plasticity, $\lambda_t$ controls shrinkage, and the realized step sizes $\eta$ are matched to the appropriate local smoothness scale.
\item We derive inner-product counterparts, \Falcon-1A, \Falcon-2A, and \Falcon-3A, using the same plasticity/forgetting semantics but interpreting the denominator as write-magnitude normalization rather than curvature normalization. Here \Falcon-2A is the per-column inner-product analogue of \Falcon-2.
\item We give chunk-parallel implementations for the resulting recurrences and evaluate representative regression and inner-product variants in language modeling and arithmetic extrapolation.
\end{itemize}

\section{Background}

\subsection{State Space Models}
\label{sec:background_ssm}

\noindent
State-space representations have a long history in classical control, filtering, realization theory, and system identification.
Twentieth-century foundations include Kalman filtering, state-variable realization, optimal filtering, and linear-systems theory~\citep{kalman1960new,kalman1961new,ho1966effective, kung1978new, kung1981state}.
Modern neural State Space Models (SSMs)~\citep{gu2023mamba,dao2024transformers} process a sequence of inputs $x(t) \in \mathbb{R}^{d_{\text{in}}}$ through a compressed latent state $\hb(t) \in \mathbb{R}^{n}$, producing outputs $y(t)\in\mathbb{R}^{d_{\text{out}}}$.
A continuous-time state-space representation has the form
\[
\dot{\hb}(t)
= \Ab(t)\hb(t) + \Bb(t)x(t),
\qquad
y(t) = \Cb(t)^\top \hb(t),
\]
where $\Ab(t)\in\mathbb{R}^{n\times n}$, $\Bb(t)\in\mathbb{R}^{n\times d_{\text{in}}}$, and $\Cb(t)\in\mathbb{R}^{n\times d_{\text{out}}}$. Conditional on these coefficients, the dynamics are linear in the hidden state; the coefficients themselves may be fixed, time varying, or data dependent.

Modern selective SSMs, such as Mamba~\citep{gu2023mamba}, parametrize $(\Bb, \Cb, \Delta)$ as functions of the current input $x_t$, allowing for content-aware filtering. Mamba-2~\citep{dao2024transformers} further simplifies the transition matrix $\Ab$ to a scalar or diagonal structure, establishing a theoretical bridge known as Structured State Space Duality (SSD). This duality demonstrates that the recurrent SSM is equivalent to a specific form of causal linear attention, enabling efficient training via chunk-parallel matrix multiplications while maintaining constant-state inference.

\subsection{Linear Attention}
\label{sec:background_linattn}
\noindent
Linear Attention~\citep{katharopoulos2020transformers} circumvents the $\mathcal{O}(N^2)$ complexity of standard attention by replacing the softmax with a kernel feature map $\phi(\cdot): \mathbb{R}^d \to \mathbb{R}^{m}$ such that $\kappa(\qb_t,\mathbf{k}_j) = \phi(\qb_t)^\top \phi(\mathbf{k}_j)$. Exploiting the associativity of matrix multiplication, the output $\ob_t \in \mathbb{R}^{d_v}$ for the $t$-th token is:
\begin{equation}
\ob_t = \frac{\sum_{j=1}^t \phi(\qb_t)^\top \phi(\mathbf{k}_j)\vb_j}{\sum_{j=1}^t \phi(\qb_t)^\top \phi(\mathbf{k}_j)}
= \frac{\left(\sum_{j=1}^t \phi(\mathbf{k}_j)\vb_j^\top\right)^\top\phi(\qb_t)}{\phi(\qb_t)^\top \sum_{j=1}^t \phi(\mathbf{k}_j)},
\label{eq:lin_attn}
\end{equation}
This formulation allows the context to be compressed into a recurrent matrix state $\Sbb_t \in \mathbb{R}^{m \times d_v}$ and a normalizer $\zb_t \in \mathbb{R}^m$:
\begin{equation}
\ob_t = \frac{\Sbb_t^\top \phi(\qb_t)}{\zb_t^\top \phi(\qb_t)+\varepsilon_{\rm attn}},
\qquad
\Sbb_t = \Sbb_{t-1} + \phi(\mathbf{k}_t)\vb_t^\top, \quad \zb_t = \zb_{t-1} + \phi(\mathbf{k}_t).
\label{eq:lin_attn_rec}
\end{equation}
\noindent
For a fresh sequence, Eq.~\eqref{eq:lin_attn_rec} is exactly equivalent to Eq.~\eqref{eq:lin_attn} when $\Sbb_0=\mathbf{0}$, $\zb_0=\mathbf{0}$, $\varepsilon_{\rm attn}=0$, and the denominator is nonzero. With $\varepsilon_{\rm attn}>0$, it is the usual stabilized positive-feature variant. The normalized form is attention-like only when the read normalizer is nonnegative; signed-feature caveats are deferred to Appendix~\ref{app:normalized_linattn_caveat}.

\paragraph{Causality and indexing.}
Eq.~\eqref{eq:lin_attn_rec} follows the standard Transformer convention: at position $t$ we read from the updated state $(\Sbb_t,\zb_t)$, and the resulting representation is used to predict token $t{+}1$.
Our next-latent alignment shifts the write stream by one: after observing $\vb_t$, we write it under the previous write feature $\phi(\mathbf{k}_{t-1})$, or equivalently, $(\phi(\mathbf{k}_i),\vb_{i+1})$ under standard indexing with $i=t-1$.
This yields the read-after-write recurrence
\begin{equation}
\ob_t = \frac{\Sbb_{t}^\top \phi(\qb_t)}{\zb_{t}^\top \phi(\qb_t)+\varepsilon_{\rm attn}},
\qquad
\Sbb_t = \Sbb_{t-1} + \phi(\mathbf{k}_{t-1})\vb_t^\top,\quad
\zb_t = \zb_{t-1} + \phi(\mathbf{k}_{t-1}),
\label{eq:lin_attn_rec_shifted}
\end{equation}
where $\varepsilon_{\rm attn}\ge 0$ is a small stabilizer.
Defining the shifted write-feature stream
\[
\tilde{\mathbf{x}}_1:=\mathbf{0},
\qquad
\tilde{\mathbf{x}}_t:=\phi(\mathbf{k}_{t-1}) \quad \text{for } t\ge 2,
\]
Eq.~\eqref{eq:lin_attn_rec_shifted} is exactly Eq.~\eqref{eq:lin_attn_rec} with the standard write-feature stream $\{\phi(\mathbf{k}_t)\}_{t=1}^{T}$ replaced by $\{\tilde{\mathbf{x}}_t\}_{t=1}^{T}$ when $\varepsilon_{\rm attn}=0$; for $\varepsilon_{\rm attn}>0$, it is the corresponding stabilized shifted variant. The boundary condition is therefore imposed in feature space rather than raw-key space.

\paragraph{Normalized vs.\ unnormalized linear attention.}
The denominator in Eq.~\eqref{eq:lin_attn} uses the normalizer state $\zb_t$ to rescale the readout.
Many SSM/SSD-style architectures instead drop the denominator (and $\zb_t$) and use an unnormalized inner-product read:
\begin{equation}
\label{eq:lin_attn_unnorm}
\ob_t = \Sbb_t^\top \phi(\qb_t),
\qquad
\Sbb_t = (1-\eta_t\lambda_t)\Sbb_{t-1} + \eta_t\,\phi(\mathbf{k}_{t-1})\vb_t^\top,
\end{equation}
with the unshifted convention recovered by replacing $\mathbf{k}_{t-1}$ with $\mathbf{k}_t$.
In this denominator-free form, state magnitude and the effective memory timescale are controlled by explicit decay (e.g., $\lambda_t>0$) and/or gain control. Section~\ref{sec:inner_product_loss} shows that the numerator-state update in Eq.~\eqref{eq:lin_attn_unnorm} is exactly gradient descent on an inner-product objective. The auxiliary normalizer $\zb_t$ in the normalized variants is a separate bookkeeping state for the read denominator; it is not itself obtained from that objective.

Accordingly, Eq.~\eqref{eq:lin_attn_unnorm} is the gradient-descent update for the inner-product objective in Section~\ref{sec:inner_product_loss}. When $\lambda_t=0$, the write is purely additive rank-one Hebbian learning; when $\lambda_t>0$, it is additive plus scalar shrinkage. As noted in Section~\ref{sec:background_ssm}, Mamba-2 proves that the no-normalizer recurrence is functionally equivalent to a specific class of SSMs under the SSD framework.

% =========================================================
% FIGURE: Equivalence between recurrent and parallel forms
% of Linear Attention (Apple-style, matches Fig. 1 palette)
% Requires the same preamble as the main paper:
%   - colors: appleBlue, appleLightBlue, appleRed, appleLightRed,
%     appleGreen, appleLightGreen, appleGray, appleLightGray,
%     appleDarkGray, appleDark
%   - tikzlibrary: positioning, calc, shapes.geometric,
%     arrows.meta, backgrounds, fit, matrix, shadows.blur
% =========================================================
\begin{figure}[ht!]
\centering
\resizebox{0.72\linewidth}{!}{%
\begin{tikzpicture}[
font=,
>=Latex,
% --- Local styles (mirroring Figure 1) ---
base_node/.style={
  thick,
  rounded corners=8pt,
  blur shadow={shadow blur steps=5, shadow opacity=15}
},
tensor/.style={
  base_node,
  draw=appleBlue,
  fill=appleLightBlue,
  minimum height=2.4em,
  minimum width=2.4em,
  font=\bfseries
},
value_tensor/.style={
  base_node,
  draw=appleGreen,
  fill=appleLightGreen,
  minimum height=2.4em,
  minimum width=2.4em,
  font=\bfseries
},
query_tensor/.style={
  base_node,
  draw=appleRed,
  fill=appleLightRed,
  minimum height=2.4em,
  minimum width=2.4em,
  font=\bfseries
},
state/.style={
  base_node,
  draw=appleGray,
  fill=appleLightGray,
  rounded corners=10pt,
  minimum height=3.4em,
  minimum width=3.4em,
  align=center,
  font=\bfseries
},
op/.style={
  circle,
  fill=appleLightGray,
  draw=appleGray,
  inner sep=2pt,
  thick,
  font=\bfseries\small
},
arrow/.style={
  ->,
  thick,
  color=appleDarkGray,
  line width=1.2pt,
  rounded corners=4pt
},
update_arrow/.style={
  arrow,
  color=appleGreen!65!black
},
read_arrow/.style={
  arrow,
  color=appleRed
},
section_label/.style={
  font=\bfseries\Large,
  anchor=north west,
  color=appleDark
},
equiv_badge/.style={
  base_node,
  draw=appleDark,
  fill=white,
  rounded corners=14pt,
  inner sep=8pt,
  minimum width=1.6cm,
  minimum height=1.6cm,
  font=\bfseries\Huge,
  text=appleDark
},
eqbox/.style={
  base_node,
  fill=white,
  draw=appleGray!50,
  inner sep=9pt
}
]

% =========================================================
% PANEL A: Recurrent Form (Top)
% =========================================================

\begin{scope}[local bounding box=panelA]

% Token-time arrow (timeline) at the top
\node[font=\bfseries\small, text=appleGray] (timeLabelA) at (-1.6,2.7) {time $\rightarrow$};

% Chain of state nodes
\node[state] (S0) at (0,0) {$\mathbf{S}_0$};
\node[state, right=1.5cm of S0] (S1) {$\mathbf{S}_1$};
\node[state, right=1.5cm of S1] (S2) {$\mathbf{S}_2$};
\node[font=\bfseries, right=0.6cm of S2] (sdots) {$\cdots$};
\node[state, right=0.6cm of sdots] (St) {$\mathbf{S}_t$};

% Token inputs above each state transition
\node[tensor, scale=0.85, above=1.0cm of S1, xshift=-7pt] (k1) {$\mathbf{k}_1$};
\node[value_tensor, scale=0.85, right=0.12cm of k1] (v1) {$\mathbf{v}_1$};

\node[tensor, scale=0.85, above=1.0cm of S2, xshift=-7pt] (k2) {$\mathbf{k}_2$};
\node[value_tensor, scale=0.85, right=0.12cm of k2] (v2) {$\mathbf{v}_2$};

\node[tensor, scale=0.85, above=1.0cm of St, xshift=-7pt] (kt) {$\mathbf{k}_t$};
\node[value_tensor, scale=0.85, right=0.12cm of kt] (vt) {$\mathbf{v}_t$};

% State-to-state arrows
\draw[arrow] (S0) -- (S1);
\draw[arrow] (S1) -- (S2);
\draw[arrow] (S2) -- (sdots);
\draw[arrow] (sdots) -- (St);

% Write-in arrows (outer product injection)
\draw[update_arrow] ($(k1.south)!0.5!(v1.south)$) -- (S1.north);
\draw[update_arrow] ($(k2.south)!0.5!(v2.south)$) -- (S2.north);
\draw[update_arrow] ($(kt.south)!0.5!(vt.south)$) -- (St.north);

% Write label (small legend, placed below the chain to avoid overlap)
\node[font=\bfseries\scriptsize, text=appleGreen!50!black]
  at ($(S0)+(0.1,-0.95)$) {write $\phi(\mathbf{k}_s)\mathbf{v}_s^{\!\top}$};

% Readout at step t
\node[query_tensor, scale=0.9, below=1.0cm of St] (qt) {$\mathbf{q}_t$};
\node[base_node, dashed, draw=appleGray, fill=white,
      minimum width=1.7cm, minimum height=2.2em,
      right=1.3cm of St] (ytA) {$\mathbf{o}_t$};

\draw[read_arrow] (qt.north) -- (St.south);
\draw[arrow, color=appleRed!85!black] (St.east) -- (ytA.west);
\node[font=\bfseries\scriptsize, text=appleRed,
      above=0.04cm of ytA, anchor=south]
  {read $\mathbf{S}_t^{\!\top}\phi(\mathbf{q}_t)$};

% Equation for the recurrent form (left of complexity tag)
\node[eqbox, below=2.0cm of S2, align=center, xshift=-0.6cm] (eqA) {%
$\displaystyle
\begin{aligned}
\mathbf{S}_t &= \mathbf{S}_{t-1} + \phi(\mathbf{k}_t)\mathbf{v}_t^{\!\top}\\
\mathbf{o}_t &= \mathbf{S}_t^{\!\top} \phi(\mathbf{q}_t)
\end{aligned}$%
};

\node[base_node, fill=appleLightBlue!60, draw=appleBlue!50,
      inner sep=6pt, font=\bfseries\scriptsize, text=appleBlue,
      right=0.55cm of eqA, align=center] (complA)
  {$\mathcal{O}(N)$ time\\$\mathcal{O}(1)$ state at inference};

\end{scope}

% =========================================================
% CENTER: Equivalence strip
% =========================================================

% Common x for vertical alignment of both equivalence badges.
% Both equiv and equiv2 use this same x; only their y differs.
\coordinate (xRefEquiv) at ($(panelA.south)+(-3.5,0)$);

\coordinate (yPosAB) at ($(panelA.south)+(0,-1.7)$);
\node[equiv_badge] (equiv) at (xRefEquiv |- yPosAB) {$\equiv$};

\node[eqbox, draw=appleDark!40, fill=appleLightGray!50,
      right=0.45cm of equiv, inner sep=8pt,
      font=\small, text=appleDark, align=center] (derivBox)
{\textbf{exact equality for }$\mathbf{S}_0=\mathbf{0}$:\;
$\displaystyle
\mathbf{S}_t = \!\!\sum_{j\le t}\!\phi(\mathbf{k}_j)\mathbf{v}_j^{\!\top}
\;\Longrightarrow\;
\mathbf{o}_t = \!\!\sum_{j\le t}\!\bigl\langle\phi(\mathbf{q}_t),\phi(\mathbf{k}_j)\bigr\rangle\mathbf{v}_j$%
};

% =========================================================
% PANEL B: Parallel Form (Bottom) - horizontal pipeline
% =========================================================

\begin{scope}[shift={(0,-10.2)}, local bounding box=panelB]

% Q matrix (tall)
\node[base_node, draw=appleRed, fill=appleLightRed,
      minimum width=1.0cm, minimum height=2.4cm,
      align=center, font=\bfseries] (Qmat) at (0,0)
  {$\phi(\mathbf{Q})$};
\node[font=\scriptsize, text=appleRed, above=0.05cm of Qmat]
  {$N\!\times\!d$};

\node[op, right=0.55cm of Qmat] (mul1) {$\times$};

% K^T matrix (wide)
\node[base_node, draw=appleBlue, fill=appleLightBlue,
      minimum width=2.4cm, minimum height=1.0cm,
      align=center, font=\bfseries, right=0.55cm of mul1] (KTmat)
  {$\phi(\mathbf{K})^{\!\top}$};
\node[font=\scriptsize, text=appleBlue, above=0.05cm of KTmat]
  {$d\!\times\!N$};

% = operator
\node[font=\bfseries\Large, text=appleDark, right=0.65cm of KTmat] (eq1) {$=$};

% Score matrix QK^T with causal mask visualization
\node[base_node, draw=appleGray, fill=appleLightGray,
      minimum width=2.4cm, minimum height=2.4cm,
      align=center, font=\bfseries, right=0.65cm of eq1] (Scores)
  {};
% Lower-triangular causal pattern (kept content)
\begin{scope}
  \clip[rounded corners=8pt] ([xshift=2pt,yshift=-2pt]Scores.north west)
        rectangle ([xshift=-2pt,yshift=2pt]Scores.south east);
  \fill[appleBlue!18]
    ([xshift=2pt,yshift=-2pt]Scores.north west) --
    ([xshift=-2pt,yshift=2pt]Scores.south east) --
    ([xshift=2pt,yshift=2pt]Scores.south west) -- cycle;
\end{scope}
% Diagonal hatch lines for the masked-out upper triangle
\foreach \i in {0.2,0.5,0.8,1.1,1.4,1.7,2.0} {
  \draw[appleRed!22, line width=0.6pt]
    ([xshift=\i cm,yshift=-2pt]Scores.north west) --
    ([xshift=-2pt,yshift=-\i cm]Scores.north east);
}
\node[font=\bfseries\footnotesize, text=appleDark, align=center]
  at (Scores.center) {$\phi(\mathbf{Q})\phi(\mathbf{K})^{\!\top}$\\$\odot\,\mathbf{M}$};
\node[font=\scriptsize, text=appleGray, above=0.05cm of Scores]
  {$N\!\times\!N$ (causal)};

\node[op, right=0.55cm of Scores] (mul2) {$\times$};

% V matrix
\node[base_node, draw=appleGreen, fill=appleLightGreen,
      minimum width=1.2cm, minimum height=2.4cm,
      align=center, font=\bfseries, right=0.55cm of mul2] (Vmat)
  {$\mathbf{V}$};
\node[font=\scriptsize, text=appleGreen!55!black, above=0.05cm of Vmat]
  {$N\!\times\!d_v$};

\node[font=\bfseries\Large, text=appleDark, right=0.65cm of Vmat] (eq2) {$=$};

% Output O
\node[base_node, draw=appleDark, fill=white,
      minimum width=1.2cm, minimum height=2.4cm,
      align=center, font=\bfseries, right=0.65cm of eq2] (Out)
  {$\mathbf{O}$};
\node[font=\scriptsize, text=appleDark, above=0.05cm of Out]
  {$N\!\times\!d_v$};

% Pipeline arrows
\draw[arrow, color=appleRed!75!black] (Qmat.east) -- (mul1.west);
\draw[arrow, color=appleBlue] (KTmat.west) -- (mul1.east);
\draw[arrow] (KTmat.east) -- (eq1.west);
\draw[arrow] (eq1.east) -- (Scores.west);
\draw[arrow] (Scores.east) -- (mul2.west);
\draw[arrow, color=appleGreen!55!black] (Vmat.west) -- (mul2.east);
\draw[arrow] (Vmat.east) -- (eq2.west);
\draw[arrow] (eq2.east) -- (Out.west);

% Equation under the pipeline
\node[eqbox, below=1.6cm of Scores, align=center, xshift=-1.8cm] (eqB) {%
$\displaystyle
\mathbf{O} \;=\; \bigl(\phi(\mathbf{Q})\phi(\mathbf{K})^{\!\top}\odot \mathbf{M}\bigr)\,\mathbf{V}
\qquad\text{equivalently}\qquad
\mathbf{o}_t \;=\; \sum_{j\le t}\bigl\langle \phi(\mathbf{q}_t),\phi(\mathbf{k}_j)\bigr\rangle\,\mathbf{v}_j$%
};

\node[base_node, fill=appleLightRed!55, draw=appleRed!50,
      inner sep=6pt, font=\bfseries\scriptsize, text=appleRed,
      right=0.55cm of eqB, align=center] (complB)
  {$\mathcal{O}(N^2)$ time\\fully parallel over $t$};

\end{scope}

% =========================================================
% CENTER: Second equivalence strip (between B and C)
% =========================================================

\coordinate (yPosBC) at ($(panelB.south)+(0,-1.7)$);
\node[equiv_badge] (equiv2) at (xRefEquiv |- yPosBC) {$\equiv$};

\node[eqbox, draw=appleDark!40, fill=appleLightGray!50,
      right=0.45cm of equiv2, inner sep=9pt,
      font=\small, text=appleDark, align=center] (derivBox2)
{\textbf{regroup tokens into }$M$\textbf{ chunks of size }$C$, $N=MC$:\\[3pt]
$\displaystyle
\mathbf{O}^{(k)}=\phi(\mathbf{Q}^{(k)})\,\mathbf{S}^{(k-1)}
\;+\;
\bigl(\phi(\mathbf{Q}^{(k)})\phi(\mathbf{K}^{(k)})^{\!\top}\!\odot\mathbf{M}_C\bigr)\mathbf{V}^{(k)}$%
};

% =========================================================
% PANEL C: Chunk-wise Parallel Form (Bottom)
% =========================================================

\begin{scope}[shift={(0,-19.5)}, local bounding box=panelC,
  chunk_block/.style={
    base_node,
    draw=appleGray!60,
    fill=appleLightGray!45,
    minimum width=2.45cm,
    minimum height=1.95cm,
    rounded corners=10pt
  }
]

% Chunk-state chain across the panel:
%   S^(0) → [Chunk 1] → S^(1) → [Chunk 2] → S^(2) → ⋯ → [Chunk M] → S^(M)

% Chunk state nodes (we use \scriptsize state-style)
\node[state, minimum width=2.4em, minimum height=2.4em] (Sc0) at (0,0) {$\mathbf{S}^{(0)}$};

% Chunk 1
\node[chunk_block, right=0.7cm of Sc0] (Ck1) {};
% Internal contents of Chunk 1
\node[font=\bfseries\footnotesize, text=appleDark]
  at ([yshift=0.6cm]Ck1.center) {Chunk 1};
% Mini Q, K^T, V tensors stacked horizontally inside
\node[base_node, draw=appleRed, fill=appleLightRed,
      minimum width=0.35cm, minimum height=0.65cm,
      font=\bfseries\scriptsize, text=appleRed]
  at ([xshift=-0.75cm, yshift=-0.1cm]Ck1.center) (Q1m) {};
\node[font=\bfseries\scriptsize, text=appleRed,
      below=0.02cm of Q1m] {$\mathbf{Q}_{\!1}$};
\node[base_node, draw=appleBlue, fill=appleLightBlue,
      minimum width=0.65cm, minimum height=0.35cm]
  at ([xshift=0.00cm, yshift=-0.1cm]Ck1.center) (K1m) {};
\node[font=\bfseries\scriptsize, text=appleBlue,
      below=0.02cm of K1m] {$\mathbf{K}_{\!1}^{\!\top}$};
\node[base_node, draw=appleGreen, fill=appleLightGreen,
      minimum width=0.35cm, minimum height=0.65cm]
  at ([xshift=0.75cm, yshift=-0.1cm]Ck1.center) (V1m) {};
\node[font=\bfseries\scriptsize, text=appleGreen!55!black,
      below=0.02cm of V1m] {$\mathbf{V}_{\!1}$};

\node[state, minimum width=2.4em, minimum height=2.4em, right=0.7cm of Ck1] (Sc1) {$\mathbf{S}^{(1)}$};

% Chunk 2
\node[chunk_block, right=0.7cm of Sc1] (Ck2) {};
\node[font=\bfseries\footnotesize, text=appleDark]
  at ([yshift=0.6cm]Ck2.center) {Chunk 2};
\node[base_node, draw=appleRed, fill=appleLightRed,
      minimum width=0.35cm, minimum height=0.65cm]
  at ([xshift=-0.75cm, yshift=-0.1cm]Ck2.center) (Q2m) {};
\node[font=\bfseries\scriptsize, text=appleRed,
      below=0.02cm of Q2m] {$\mathbf{Q}_{\!2}$};
\node[base_node, draw=appleBlue, fill=appleLightBlue,
      minimum width=0.65cm, minimum height=0.35cm]
  at ([xshift=0.00cm, yshift=-0.1cm]Ck2.center) (K2m) {};
\node[font=\bfseries\scriptsize, text=appleBlue,
      below=0.02cm of K2m] {$\mathbf{K}_{\!2}^{\!\top}$};
\node[base_node, draw=appleGreen, fill=appleLightGreen,
      minimum width=0.35cm, minimum height=0.65cm]
  at ([xshift=0.75cm, yshift=-0.1cm]Ck2.center) (V2m) {};
\node[font=\bfseries\scriptsize, text=appleGreen!55!black,
      below=0.02cm of V2m] {$\mathbf{V}_{\!2}$};

\node[state, minimum width=2.4em, minimum height=2.4em, right=0.7cm of Ck2] (Sc2) {$\mathbf{S}^{(2)}$};

% Ellipsis
\node[font=\bfseries, right=0.45cm of Sc2] (cdotsC) {$\cdots$};

% Chunk M
\node[chunk_block, right=0.45cm of cdotsC] (CkM) {};
\node[font=\bfseries\footnotesize, text=appleDark]
  at ([yshift=0.6cm]CkM.center) {Chunk $M$};
\node[base_node, draw=appleRed, fill=appleLightRed,
      minimum width=0.35cm, minimum height=0.65cm]
  at ([xshift=-0.75cm, yshift=-0.1cm]CkM.center) (QMm) {};
\node[font=\bfseries\scriptsize, text=appleRed,
      below=0.02cm of QMm] {$\mathbf{Q}_{\!M}$};
\node[base_node, draw=appleBlue, fill=appleLightBlue,
      minimum width=0.65cm, minimum height=0.35cm]
  at ([xshift=0.00cm, yshift=-0.1cm]CkM.center) (KMm) {};
\node[font=\bfseries\scriptsize, text=appleBlue,
      below=0.02cm of KMm] {$\mathbf{K}_{\!M}^{\!\top}$};
\node[base_node, draw=appleGreen, fill=appleLightGreen,
      minimum width=0.35cm, minimum height=0.65cm]
  at ([xshift=0.75cm, yshift=-0.1cm]CkM.center) (VMm) {};
\node[font=\bfseries\scriptsize, text=appleGreen!55!black,
      below=0.02cm of VMm] {$\mathbf{V}_{\!M}$};

\node[state, minimum width=2.4em, minimum height=2.4em, right=0.7cm of CkM] (ScM) {$\mathbf{S}^{(M)}$};

% State propagation arrows (Mamba-2/SSD-style chunk recurrence)
\draw[arrow, update_arrow] (Sc0) -- (Ck1);
\draw[arrow, update_arrow] (Ck1) -- (Sc1);
\draw[arrow, update_arrow] (Sc1) -- (Ck2);
\draw[arrow, update_arrow] (Ck2) -- (Sc2);
\draw[arrow, update_arrow] (Sc2) -- (cdotsC);
\draw[arrow, update_arrow] (cdotsC) -- (CkM);
\draw[arrow, update_arrow] (CkM) -- (ScM);

% Output drops below each chunk
\node[base_node, dashed, draw=appleGray, fill=white,
      minimum width=1.2cm, minimum height=1.9em,
      below=0.9cm of Ck1, font=\bfseries] (O1c) {$\mathbf{O}^{(1)}$};
\node[base_node, dashed, draw=appleGray, fill=white,
      minimum width=1.2cm, minimum height=1.9em,
      below=0.9cm of Ck2, font=\bfseries] (O2c) {$\mathbf{O}^{(2)}$};
\node[base_node, dashed, draw=appleGray, fill=white,
      minimum width=1.2cm, minimum height=1.9em,
      below=0.9cm of CkM, font=\bfseries] (OMc) {$\mathbf{O}^{(M)}$};

\draw[arrow, color=appleRed!85!black] (Ck1.south) -- (O1c.north);
\draw[arrow, color=appleRed!85!black] (Ck2.south) -- (O2c.north);
\draw[arrow, color=appleRed!85!black] (CkM.south) -- (OMc.north);

% Equation under the panel
\node[eqbox, below=2.7cm of Sc2, align=center, xshift=-3cm] (eqC) {%
$\displaystyle
\begin{aligned}
\mathbf{O}^{(k)} &= \underbrace{\phi(\mathbf{Q}^{(k)})\,\mathbf{S}^{(k-1)}}_{\text{cross-chunk (history)}}
\;+\;
\underbrace{\bigl(\phi(\mathbf{Q}^{(k)})\phi(\mathbf{K}^{(k)})^{\!\top}\!\odot\mathbf{M}_C\bigr)\mathbf{V}^{(k)}}_{\text{intra-chunk (parallel)}}\\[2pt]
\mathbf{S}^{(k)} &= \mathbf{S}^{(k-1)} + \phi(\mathbf{K}^{(k)})^{\!\top}\mathbf{V}^{(k)}
\end{aligned}$%
};

\node[base_node, fill=appleLightGreen!55, draw=appleGreen!55!black,
      inner sep=6pt, font=\bfseries\scriptsize, text=appleGreen!45!black,
      right=0.55cm of eqC, align=center] (complC)
  {$\mathcal{O}(NC)$ time\\$M=N/C$ sequential steps};

\end{scope}

% =========================================================
% Backgrounds & titles
% =========================================================

% Compute a common horizontal extent so all three panels align.
\path
  let \p1=(panelA.west), \p2=(panelB.west), \p5=(panelC.west),
      \p3=(panelA.east), \p4=(panelB.east), \p6=(panelC.east)
  in
    coordinate (commonW) at ({min(\x1,min(\x2,\x5))-0.1},0)
    coordinate (commonE) at ({max(\x3,max(\x4,\x6))+0.1},0);

% Invisible anchors at the same x in all panels, so fit() spans the same range.
\node[inner sep=0pt, minimum size=0pt] (anchorA_L) at (commonW |- panelA.center) {};
\node[inner sep=0pt, minimum size=0pt] (anchorA_R) at (commonE |- panelA.center) {};
\node[inner sep=0pt, minimum size=0pt] (anchorB_L) at (commonW |- panelB.center) {};
\node[inner sep=0pt, minimum size=0pt] (anchorB_R) at (commonE |- panelB.center) {};
\node[inner sep=0pt, minimum size=0pt] (anchorC_L) at (commonW |- panelC.center) {};
\node[inner sep=0pt, minimum size=0pt] (anchorC_R) at (commonE |- panelC.center) {};

\begin{scope}[on background layer]
  \node[draw=appleGray!30, left color=appleLightGray!30, right color=appleLightGray,
        rounded corners=16pt,
        fit=(panelA)(eqA)(complA)(anchorA_L)(anchorA_R),
        inner sep=18pt] (boxA) {};
  \node[draw=appleGray!30, left color=appleLightGray!30, right color=appleLightGray,
        rounded corners=16pt,
        fit=(panelB)(eqB)(complB)(anchorB_L)(anchorB_R),
        inner sep=18pt] (boxB) {};
  \node[draw=appleGray!30, left color=appleLightGray!30, right color=appleLightGray,
        rounded corners=16pt,
        fit=(panelC)(eqC)(complC)(anchorC_L)(anchorC_R),
        inner sep=18pt] (boxC) {};
\end{scope}

% Section labels
\node[section_label] at ([yshift=18pt, xshift=10pt]boxA.north west)
  {A. Recurrent Form (causal scan)};
\node[section_label] at ([yshift=18pt, xshift=10pt]boxB.north west)
  {B. Parallel Form (masked attention)};
\node[section_label] at ([yshift=18pt, xshift=10pt]boxC.north west)
  {C. Chunk-wise Parallel Form (SSD-style)};

\end{tikzpicture}
}
\caption{\textbf{Three equivalent views of denominator-free linear attention.}
The diagram shows the standard same-step write stream $\phi(\mathbf{k}_t)\mathbf{v}_t^\top$; the next-latent version replaces it by the shifted stream $\mathbf{x}_t\mathbf{v}_t^\top$ with $\mathbf{x}_t:=\phi(\mathbf{k}_{t-1})$.
(A) The recurrent form maintains a fixed-size matrix state $\mathbf{S}_t\in\mathbb{R}^{d\times d_v}$.
(B) For a fresh sequence ($\mathbf{S}_0=\mathbf{0}$), unrolling the recurrence yields a masked-attention form with a causal mask $\mathbf{M}$: $\mathbf{O}=(\phi(\mathbf{Q})\phi(\mathbf{K})^{\!\top}\odot\mathbf{M})\mathbf{V}$. This costs $\mathcal{O}(N^2)$ time but is fully parallel over the sequence dimension.
(C) The chunk-wise parallel form interpolates between (A) and (B): the sequence is split into $M$ chunks of size $C$, intra-chunk computation is performed by parallel masked attention, and a fixed-size state $\mathbf{S}^{(k)}\in\mathbb{R}^{d\times d_v}$ is propagated across chunks.}
\label{fig:lin_attn_equivalence}
\end{figure}

\subsection{Delta Networks}
\label{sec:background_delta}
\noindent
Fast Weight Programmers and Delta Networks~\citep{schlag2021linear} formulate sequence modeling as the online learning of a value-retrieval function. Let $\Sbb_{t-1} \in \mathbb{R}^{d\times d_v}$ denote the fast-weight state matrix. Instead of purely additive accumulation, standard Delta Networks employ an error-driven update derived from the gradient of the instantaneous squared error between the state's reconstruction of the current key and value:
\begin{equation}
\ell_t(\Sbb) := \frac{1}{2}\left\|\Sbb^\top \mathbf{k}_t - \vb_t\right\|_2^2.
\label{eq:delta_loss}
\end{equation}
Here, $\Sbb^\top \mathbf{k}_t$ represents the model's prediction of value $\vb_t$ given key $\mathbf{k}_t$. The gradient with respect to the state is $\nabla_{\Sbb}\ell_t(\Sbb) = \mathbf{k}_t\big(\Sbb^\top \mathbf{k}_t - \vb_t\big)^\top$.

\paragraph{Delta rule.} We denote the gradient step size by $\eta_t$; later, $\beta_t$ denotes a dimensionless normalized gain and $\eta_t$ the induced step size. A single online gradient step gives
\begin{equation}
\Sbb_t = \Sbb_{t-1} - \eta_t \,\mathbf{k}_t(\Sbb_{t-1}^\top \mathbf{k}_t - \vb_t)^\top = (\Ib - \eta_t \mathbf{k}_t \mathbf{k}_t^\top)\Sbb_{t-1} + \eta_t \mathbf{k}_t \vb_t^\top.
\label{eq:delta_rule}
\end{equation}
The rank-one factor performs a targeted shrinkage/edit along the current key direction; it becomes an orthogonal projection only when $\eta_t=1/\|\mathbf{k}_t\|_2^2$. Gated Delta Networks add an explicit global decay gate; notation and comparison details are in Appendix~\ref{app:deltanet_step_and_gating}.

\section{Autoregressive Next-Latent Prediction}
\label{sec:autoregressive}

In this section, we cast the recurrent write as an explicit online optimization problem. Under the read-after-write convention, the causal example revealed at step $t$ pairs the newly observed target with the prefix write feature available when that target was predicted, namely $\mathbf{x}_t=\phi(\mathbf{k}_{t-1})$ and $\mathbf{y}_t=\vb_t$. Standard DeltaNet instead uses the same-step pair $(\phi(\mathbf{k}_t),\vb_t)$; that pairing remains causal, but it corresponds to a different local fast-memory objective. We therefore model $\Sbb$ as an online linear predictor from $\mathbf{x}_t$ to $\mathbf{y}_t$ and optimize an instantaneous ridge-regression loss~\citep{wang2025test, behrouz2024titans, behrouz2025atlas}:

\begin{equation}
\label{eq:inst-loss}
\ell_t(\Sbb) \triangleq \frac{1}{2}\left\|\Sbb^\top \mathbf{x}_t - \mathbf{y}_t\right\|_2^2 + \frac{\lambda_t}{2}\|\Sbb\|_F^2,
\end{equation}
where $\lambda_t \ge 0$ is a regularization coefficient. While full-batch minimization of the cumulative loss corresponds to the offline solution found in methods like MesaNet~\citep{von2025mesanet}, efficient autoregressive modeling requires an online approximation. We therefore employ Online Gradient Descent (OGD).

\paragraph{Notation.}
In kernelized linear attention, the key that writes to memory is typically a feature vector $\phi(\mathbf{k}) \in \mathbb{R}^{m}$ rather than the raw key $\mathbf{k}\in\mathbb{R}^{d}$.
All derivations in this section are interpreted
\[
\mathbf{x}_t \equiv \phi(\mathbf{k}_{t-1}) \in \mathbb{R}^{m},
\qquad
\Sbb_t \in \mathbb{R}^{m\times d_v},
\]
so we treat $\mathbf{x}_t$ as the generic write feature and keep notation uncluttered. Queries used for retrieval, e.g., $\phi(\qb_t)$ in linear attention, live in the same feature space but need not equal $\mathbf{x}_t$. Whenever a summary sentence informally refers to a pairing in raw-key space, the mathematically exact kernelized object is the corresponding write feature.

\paragraph{Implicit fast-memory objective.}
Eq.~\eqref{eq:inst-loss} is the instantaneous objective whose gradient step \emph{defines} the fast-memory write rule; it is not an additional supervised loss beyond the outer autoregressive likelihood. During training, we differentiate through the update so that the slow weights (which produce $(\qb,\mathbf{k},\vb)$ as well as $\beta_t,\lambda_t$) learn representations and gates that make these local updates useful.

\paragraph{Indexing and causality conventions.}
We use the read-after-write (RAW) convention throughout: after token $t$ is observed and written, the updated state $\Sbb_t$ is read to predict token $t{+}1$. Under next-latent alignment, the causal write pair is therefore $(\phi(\mathbf{k}_{t-1}),\vb_t)$, or equivalently $(\phi(\mathbf{k}_i),\vb_{i+1})$ under standard indexing. We set $\Sbb_0=\mathbf{0}$ and impose the feature-space boundary $\mathbf{x}_1:=\mathbf{0}$. For update rules with explicit shrinkage, the boundary sentinel is also assigned $\eta_1:=0$ (equivalently, $\alpha_1=0,\gamma_1=1$ in the log-space notation); otherwise $\lambda_1>0$ would decay a carried state even though no data pair is written. Detailed RAW/RBW and boundary conventions are deferred to Appendix~\ref{app:timing_boundary}.

With this convention, the internal fast-memory prediction is $\hat{\mathbf{y}}_t:=\Sbb_{t-1}^\top \mathbf{x}_t$, with residual $\mathbf{r}_t=\mathbf{y}_t-\hat{\mathbf{y}}_t$. This is distinct from the model readout at position $t$, which under RAW uses the updated state $\Sbb_t$.

\paragraph{Fast memory as continual learning.}
The recurrent state is the \emph{fast} memory updated within the forward pass; each token provides a local training pair $(\mathbf{x}_t,\mathbf{y}_t)$. Across the family, $\beta_t$ controls plasticity and $\lambda_t$ controls shrinkage/forgetting; the realized $\eta_t$ depends on the local normalization statistic, which is smoothness-matched for regression and energy-based for the inner-product implementations.

\subsection{Online Gradient Descent Update}
The gradient of the instantaneous loss in Eq.~\eqref{eq:inst-loss} with respect to the state $\Sbb$ is:
\begin{equation}
\nabla_{\Sbb}\ell_t(\Sbb) = \mathbf{x}_t (\Sbb^\top \mathbf{x}_t - \mathbf{y}_t)^\top + \lambda_t \Sbb.
\end{equation}
Applying a single gradient descent step with learning rate $\eta_t$ yields the update rule:
\begin{align}
\label{eq:ogd}
\Sbb_{t} &\leftarrow \Sbb_{t-1} - \eta_t \nabla_{\Sbb}\ell_t(\Sbb_{t-1}) \nonumber \\
&= \Sbb_{t-1} - \eta_t \left[ \mathbf{x}_t (\Sbb_{t-1}^\top \mathbf{x}_t - \mathbf{y}_t)^\top + \lambda_t \Sbb_{t-1} \right] \nonumber \\
&= (1 - \eta_t \lambda_t)\Sbb_{t-1} + \eta_t \mathbf{x}_t \mathbf{r}_t^\top,
\end{align}
where $\mathbf{r}_t \triangleq \mathbf{y}_t - \Sbb_{t-1}^\top \mathbf{x}_t$ is the residual (prediction error). Note that, unlike standard Delta Networks, $\mathbf{r}_t$ measures the discrepancy between the prediction from the previous write feature $\mathbf{x}_t=\phi(\mathbf{k}_{t-1})$ and the current value $\mathbf{v}_t$.

This is gradient descent on the instantaneous ridge objective. With the normalized step size below, it becomes a normalized update. In the special case $\lambda_t=0$ and $\varepsilon=0$, it reduces exactly to the classical NLMS recursion; for $\varepsilon>0$, it is the usual stabilized NLMS variant. Specifically, the loss $\ell_t$ is $L_t$-smooth with respect to the Frobenius norm, with smoothness constant $L_t = \|\mathbf{x}_t\|_2^2 + \lambda_t$. To ensure numerical stability and scale robustness, we adopt the normalized step size:
\begin{equation}
\label{eq:nlms-stepsize}
\eta_{t} = \frac{\beta_{t}}{\|\mathbf{x}_t\|_2^2 + \lambda_t + \varepsilon},
\qquad
\beta_t \in (0,2),\ \varepsilon\ge 0.
\end{equation}
\noindent
We adopt the convention $\eta_t:=0$ when $\|\mathbf{x}_t\|_2^2+\lambda_t+\varepsilon=0$, and also at the boundary sentinel $t=1$ when $\mathbf{x}_1=\mathbf{0}$ is used to denote ``no causal pair.'' In analysis, we may take $\varepsilon=0$ and assume $L_t>0$. In implementations, we take $\varepsilon>0$ for numerical robustness; any $\varepsilon>0$ only decreases $\eta_t$ and therefore preserves the descent guarantees below.

\medskip
\noindent Appendix~\ref{app:extra_impl_remarks} records secondary implementation details, including the $\beta_t>1$ sign-flip regime and the positive-decay interpretation of ridge shrinkage under log-space unrolling.

\subsection{Analysis}
We show that the normalized step size yields per-step descent in the instantaneous regularized objective $\ell_t$, a basic local stability property. This statement is pointwise in $t$: it does not imply monotone decrease of the cumulative online loss $\sum_s \ell_s$ or of the outer autoregressive training objective.

\begin{lemma}[Per-step Descent for Smooth Losses]
\label{lem:per-step-descent}
Let $f:\mathbb{R}^{d_x\times d_v}\to\mathbb{R}$ be $L$-smooth with respect to the Frobenius norm. For any step size $\eta\in(0,2/L)$, the gradient step $\Sbb^{+}=\Sbb-\eta\nabla f(\Sbb)$ satisfies
\begin{equation}
f(\Sbb^{+}) \le f(\Sbb) - \frac{\eta(2-\eta L)}{2}\,\|\nabla f(\Sbb)\|_F^2.
\end{equation}
\end{lemma}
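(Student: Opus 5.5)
We use the standard descent lemma, i.e.\ the quadratic upper bound implied by $L$-smoothness, and specialize it to the gradient step. We equip $\mathbb{R}^{d_x\times d_v}$ with the Frobenius inner product $\langle \mathbf{A},\mathbf{B}\rangle=\operatorname{tr}(\mathbf{A}^\top\mathbf{B})$. This makes it a Euclidean space, so all one-dimensional calculus arguments apply verbatim. $L$-smoothness means $\nabla f$ is $L$-Lipschitz in $\|\cdot\|_F$.

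\textbf{Step 1: quadratic upper bound.} We first establish that for all $\Sbb,\Sbb'$,
\[
f(\Sbb')\le f(\Sbb)+\langle\nabla f(\Sbb),\Sbb'-\Sbb\rangle+\tfrac{L}{2}\|\Sbb'-\Sbb\|_F^2 .
\]
Write $\mathbf{D}=\Sbb'-\Sbb$ and $g(\tau)=f(\Sbb+\tau\mathbf{D})$. By the fundamental theorem of calculus,
\[
f(\Sbb')-f(\Sbb)=\int_0^1\langle\nabla f(\Sbb+\tau\mathbf{D}),\mathbf{D}\rangle\,d\tau .
\]
Subtract $\langle\nabla f(\Sbb),\mathbf{D}\rangle$ from both sides. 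Bound the remaining integrand by Cauchy--Schwarz and the Lipschitz property:
\[
\langle\nabla f(\Sbb+\tau\mathbf{D})-\nabla f(\Sbb),\mathbf{D}\rangle\le L\tau\|\mathbf{D}\|_F^2 .
\]
Integrating over $\tau\in[0,1]$ gives the extra term $L\|\mathbf{D}\|_F^2/2$.

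\textbf{Step 2: plug in the step.} Set $\Sbb'=\Sbb^{+}$, so $\mathbf{D}=-\eta\nabla f(\Sbb)$. The linear term becomes $-\eta\|\nabla f(\Sbb)\|_F^2$ and the quadratic term becomes $\tfrac{L\eta^2}{2}\|\nabla f(\Sbb)\|_F^2$. Combining them,
\[
-\eta+\tfrac{L\eta^2}{2}=-\tfrac{\eta(2-\eta L)}{2},
\]
which is exactly the claimed inequality. The restriction $\eta\in(0,2/L)$ only ensures the coefficient is positive, giving strict descent whenever the gradient is nonzero. The inequality itself holds for any $\eta>0$.

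\textbf{Main obstacle.} The only substantive step is Step 1. Its main subtlety is interpreting ``$L$-smooth'' correctly: we take it as Lipschitz continuity of the gradient, which requires $f$ to be differentiable. If the paper instead means the quadratic upper bound directly, Step 1 is immediate.

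For the application to $\ell_t$, we also check the constant $L_t=\|\mathbf{x}_t\|_2^2+\lambda_t$. The gradient is affine, with linear part $\mathbf{H}\mapsto\mathbf{x}_t\mathbf{x}_t^\top\mathbf{H}+\lambda_t\mathbf{H}$. The operator norm of this map is $\lambda_{\max}(\mathbf{x}_t\mathbf{x}_t^\top)+\lambda_t=\|\mathbf{x}_t\|_2^2+\lambda_t$. This confirms that the step size \eqref{eq:nlms-stepsize} satisfies $\eta_t=\beta_t/L_t<2/L_t$ for $\beta_t\in(0,2)$, so the lemma applies to the Falcon-1 update.
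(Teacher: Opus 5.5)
Your proof is correct and takes the same route as the paper: both use the quadratic upper bound $f(\mathbf{S}')\le f(\mathbf{S})+\langle\nabla f(\mathbf{S}),\mathbf{S}'-\mathbf{S}\rangle+\tfrac{L}{2}\|\mathbf{S}'-\mathbf{S}\|_F^2$ and substitute $\mathbf{S}'=\mathbf{S}-\eta\nabla f(\mathbf{S})$. The only difference is that the paper treats this bound as an immediate consequence of $L$-smoothness, while you derive it from Lipschitz continuity of the gradient via the integral form of the fundamental theorem of calculus; that derivation is a valid supplement, as is your check that $L_t=\|\mathbf{x}_t\|_2^2+\lambda_t$.
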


\begin{proof}
By $L$-smoothness, for any $\Sbb$ and $\Sbb'$,
\[
f(\Sbb')\le f(\Sbb)+\langle\nabla f(\Sbb),\Sbb'-\Sbb\rangle+\frac{L}{2}\|\Sbb'-\Sbb\|_F^2.
\]
Set $\Sbb'=\Sbb^{+}=\Sbb-\eta\nabla f(\Sbb)$ and simplify.
\end{proof}

\noindent\textbf{Step-size parametrization.}
Choosing $\eta=\beta/L$ with $\beta\in(0,2)$ (hence requiring $L>0$) yields a decrease coefficient $\beta(2-\beta)/(2L)$.
For $L>0$, the stabilized choice $\eta=\beta/(L+\varepsilon)$ with $\varepsilon\ge 0$ also lies in $(0,2/L)$. When $L=0$, this interval is undefined; in the degenerate cases arising here ($\mathbf{x}_t=\mathbf{0}$ and $\lambda_t=0$, or the corresponding windowed analogue), the gradient is zero, so we define the update to be a no-op by setting $\eta:=0$.

\subsection{Delta Networks as Regression}
\label{sec:unified_perspective}

In this section, we interpret many previous \textit{fast weight} models under this optimization framework. We observe that Delta Networks and Linear Attention can be viewed as gradient updates induced by specific online objective functions.

By substituting the regression assignments $\mathbf{x}_t \leftarrow \phi(\mathbf{k}_{t-1})$ and $\mathbf{y}_t \leftarrow \vb_{t}$, Eq.~\eqref{eq:ogd} recovers the functional form of the Delta Network update rule~\citep{schlag2021linear}, but with the critical index shift:
\begin{equation}
\Sbb_t
= \underbrace{\big((1-\eta_{t} \lambda_{t})\Ib_{d_x} - \eta_{t}\mathbf{x}_{t}\mathbf{x}_{t}^{\top}\big)}_{\text{Decay \& Targeted Forget}} \Sbb_{t-1}
+ \underbrace{\eta_{t}\mathbf{x}_{t}\mathbf{y}_{t}^{\top}}_{\text{Write}}, \qquad \mathbf{x}_t=\phi(\mathbf{k}_{t-1}),\ \mathbf{y}_t=\vb_t.
\end{equation}
Here, the rank-one term $\mathbf{x}_{t}\mathbf{x}_{t}^\top \Sbb_{t-1}$ is the left Hessian action of the squared-error loss along the current write-feature direction. Intuitively, it reduces the component of the current predictor that acts on $\mathbf{x}_t=\phi(\mathbf{k}_{t-1})$ before adding the new target $\mathbf{y}_t=\vb_t$.

In contrast, replacing the regression (MSE) loss with the inner-product objective of Section~\ref{sec:inner_product_loss} removes the residual term and yields an additive write. With the standard unshifted assignment $(\mathbf{x}_t,\mathbf{y}_t)=(\phi(\mathbf{k}_t),\vb_t)$ (or $(\mathbf{k}_t,\vb_t)$ in the unkernelized case), this is the familiar Linear Attention / Mamba-2 accumulation. With the next-latent assignment $(\mathbf{x}_t,\mathbf{y}_t)=(\phi(\mathbf{k}_{t-1}),\vb_t)$, it becomes the one-step-shifted variant used by our methods (e.g., Falcon-3A in Section~\ref{sec:methods_sliding_ip}).

This regression perspective motivates a key algorithmic improvement that we analyze in Section~\ref{sec:methods}: with objective-matched normalization, the rank-one regression step uses $L_t=\|\mathbf{x}_t\|_2^2+\lambda_t$, while the sliding rule uses $L_t^{(B)}=\lambda_{\max}(\bar{\Cb}_t^{(B)})+\lambda_t$. This suggests that fixed learning rates are scale-mismatched for regression-style fast-weight updates; for inner-product writes, the same normalization is better viewed as a magnitude stabilizer than as a curvature requirement.

Appendix~\ref{app:ridge_reference} gives reference pseudocode for the sequential first-order online-ridge update.

\section{\Falcon: Fast Weight Attention}
\label{sec:methods}

Fast-weight memories and linear-attention architectures can be interpreted as online models that update a recurrent memory during the forward pass, tracing back to classical fast-weight mechanisms and their modern instantiations in linear Transformers and Delta-style rules~\citep{hinton1987using,schmidhuber1992learning, ba2016using, schlag2021linear}. In this section, we derive \Falcon ~from two local objectives under the shifted $\phi(\mathbf{k}_{t-1})\to\vb_t$ alignment: squared-error regression and a negative inner-product objective. This yields normalized step sizes, explicit forgetting controls, and sliding-window variants.

\paragraph{Naming and formula summary.}
All variants use the same causal pair
\[
\mathbf{x}_t:=\phi(\mathbf{k}_{t-1}),\qquad
\mathbf{y}_t:=\vb_t,\qquad
\mathbf{x}_1:=\mathbf{0},\qquad
\eta_1:=0,
\]
and read after writing, $\ob_t=\Sbb_t^\top\phi(\qb_t)$. Let
$\mathbf{r}_t:=\mathbf{y}_t-\Sbb_{t-1}^\top\mathbf{x}_t$ and
$\operatorname{Diag}(\boldsymbol{\eta}_t)$ denote the diagonal matrix of
per-column step sizes. The regression family is
\[
\begin{aligned}
\text{\Falcon-1:}\quad
\Sbb_t
&=(1-\eta_t\lambda_t)\Sbb_{t-1}
  +\eta_t\,\mathbf{x}_t\mathbf{r}_t^\top,
&\qquad
\eta_t&=\frac{\beta_t}{\|\mathbf{x}_t\|_2^2+\lambda_t+\varepsilon},
\\[2pt]
\text{\Falcon-2:}\quad
\Sbb_t
&=\Sbb_{t-1}\!\left(\Ib_{d_v}-\lambda_t\operatorname{Diag}(\boldsymbol{\eta}_t)\right)
  +\mathbf{x}_t(\boldsymbol{\eta}_t\odot\mathbf{r}_t)^\top,
&\qquad
\eta_{j,t}&=\frac{\beta_{j,t}}{\|\mathbf{x}_t\|_2^2+\lambda_t+\varepsilon},
\\[2pt]
\text{\Falcon-3:}\quad
\Sbb_t
&=(1-\eta_t\lambda_t)\Sbb_{t-1}
  +\frac{\eta_t}{B_t}\sum_{j\in\mathcal{I}_t}
  \mathbf{x}_j\big(\mathbf{y}_j-\Sbb_{t-1}^{\!\top}\mathbf{x}_j\big)^\top,
&\qquad
\eta_t&=\frac{\beta_t}{\mu_t^{(B)}+\lambda_t+\varepsilon}.
\end{aligned}
\]
Here $\mu_t^{(B)}=\lambda_{\max}\!\left(B_t^{-1}\sum_{j\in\mathcal{I}_t}\mathbf{x}_j\mathbf{x}_j^\top\right)$.
The inner-product family replaces the residual regression write by direct
target writes:
\[
\begin{aligned}
\text{\Falcon-1A:}\quad
\Sbb_t
&=(1-\eta_t\lambda_t)\Sbb_{t-1}+\eta_t\,\mathbf{x}_t\mathbf{y}_t^\top,
&\qquad
\eta_t&=\frac{\beta_t}{E_t+\lambda_t+\varepsilon},\\[2pt]
\text{\Falcon-2A:}\quad
\Sbb_t
&=\Sbb_{t-1}\!\left(\Ib_{d_v}-\lambda_t\operatorname{Diag}(\boldsymbol{\eta}_t)\right)
  +\mathbf{x}_t(\boldsymbol{\eta}_t\odot\mathbf{y}_t)^\top,
&\qquad
\eta_{j,t}&=\frac{\beta_{j,t}}{E_t+\lambda_t+\varepsilon},\\[2pt]
\text{\Falcon-3A:}\quad
\Sbb_t
&=(1-\eta_t\lambda_t)\Sbb_{t-1}+\eta_t\,\bar{\Nb}_t^{(B)},
&\qquad
\eta_t&=\frac{\beta_t}{\bar E_t^{(B)}+\lambda_t+\varepsilon},
\end{aligned}
\]
where $E_t=\|\mathbf{x}_t\|_2^2$,
$\bar{\Nb}_t^{(B)}=B_t^{-1}\sum_{j\in\mathcal{I}_t}\mathbf{x}_j\mathbf{y}_j^\top$,
and $\bar E_t^{(B)}=B_t^{-1}\sum_{j\in\mathcal{I}_t}\|\mathbf{x}_j\|_2^2$.
Thus, the index $1/2/3$ denotes scalar, per-column, and sliding-window dynamics, respectively. The suffix ``A'' denotes the inner-product objective.

% =========================================================
% Figure 1: Conceptual Overview
%
% Drop-in replacement for the original Figure 1.
% Now merges Falcon-1 (scalar NLMS) and Falcon-2 (per-channel) into a single
% panel B with a shared diagram and a dual-formula equation box.
%
% Requires the same preamble as the paper:
%   - colors: appleBlue, appleLightBlue, appleRed, appleLightRed,
%     appleGreen, appleLightGreen, appleGray, appleLightGray,
%     appleDarkGray, appleDark
%   - tikzlibrary: positioning, calc, shapes.geometric,
%     arrows.meta, backgrounds, fit, matrix, shadows.blur

% =========================================================
\begin{figure}[ht!]
\centering
\resizebox{0.95\linewidth}{!}{%
\begin{tikzpicture}[
font=,
>=Latex,
% Styles
base_node/.style={
thick,
rounded corners=8pt,
blur shadow={shadow blur steps=5, shadow opacity=15}
},
tensor/.style={
base_node,
draw=appleBlue,
fill=appleLightBlue,
minimum height=2.8em,
minimum width=2.8em,
font=\bfseries
},
value_tensor/.style={
base_node,
draw=appleGreen,
fill=appleLightGreen,
minimum height=2.8em,
minimum width=2.8em,
font=\bfseries
},
state/.style={
base_node,
draw=appleGray,
fill=appleLightGray,
rounded corners=10pt,
minimum height=4em,
minimum width=4em,
align=center,
font=\bfseries\large
},
op/.style={
circle,
fill=appleLightGray,
draw=appleGray,
inner sep=4pt,
thick,
font=\bfseries
},
arrow/.style={
->,
thick,
color=appleDarkGray,
line width=1.5pt,
rounded corners=4pt
},
signal_arrow/.style={
arrow,
color=appleBlue
},
error_arrow/.style={
->,
dashed,
color=appleRed,
line width=1.8pt
},
section_label/.style={
font=\bfseries\Large,
anchor=north west,
color=appleDark
},
subcaption/.style={
font=,
color=appleDarkGray,
align=center
}
]

% =========================================================
% PANEL A: Prior Work (Top-Left, small)
% =========================================================
\begin{scope}[local bounding box=panelA]
\node[tensor] (k_old) at (0,0) {$\mathbf{k}_t$};
\node[value_tensor, right=1.5cm of k_old] (v_old) {$\mathbf{v}_t$};
\node[state, below=2cm of $(k_old)!0.5!(v_old)$] (S_old) {$\mathbf{S}_t$};

\draw[arrow] (k_old) -- (S_old);
\draw[arrow] (v_old) -- (S_old);

\node[font=\bfseries\scriptsize, text=appleGray] at ($(k_old)!0.5!(S_old)$) {same-step};
\node[font=\bfseries, text=appleGray, left=0.2cm of S_old, anchor=east] {Write};

\node[subcaption, below=0.8cm of S_old] (capA) {
\textbf{Common Rule}\\
(Same-Step Association)\\
$\phi(\mathbf{k}_t) \leftrightarrow \mathbf{v}_t$
};
\end{scope}

% =========================================================
% PANEL B (MERGED): Falcon-1 / Falcon-2 (Top-Right)
% Shared diagram, dual-formula equation box
% =========================================================
\begin{scope}[shift={(12.5,0)}, local bounding box=panelB]
\node[tensor] (b_k_prev) at (0,0) {$\mathbf{k}_{t-1}$};
\node[state, draw=appleBlue, fill=appleLightBlue, below=2.5cm of b_k_prev] (b_S_prev) {$\mathbf{S}_{t-1}$};
\node[left=0.2cm of b_S_prev, align=right, font=\bfseries, color=appleBlue]
(b_lin_pred) {Linear\\Predictor};

\node[op, right=2.5cm of b_S_prev] (b_prod) {$\times$};
\node[base_node, dashed, draw=appleGray, fill=white, right=1.5cm of b_prod, minimum height=2.5em, minimum width=2.5em, font=\bfseries] (b_v_hat) {$\hat{\mathbf{v}}_t$};
\node[op, draw=appleRed, fill=appleLightRed, right=1.5cm of b_v_hat] (b_loss) {$\mathcal{L}$};

\node[value_tensor, above=2.5cm of b_loss] (b_v_curr) {$\mathbf{v}_t$};

\draw[signal_arrow] (b_k_prev) -- node[left, font=\bfseries, pos=0.5] {Feature $\mathbf{x}_t$} (b_S_prev);
\draw[signal_arrow] (b_k_prev.east) -- ++(0.8,0) -| (b_prod.north);
\draw[arrow] (b_S_prev) -- (b_prod);
\draw[arrow] (b_prod) -- (b_v_hat);
\draw[arrow] (b_v_hat) -- (b_loss);
\draw[arrow] (b_v_curr) -- node[right, font=\bfseries, pos=0.5] {Target $\mathbf{y}_t$} (b_loss);

% MERGED equation box: Falcon-1 (scalar) and Falcon-2 (per-channel), stacked
\node[base_node, fill=white, draw=appleGray!50, inner sep=12pt, below=1.5cm of $(b_S_prev.south)!0.5!(b_loss.south)$] (b_eq_box) {
$\displaystyle
\begin{aligned}
\text{\Falcon-1}:\;\;
\mathbf{S}_t &\leftarrow (1-\eta_t\lambda_t)\mathbf{S}_{t-1}
+ \eta_t\,\mathbf{x}_{t}\mathbf{r}_t^\top\\
&= \big((1-\eta_t\lambda_t)\mathbf{I}-\eta_t\,\mathbf{x}_{t}\mathbf{x}_{t}^\top\big)\mathbf{S}_{t-1}
+ \eta_t\,\mathbf{x}_{t}\mathbf{y}_t^\top,\\
\eta_t &=\frac{\beta_t}{\|\mathbf{x}_t\|_2^2+\lambda_t+\varepsilon},\\[3pt]
\text{\Falcon-2}:\;\;
\mathbf{S}_t &\leftarrow \mathbf{S}_{t-1}\big(\mathbf{I}-\lambda_t\operatorname{Diag}(\boldsymbol{\eta}_t)\big)
+ \mathbf{x}_{t}\big(\boldsymbol{\eta}_t \odot \mathbf{r}_t\big)^\top,\\
\eta_{j,t} &=\frac{\beta_{j,t}}{\|\mathbf{x}_t\|_2^2+\lambda_t+\varepsilon},\\[3pt]
\mathbf{r}_t &:= \mathbf{v}_t - \mathbf{S}_{t-1}^\top \mathbf{x}_{t}
\quad\text{(shared residual)}.
\end{aligned}
$
};

% Gradient Arrow (route to top-right corner of equation box so it doesn't pass through content)
\draw[error_arrow] (b_loss.south) to[out=-90, in=90] node[midway, right=4pt, font=\bfseries, text=appleRed] {Gradient Step} (b_eq_box.north east);

\node[subcaption, left=0.5cm of b_eq_box, anchor=east, align=left] (b_capB) {
\textbf{Ours: \Falcon-1 / \Falcon-2}\\
(Next-Latent Prediction)\\
$\phi(\mathbf{k}_{t-1}) \to \mathbf{v}_t$\\
\Falcon-1: scalar $\eta_t$\\
\Falcon-2: per-channel $\boldsymbol{\eta}_t$
};

\node[above=0.3cm of b_v_hat, font=\bfseries, color=appleGray] {Next-Latent Pred};
\end{scope}

% =========================================================
% PANEL C: Falcon-3 / Falcon-3A (Bottom)
% =========================================================
\begin{scope}[shift={(9,-15.25)}, local bounding box=panelC]
\node[state, draw=appleBlue, fill=appleLightBlue] (S_win) at (0, -2.5) {$\mathbf{S}_{t-1}$};

\node[tensor, scale=0.9] (k_w1) at (-1.0, 0.2) {$\mathbf{k}_{t-1}$};
\node[tensor, scale=0.9, below=0.1cm of k_w1] (k_w2) {$\mathbf{k}_{t-2}$};
\node[fit=(k_w1)(k_w2), draw=appleGray, dashed, rounded corners, inner sep=6pt, label={[font=\bfseries\small, color=appleGray]above:Window $B$}] (win_box) {};

\node[value_tensor, scale=0.9] (v_w1) at (4.5, 0.2) {$\mathbf{v}_{t}$};
\node[value_tensor, scale=0.9, below=0.1cm of v_w1] (v_w2) {$\mathbf{v}_{t-1}$};
\node[fit=(v_w1)(v_w2), inner sep=4pt] (target_box) {};

\node[op, right=1.5cm of S_win] (prod_win) {$\times$};
\node[base_node, dashed, draw=appleGray, fill=white, right=1.0cm of prod_win, minimum height=2.5em, minimum width=2.5em] (v_hat_win) {$\hat{\mathbf{V}}_t$};
\node[op, draw=appleRed, fill=appleLightRed, right=1.0cm of v_hat_win] (sum_loss) {$\Sigma$};

\draw[signal_arrow] (win_box.east) -| (prod_win.north);
\draw[arrow] (S_win) -- (prod_win);
\draw[arrow] (prod_win) -- (v_hat_win);
\draw[arrow] (v_hat_win) -- (sum_loss);
\draw[arrow] (target_box.east) -| (sum_loss.north);

\node[base_node, fill=white, draw=appleGray!50, inner sep=12pt, below=1.5cm of $(S_win.south)!0.5!(sum_loss.south)$] (eq_box_win) {%
$\displaystyle
\begin{aligned}
\text{\Falcon-3:}\quad
\mathbf{S}_t &\leftarrow (1-\eta_t\lambda_t)\mathbf{S}_{t-1}
\;+\; \frac{\eta_t}{|\mathcal{I}_t|} \sum_{j \in \mathcal{I}_t} \mathbf{x}_{j}\big(\mathbf{v}_j - \mathbf{S}_{t-1}^\top \mathbf{x}_{j}\big)^\top,\\
\text{\Falcon-3A:}\quad
\mathbf{S}_t &\leftarrow \gamma_t\mathbf{S}_{t-1}
\;+\; \eta_t\,\bar{\mathbf{N}}_t^{(B)},\\
\eta_t &:= \frac{\beta_t}{\mu_t^{(B)}+\lambda_t+\varepsilon}
\quad \text{for \Falcon-3},
\qquad
\eta_t := \frac{\beta_t}{\bar E_t^{(B)}+\lambda_t+\varepsilon}
\quad \text{for \Falcon-3A},\\
\alpha_t &:= \min(\eta_t\lambda_t,1-\varepsilon_\gamma),
\quad \gamma_t:=1-\alpha_t.
\end{aligned}
$%
};

\draw[error_arrow]
  (sum_loss.east) to[out=0, in=0]
  node[pos=0.6, below right, xshift=6pt, yshift=-1pt,
       font=\bfseries, text=appleRed] {Mini-batch GD}
  (eq_box_win.east);

\node[subcaption, left=0.5cm of eq_box_win, anchor=east, align=left] (capC) {
\textbf{\Falcon-3 / \Falcon-3A}\\
(Sliding Mini-batch)\\
$\text{Window} \to \nabla$
};
\end{scope}

% =========================================================
% BACKGROUNDS
% =========================================================
\begin{scope}[on background layer]
\node[draw=appleBlue!40, left color=appleLightBlue!30, right color=appleLightBlue, rounded corners=16pt, fit=(panelA), inner sep=20pt] (boxA) {};
\node[draw=appleBlue!40, left color=appleLightBlue!30, right color=appleLightBlue, rounded corners=16pt, fit=(panelB)(b_eq_box)(b_capB), inner sep=20pt] (boxB) {};
\node[draw=appleBlue!40, left color=appleLightBlue!30, right color=appleLightBlue, rounded corners=16pt, fit=(panelC)(eq_box_win)(capC), inner sep=20pt] (boxC) {};
\end{scope}

% Section Titles
\node[section_label] at ([yshift=20pt, xshift=0pt]boxA.north west) {A. Objective Alignment};
\node[section_label] at ([yshift=20pt, xshift=15pt]boxB.north west) {B. \Falcon-1 / \Falcon-2: Next-Latent Prediction};
\node[section_label] at ([yshift=20pt, xshift=15pt]boxC.north west) {C. \Falcon-3 / \Falcon-3A: Sliding Window};

\end{tikzpicture}
}
\caption{\textbf{Conceptual Overview.}
(A) Many fast-weight rules bind the same-step write-feature/target pair $(\phi(\mathbf{k}_t),\mathbf{v}_t)$ as a cache-style association. Under the prefix-prediction fast-memory objective studied here, the example revealed at step $t$ pairs a prefix write feature with the newly observed target, yielding $(\phi(\mathbf{k}_{t-1}),\mathbf{v}_t)$.
(B) \textbf{\Falcon-1 / \Falcon-2 (Next-Latent Prediction):} the state $\mathbf{S}_{t-1}$ predicts $\mathbf{v}_t$ from the prefix feature $\mathbf{x}_t:=\phi(\mathbf{k}_{t-1})$, then performs an online NLMS-stabilized ridge-regression update. 
(C) \textbf{\Falcon-3 / \Falcon-3A (Sliding Mini-batch):} applies a mini-batch regression (or inner-product) step over an active window $\mathcal{I}_t$ of nominal size $B$ (realized size $B_t:=|\mathcal{I}_t|\le B$) over causal pairs $(\mathbf{x}_j,\mathbf{v}_j)$ with $\mathbf{x}_j:=\phi(\mathbf{k}_{j-1})$.}
\label{fig:concept}
\end{figure}

\subsection{Scaled Linear Attention and Scaled DeltaNet}
\label{sec:scaled_fast_weight}

Unlike softmax attention, which uses the scaled dot product $\langle \qb,\mathbf{k}\rangle/\sqrt{d}$, fast-weight recurrences are directly sensitive to the norms of queries and keys:
(i) dot-product reads grow with $\|\qb_t\|_2\|\mathbf{k}\|_2$, and
(ii) additive (inner-product) writes grow with the write-feature norm.
To stabilize both the \emph{read} and the \emph{write} streams, especially under long decoding horizons and mixed precision, we use explicit feature scaling/normalization.

\paragraph{Scaled features.}
We define a generic RMS normalization operator for a vector $\ub\in\mathbb{R}^{d_u}$:
\begin{align*}
\operatorname{RMSNorm}(\ub)
:= \frac{\ub}{\sqrt{\|\ub\|_2^2/d_u+\varepsilon_{\rm rms}}},
\end{align*}
where $\varepsilon_{\rm rms}>0$ is a small stabilizer. Unless stated otherwise, we apply RMSNorm to the $(\qb_t,\mathbf{k}_t)$ projections used by fast-weight reads/writes, before forming dot products or outer products. This default differs from common $\ell_2$-normalized DeltaNet variants (e.g., in Gated DeltaNet implementations) and is substantially more stable in mixed precision because standard RMSNorm keeps coordinate magnitudes $\Theta(1)$. Moreover,
\[
\big\|\operatorname{RMSNorm}(\ub)\big\|_2^2
=
\frac{d_u\|\ub\|_2^2}{\|\ub\|_2^2+d_u\varepsilon_{\rm rms}}
\le d_u,
\]
so in the usual regime $\|\ub\|_2^2\gg d_u\varepsilon_{\rm rms}$ we indeed have $\|\operatorname{RMSNorm}(\ub)\|_2^2\approx d_u$. By default, we do \emph{not} normalize values $\vb_t$; value normalization (VNorm) is optional and disabled unless explicitly enabled.

\paragraph{Scaled Linear Attention.}
Under next-latent alignment, we use the RMS-normalized projections in the feature space. The scalar denominator-free inner-product recurrence, denoted \Falcon-1A below, is
\[
\ob_t=\Sbb_t^\top \phi(\qb_t),
\qquad
\Sbb_t=(1-\eta_t\lambda_t)\Sbb_{t-1}+\eta_t\,\mathbf{x}_t\vb_t^\top,
\qquad \mathbf{x}_t:=\phi(\mathbf{k}_{t-1}).
\]
Its per-column counterpart, \Falcon-2A, replaces the scalar write gain by a
vector $\boldsymbol{\eta}_t\in\mathbb{R}^{d_v}$:
\[
\Sbb_t
=\Sbb_{t-1}\!\left(\Ib_{d_v}-\lambda_t\operatorname{Diag}(\boldsymbol{\eta}_t)\right)
+\mathbf{x}_t(\boldsymbol{\eta}_t\odot\vb_t)^\top.
\]
The corresponding normalized numerator/denominator recurrence is recorded in Appendix~\ref{app:scaled_fast_weight_details}; in signed-feature settings, the denominator caveat in Appendix~\ref{app:normalized_linattn_caveat} applies.

\paragraph{Scaled DeltaNet and the common ridge parameterization.}
For regression-style fast weights, we use the same scaled write-feature $\mathbf{x}_t$ inside the NLMS update:
\[
\Sbb_t=(1-\eta_t\lambda_t)\Sbb_{t-1}+\eta_t\,\mathbf{x}_t\big(\vb_t-\Sbb_{t-1}^\top\mathbf{x}_t\big)^\top,
\qquad
\eta_t=\frac{\beta_t}{\|\mathbf{x}_t\|_2^2+\lambda_t+\varepsilon}.
\]

In scaled implementations, the network may emit a dimensionless base ridge $\bar\lambda_t$ that is converted to the actual coefficient used by the recurrence, $\lambda_t=\bar\lambda_t E_t$, where $E_t$ is the appropriate normalization statistic: $\|\mathbf{x}_t\|_2^2$ for \Falcon-2 and $\mu_t^{(B)}:=\lambda_{\max}(\bar{\Cb}_t^{(B)})$ for \Falcon-3. We then set
\[
\eta_t=\frac{\beta_t}{E_t+\lambda_t+\varepsilon},
\qquad
\alpha_t:=\eta_t\lambda_t,
\qquad
\gamma_t:=1-\alpha_t.
\]
Here $E_t=\|\mathbf{x}_t\|_2^2$ for the non-sliding scalar/per-column rules
(\Falcon-1/\Falcon-2 and \Falcon-1A/\Falcon-2A), while $E_t=\mu_t^{(B)}$ for
\Falcon-3 and $E_t=\bar E_t^{(B)}$ for \Falcon-3A. For regression, $E_t$ is the
local smoothness scale of the data term; for inner-product writes, the
corresponding energy statistics in Sections~\ref{sec:inner_product_loss}
and~\ref{sec:methods_sliding_ip} are practical write-magnitude controls. Exact
scale-robustness identities, detached-statistics details, the normalized
linear-attention recurrence, and log-space positive-decay handling are deferred
to Appendix~\ref{app:scaled_fast_weight_details} and Appendix~\ref{app:extra_impl_remarks}.

\subsection{Regression Loss (Delta Network)}
\label{sec:methods_regression}

We formulate the state update as an autoregressive linear regression problem. At time step $t$, let the state be $\Sbb_{t-1} \in \mathbb{R}^{d_x \times d_v}$, the write feature be $\mathbf{x}_t \triangleq \phi(\mathbf{k}_{t-1}) \in \mathbb{R}^{d_x}$, and the target be $\mathbf{y}_t \triangleq \vb_t \in \mathbb{R}^{d_v}$. (In the unkernelized case, $\phi$ is the identity and $d_x=d$.) The state $\Sbb_{t-1}$ acts as a linear predictor mapping the prefix feature $\mathbf{x}_t$ to the newly observed target $\mathbf{y}_t$.

The instantaneous squared-error loss is
\begin{align}
f_t(\Sbb) := \frac{1}{2}\big\|\Sbb^\top \mathbf{x}_t - \mathbf{y}_t\big\|_2^2.
\end{align}
Evaluated at the pre-update state, the residual is $\mathbf{r}_t \triangleq \mathbf{y}_t - \Sbb_{t-1}^\top \mathbf{x}_t$. The gradient with respect to the state is:
\begin{align}
\nabla_{\Sbb} f_t(\Sbb_{t-1}) = \mathbf{x}_t (\Sbb_{t-1}^\top \mathbf{x}_t - \mathbf{y}_t)^\top = -\,\mathbf{x}_t\mathbf{r}_t^\top.
\end{align}

\paragraph{Delta update (standard vs.\ next-latent).}
A single step of Online Gradient Descent (OGD) with learning rate $\eta_t$ yields the Delta update:
\begin{align}
\Sbb_t
&= \Sbb_{t-1} - \eta_t \nabla_{\Sbb} f_t(\Sbb_{t-1}) \nonumber \\
&= \Sbb_{t-1} + \eta_t \mathbf{x}_t \mathbf{r}_t^\top \nonumber \\
&= (\Ib_{d_x} - \eta_t \mathbf{x}_t\mathbf{x}_t^\top)\Sbb_{t-1} + \eta_t \mathbf{x}_t\mathbf{y}_t^\top.
\end{align}
Under next-latent alignment we have $\mathbf{x}_t=\phi(\mathbf{k}_{t-1})$ and $\mathbf{y}_t=\vb_t$. Replacing $\mathbf{k}_{t-1}$ with $\mathbf{k}_t$ (equivalently, $\mathbf{x}_t\leftarrow \phi(\mathbf{k}_t)$) recovers the unshifted DeltaNet update of~\citet{schlag2021linear}.

\paragraph{$L_2$ Regularization.}
Adding an $L_2$ penalty $\frac{\lambda_t}{2}\|\Sbb\|_F^2$ to the instantaneous loss yields the shrinkage term $-\eta_t \lambda_t \Sbb_{t-1}$ in the online update (cf.\ Eq.~\eqref{eq:ogd}). Under normalized step sizes, this results in the multiplicative factor $(1-\eta_t\lambda_t)$, which we treat as a simple and controllable forgetting mechanism.

% =========================================================
% FIGURE: Three equivalent views of FALCON-1
%   A. Recurrent Form (causal scan with rank-one edit + write)
%   B. Parallel Form  (WY representation + triangular solve)
%   C. Chunk-wise Parallel Form (per-chunk Gram + TriSolve)
%
% Requires the same preamble as Figure: Linear Attention Equivalence
%   - colors: appleBlue, appleLightBlue, appleRed, appleLightRed,
%     appleGreen, appleLightGreen, appleGray, appleLightGray,
%     appleDarkGray, appleDark
%   - tikzlibraries: positioning, calc, shapes.geometric,
%     arrows.meta, backgrounds, fit, matrix, shadows.blur
% =========================================================
\begin{figure}[ht!]
\centering
% Purple accent for the *new* DeltaNet object: the TriSolve result B.
% (Definitions are local to this figure; safe to define once if reused.)
\providecolor{applePurple}{RGB}{175, 82, 222}
\providecolor{appleLightPurple}{RGB}{243, 233, 255}
\resizebox{0.72\linewidth}{!}{%
% [inline block 0: 1 envs, 21507 chars -> data_tex | \begin{tikzpicture}[ font=,...]

}
\caption{\textbf{Three equivalent views of the \Falcon-1 rank-one kernel.}
For readability, the displayed WY algebra uses the no-ridge case $\lambda_t=0$; the $\lambda_t>0$ recurrence is reduced exactly to this form by the chunk-local positive-decay renormalization in Appendix~\ref{app:deltanet2_lite}, while the underlying no-ridge WY kernel is given in Appendix~\ref{sec:delta-parallel}.
(A) The recurrent form maintains a fixed-size matrix state $\mathbf{S}_t\in\mathbb{R}^{d_x\times d_v}$ and applies the rank-one edit $(\mathbf{I}-\eta_t\mathbf{x}_t\mathbf{x}_t^{\!\top})$ before writing the new target $\mathbf{v}_t$ under the shifted feature $\mathbf{x}_t:=\phi(\mathbf{k}_{t-1})$.
(B) With $\mathbf{U}:=\mathbf{X}^{\!\top}\operatorname{Diag}(\sqrt{\boldsymbol{\eta}})$, the WY form converts the recurrence into the masked pattern $\phi(\mathbf{Q})\mathbf{S}_0+(\phi(\mathbf{Q})\mathbf{U}\odot\mathbf{M})\mathbf{B}$.
(C) The chunk-wise form splits the sequence into $M$ chunks of size $C$ and performs one multi-right-hand-side triangular solve per chunk.}
\label{fig:deltanet_equivalence}
\end{figure}

% =========================================================
% Falcon-2: Three equivalent views (refined for tight spacing,
% no title/plot overlap, no equiv-strip/box-border collision)
% =========================================================
\begin{figure}[ht!]
\centering
\resizebox{0.72\linewidth}{!}{%
% [inline block 1: 1 envs, 22466 chars -> data_tex | \begin{tikzpicture}[ font=,...]

}
\caption{\textbf{Three views of \Falcon-2 and its batched WY kernel.}
(A) The general recurrent form maintains a fixed-size matrix state $\mathbf{S}_t\in\mathbb{R}^{d_x\times d_v}$. Unlike \Falcon-1, the step size is a vector $\boldsymbol{\eta}_t\in\mathbb{R}^{d_v}$, so each value channel follows its own plasticity trajectory while sharing the write-feature direction $\mathbf{x}_t=\phi(\mathbf{k}_{t-1})$ and residual $\mathbf{r}_t=\mathbf{v}_t-\mathbf{S}_{t-1}^{\!\top}\mathbf{x}_t$.
(B) Panels (B)--(C) display the no-ridge rank-one kernel. For $\lambda_t>0$, the same equations apply to the exactly equivalent channel-wise renormalized variables of Appendix~\ref{app:deltanet2_parallel}. The state factors as $\mathbf{S}_t=\mathbf{S}_0+\mathbf{X}_t^{\!\top}\mathbf{B}_t$. Because $\boldsymbol{\eta}$ is per-channel, $\mathbf{B}$ is obtained from one unit-lower-triangular system $\mathbf{L}_j$ per value channel, with all channels sharing the write-feature Gram $\mathbf{G}=\mathbf{X}^{\!\top}\mathbf{X}$.
(C) Within each chunk, the Gram $\mathbf{G}^{(k)}$ is built once; the $d_v$ channel-specific systems are then solved in one batched TriSolve, and a fixed-size matrix state is propagated across chunks.}
\label{fig:falcon2_equivalence}
\end{figure}

\paragraph{Falcon-2: Adaptive Learning Rates.}
We propose \textbf{Falcon-2}, which combines NLMS-style normalization with per-channel adaptive learning rates.
In Appendix~\ref{app:deltanet2_parallel}, we derive a vectorized dual form that makes column-wise adaptivity computationally tractable on GPUs.
This allows the step size to be a vector $\boldsymbol{\eta}_t \in \mathbb{R}^{d_v}$, tailoring the update magnitude for each value channel (column of $\Sbb$) independently.
In the models studied here, we use the standard multi-head factorization: each head carries its own fast state, and the same per-column rule is applied independently within that head.

\paragraph{Per-column NLMS step sizes.}
Concretely, we use an NLMS-style normalizer shared across columns and learn per-column gains (equivalently, per output-feature / column gains):
\[
\eta_{j,t} = \frac{\beta_{j,t}}{\|\mathbf{x}_t\|_2^2+\lambda_t+\varepsilon},
\qquad \beta_{j,t}\in(0,2),\ \varepsilon>0.
\]
When $\lambda_t=0$, $\varepsilon=0$, and the gains are tied across channels ($\beta_{j,t}\equiv\beta_t$), this reduces to the classical scalar NLMS step size; otherwise, it is a column-wise NLMS generalization with a shared normalizer.
Since the squared-error (ridge) loss decomposes across value coordinates (columns of $\Sbb$), the per-step descent argument from Lemma~\ref{lem:per-step-descent} applies column-wise provided $0<\beta_{j,t}<2$ for all $j$.
Equivalently, Falcon-2 is a collection of $d_v$ independent scalar-step updates on a separable objective, rather than a single Frobenius-gradient step with a full matrix-valued learning rate.

\paragraph{Update rule.}
Let $\boldsymbol{\eta}_t=(\eta_{1,t},\ldots,\eta_{d_v,t})^\top$. The update can be written compactly as:
\begin{align}
\Sbb_t
&= \Sbb_{t-1}\Big(\Ib_{d_v}-\lambda_t\,\operatorname{Diag}(\boldsymbol{\eta}_t)\Big) + \mathbf{x}_t\big(\boldsymbol{\eta}_t \odot \mathbf{r}_t\big)^\top,
\end{align}
An expanded edit decomposition, separating the per-column shrinkage path from the feature-direction edit, is deferred to Appendix~\ref{app:deltanet2_parallel}.

\paragraph{Relation to RWKV-7 and Kimi Linear Attention.}
A brief comparison to RWKV-7 and Kimi Linear Attention is moved to Appendix~\ref{app:extra_impl_remarks}. 

The equivalent column-wise recursion and the log-space positive-decay renormalization used by the chunk-parallel kernels are deferred to Appendix~\ref{app:deltanet2_parallel}.

\paragraph{Chunk-parallel implementation.}
Falcon-2 can be trained sequence-parallel by chunking the length axis and using a Gram/WY representation within each chunk.
In the multi-head setting used here, this computation is applied independently within each head.
Within a chunk, the WY/Gram form builds a shared key Gram matrix and a channel-dependent unit-lower-triangular system. The injected-value and projected-history paths share this triangular factor, so the implementation solves one merged residual system rather than two. This removes one batched TriSolve per chunk in the forward pass without changing the recurrence or asymptotic complexity, and mirrors the single-inversion form emphasized in Comba~\citep{hu2025comba}.
Algorithm~\ref{alg:deltanet2-parallelized-decay} gives the single-head Falcon-2 chunk-wise forward pass with the same positive-decay convention used by the implementation. The no-ridge rank-one case is recovered by setting $\lambda_t=0$, in which case $\gamma_{t,j}=1$ and the chunk-local rescaling becomes the identity. Appendix~\ref{app:deltanet2_parallel} gives the exact WY/Gram algebra and complexity analysis.

\begin{breakablealgorithm}
\caption{\Falcon-2 (Chunk-parallel Forward)}
\label{alg:deltanet2-parallelized-decay}
\small
\begin{algorithmic}[1]
\Require Shifted write features $\mathbf{K}\in\mathbb{R}^{L\times d_x}$ with row $t$ equal to $\mathbf{x}_t^\top$, query features $\mathbf{Q}\in\mathbb{R}^{L\times d_x}$, values $\mathbf{V}\in\mathbb{R}^{L\times d_v}$, per-channel step sizes $\boldsymbol{\eta}\in\mathbb{R}^{L\times d_v}$ with $\eta_{t,j}\ge0$ (and $\eta_{1,:}=0$ for the boundary sentinel), ridge coefficients $\boldsymbol{\lambda}\in\mathbb{R}_{\ge0}^{L}$, decay floor $\varepsilon_\gamma>0$, chunk size $C$ (assume $C\mid L$), and initial state $\Sbb_{\rm init}\in\mathbb{R}^{d_x\times d_v}$.
\Ensure Outputs $\mathbf{O}\in\mathbb{R}^{L\times d_v}$ for the per-channel recurrence
$\mathbf{s}_{t,j}=((1-\alpha_{t,j})\Ib-\eta_{t,j}\mathbf{x}_t\mathbf{x}_t^\top)\mathbf{s}_{t-1,j}+\eta_{t,j}v_{t,j}\mathbf{x}_t$,
where $\alpha_{t,j}:=\min(\lambda_t\eta_{t,j},1-\varepsilon_\gamma)$; if the clamp is inactive, this is the ridge update with coefficient $\lambda_t$, and otherwise it uses the effective shrinkage coefficient $\alpha_{t,j}/\eta_{t,j}$ when $\eta_{t,j}>0$.
\State Partition the length-$L$ axis into $M=L/C$ contiguous chunks.
\State $\Sbb_{\rm in}\gets\Sbb_{\rm init}$ and initialize $\mathbf{O}\gets\mathbf{0}$.
\For{$m=1,\ldots,M$}
  \State Slice $\mathbf{K}_{(m)},\mathbf{Q}_{(m)}\in\mathbb{R}^{C\times d_x}$, $\mathbf{V}_{(m)}\in\mathbb{R}^{C\times d_v}$, $\boldsymbol{\eta}_{(m)}\in\mathbb{R}^{C\times d_v}$, and $\boldsymbol{\lambda}_{(m)}\in\mathbb{R}^{C}$.
  \State Set $\mathbf{K}\gets\mathbf{K}_{(m)}^\top\in\mathbb{R}^{d_x\times C}$ and $\mathbf{Q}\gets\mathbf{Q}_{(m)}^\top\in\mathbb{R}^{d_x\times C}$.
  \State $\boldsymbol{\alpha}\gets\min\big(\boldsymbol{\eta}_{(m)}\odot(\boldsymbol{\lambda}_{(m)}\mathbf{1}_{d_v}^\top),\,1-\varepsilon_\gamma\big)$.
  \State $\log\boldsymbol{\gamma}\gets\operatorname{log1p}(-\boldsymbol{\alpha})$ and $\hat{\boldsymbol{\eta}}\gets\boldsymbol{\eta}_{(m)}\odot\exp(-\log\boldsymbol{\gamma})$.
  \State Compute chunk-local log-prefixes $u_{0,:}\gets\mathbf{0}$ and, for $i=1{:}C$, $u_{i,:}\gets u_{i-1,:}+\log\boldsymbol{\gamma}_{i,:}$ and $\boldsymbol{\delta}_{i,:}\gets\exp(u_{i,:})$.
  \State Rescale values locally: $\hat{\mathbf{V}}_{i,:}\gets\mathbf{V}_{(m),i,:}\odot\exp(-u_{i-1,:})$ for $i=1{:}C$.
  \State $\boldsymbol{\Sigma}\gets\sqrt{\hat{\boldsymbol{\eta}}}$, $\mathbf{G}\gets\mathbf{K}^\top\mathbf{K}$, and $\mathbf{M}\gets\operatorname{tril}(\mathbf{Q}^\top\mathbf{K},0)$.
  \State $\mathbf{H}\gets\mathbf{Q}^\top\Sbb_{\rm in}$ and $\mathbf{P}\gets\mathbf{K}^\top\Sbb_{\rm in}$.
  \State For each value channel $j$, form $\mathbf{L}_j\gets\Ib_C+\operatorname{tril}\big(\mathbf{G}\odot(\boldsymbol{\Sigma}_{:,j}\boldsymbol{\Sigma}_{:,j}^\top),-1\big)$.
  \State $\widetilde{\mathbf{B}}\gets\mathrm{BatchedSolveTri}\big(\{\mathbf{L}_j\}_{j=1}^{d_v},\,\boldsymbol{\Sigma}\odot(\hat{\mathbf{V}}-\mathbf{P})\big)$.
  \State $\mathbf{B}\gets\boldsymbol{\Sigma}\odot\widetilde{\mathbf{B}}$.
  \State $\hat{\mathbf{O}}^{(m)}\gets\mathbf{H}+\mathbf{M}\mathbf{B}$ and $\hat{\Sbb}_{\rm out}\gets\Sbb_{\rm in}+\mathbf{K}\mathbf{B}$.
  \State Write $\mathbf{O}_{(m),i,:}\gets\hat{\mathbf{O}}^{(m)}_{i,:}\odot\boldsymbol{\delta}_{i,:}$ for $i=1{:}C$.
  \State $\Sbb_{\rm in}\gets\hat{\Sbb}_{\rm out}\operatorname{Diag}(\boldsymbol{\delta}_{C,:})$.
\EndFor
\State \Return $\mathbf{O}$.
\end{algorithmic}
\end{breakablealgorithm}

\begin{figure}[ht!]
\centering
\resizebox{0.75\linewidth}{!}{%
\begin{tikzpicture}[
    font=,
    >={Latex[length=2.4mm,width=1.8mm]},
    card/.style={rounded corners=8pt, fill=white, draw=appleGray!45, line width=0.6pt},
    pill/.style 2 args={rounded corners=3pt, fill=#1!10, draw=#1, line width=0.4pt,
                        minimum width=#2, minimum height=0.46cm,
                        font=\scriptsize, text=#1, inner sep=2pt},
    arr/.style={->, line width=0.7pt, draw=appleGray!70, rounded corners=2pt},
    arrW/.style={->, line width=1pt, draw=appleOrange, rounded corners=2pt},
    arrR/.style={->, line width=1pt, draw=appleBlue, rounded corners=2pt},
    arrEta/.style={->, line width=0.8pt, draw=appleOrange!75, rounded corners=2pt},
    panelbg/.style={rounded corners=12pt, fill=appleLightGray!22, draw=appleGray!35, line width=0.6pt},
]

% ===== Panel backgrounds =====
\begin{scope}[on background layer]
  \node[panelbg, minimum width=15cm, minimum height=5.8cm] at (7.5,  4.0) {}; % A
  \node[panelbg, minimum width=15cm, minimum height=5.8cm] at (7.5, -2.3) {}; % B
  \node[panelbg, minimum width=15cm, minimum height=5.8cm] at (7.5, -8.6) {}; % C
\end{scope}

% ============================================================
% PANEL A: FALCON-1A -- scalar inner-product write
% ============================================================
\node[font=\large\bfseries, text=appleDark] at (7.5, 6.55)
    {A. Falcon-1A: scalar inner-product write};
\node[font=\footnotesize, text=appleGray] at (7.5, 6.15)
    {one scalar $\eta_t$ shared across all $d_v$ value channels (additive write)};

\node[card, minimum width=2.3cm, minimum height=2.9cm] (inputA) at (1.5, 3.3) {};
\node[font=\footnotesize\bfseries, text=appleDark] at ($(inputA.north)+(0,-0.28)$) {Token $t$};
\node[pill={appleBlue}{1.85cm}] (xtA) at ($(inputA.center)+(0,0.55)$)
    {$\mathbf{x}_t\!=\!\phi(\mathbf{k}_{t-1})$};
\node[pill={appleGreen!55!black}{1.85cm}] (ytA) at ($(inputA.center)+(0,0.00)$)
    {$\mathbf{y}_t\!=\!\mathbf{v}_t$};
\node[pill={applePurple}{1.85cm}] (qtA) at ($(inputA.center)+(0,-0.55)$)
    {$\mathbf{q}_t$};

\def\cw{0.24}\def\rh{0.24}
\pgfmathsetmacro{\matAX}{4.7}\pgfmathsetmacro{\matAY}{3.25}
\node[card, minimum width=2.7cm, minimum height=3.3cm] (stateAprev) at (\matAX, \matAY) {};
\node[font=\footnotesize\bfseries, text=appleDark] at (\matAX, \matAY+1.35) {$\mathbf{S}_{t-1}$};
\node[font=\tiny, text=appleGray] at (\matAX, \matAY+1.10) {$d_x\!\times\!d_v$};
\pgfmathsetmacro{\gridAleft}{\matAX - 4*\cw}
\pgfmathsetmacro{\gridAbot}{\matAY - 0.05 - 3*\rh}
\foreach \r in {0,...,5}{\foreach \c in {0,...,7}{%
  \pgfmathsetmacro{\cx}{\gridAleft + \c*\cw}\pgfmathsetmacro{\cy}{\gridAbot + \r*\rh}%
  \fill[appleLightBlue, rounded corners=0.6pt]
    (\cx+0.025, \cy+0.025) rectangle (\cx+\cw-0.025, \cy+\rh-0.025);}}

\node[circle, draw=appleOrange, fill=appleLightOrange, line width=0.7pt,
      minimum size=0.95cm, font=\footnotesize\bfseries, text=appleOrange!40!black]
    (etaA) at (8.0, 5.30) {$\eta_t$};

\node[card, minimum width=2.5cm, minimum height=1.5cm] (formulaA) at (8.0, 3.3) {};
\node[font=\footnotesize\bfseries, text=appleDark] at ($(formulaA.north)+(0,-0.25)$) {Update};
\node[font=\scriptsize, text=appleDarkGray, align=center] at ($(formulaA.center)+(0,-0.07)$)
    {$\mathbf{S}_t\!=\!(1{-}\eta_t\lambda_t)\mathbf{S}_{t-1}$\\[-1pt]
     $+\,\eta_t\,\mathbf{x}_t\mathbf{y}_t^{\!\top}$};

\pgfmathsetmacro{\matBX}{11.3}
\node[card, minimum width=2.7cm, minimum height=3.3cm] (stateAcur) at (\matBX, \matAY) {};
\node[font=\footnotesize\bfseries, text=appleDark] at (\matBX, \matAY+1.35) {$\mathbf{S}_t$};
\node[font=\tiny, text=appleGray] at (\matBX, \matAY+1.10) {all cols updated equally};
\pgfmathsetmacro{\gridBleft}{\matBX - 4*\cw}
\foreach \r in {0,...,5}{\foreach \c in {0,...,7}{%
  \pgfmathsetmacro{\cx}{\gridBleft + \c*\cw}\pgfmathsetmacro{\cy}{\gridAbot + \r*\rh}%
  \fill[appleOrange!55, rounded corners=0.6pt]
    (\cx+0.025, \cy+0.025) rectangle (\cx+\cw-0.025, \cy+\rh-0.025);}}

\pgfmathsetmacro{\barAY}{\gridAbot - 0.32}
\foreach \c in {0,...,7}{\pgfmathsetmacro{\cx}{\gridBleft + \c*\cw}%
  \fill[appleOrange, rounded corners=0.6pt]
    (\cx+0.03, \barAY) rectangle (\cx+\cw-0.03, \barAY+0.20);}
\node[font=\tiny, text=appleOrange!40!black] at (\matBX, \barAY-0.17)
    {$\eta_t$ per col (uniform)};

\node[card, minimum width=1.9cm, minimum height=1.4cm] (outputA) at (13.95, \matAY) {};
\node[font=\footnotesize\bfseries, text=appleDark] at ($(outputA.north)+(0,-0.25)$) {Output};
\node[pill={applePurple}{1.6cm}] at ($(outputA.center)+(0,-0.05)$)
    {$\mathbf{S}_t^{\!\top}\mathbf{q}_t$};

\draw[arr]    (xtA.east) -- ++(0.20,0) |- (formulaA.west |- xtA);
\draw[arr]    (ytA.east) -- ++(0.20,0) |- (formulaA.west |- ytA);
\draw[arr]    (stateAprev.east) -- (formulaA.west);
\draw[arrW]   (formulaA.east)   -- (stateAcur.west);
\draw[arrR]   (stateAcur.east)  -- (outputA.west);
\draw[arrEta] (etaA.south)      -- (formulaA.north);
\draw[arrR]   (qtA.south) -- ++(0,-1.20) -| (outputA.south);

% ============================================================
% PANEL B: FALCON-2A -- per-channel inner-product write (yshift=-6.3cm)
% ============================================================
\begin{scope}[yshift=-6.3cm]

\node[font=\large\bfseries, text=appleDark] at (7.5, 6.55)
    {B. Falcon-2A: per-channel inner-product write};
\node[font=\footnotesize, text=appleGray] at (7.5, 6.15)
    {vector $\boldsymbol{\eta}_t\in\mathbb{R}^{d_v}$, one write gain per value channel};

\node[card, minimum width=2.3cm, minimum height=2.9cm] (inputB) at (1.5, 3.3) {};
\node[font=\footnotesize\bfseries, text=appleDark] at ($(inputB.north)+(0,-0.28)$) {Token $t$};
\node[pill={appleBlue}{1.85cm}] (xtB) at ($(inputB.center)+(0,0.55)$)
    {$\mathbf{x}_t\!=\!\phi(\mathbf{k}_{t-1})$};
\node[pill={appleGreen!55!black}{1.85cm}] (ytB) at ($(inputB.center)+(0,0.00)$)
    {$\mathbf{y}_t\!=\!\mathbf{v}_t$};
\node[pill={applePurple}{1.85cm}] (qtB) at ($(inputB.center)+(0,-0.55)$)
    {$\mathbf{q}_t$};

\node[card, minimum width=2.7cm, minimum height=3.3cm] (stateBprev) at (\matAX, \matAY) {};
\node[font=\footnotesize\bfseries, text=appleDark] at (\matAX, \matAY+1.35) {$\mathbf{S}_{t-1}$};
\node[font=\tiny, text=appleGray] at (\matAX, \matAY+1.10) {$d_x\!\times\!d_v$};
\foreach \r in {0,...,5}{\foreach \c in {0,...,7}{%
  \pgfmathsetmacro{\cx}{\gridAleft + \c*\cw}\pgfmathsetmacro{\cy}{\gridAbot + \r*\rh}%
  \fill[appleLightBlue, rounded corners=0.6pt]
    (\cx+0.025, \cy+0.025) rectangle (\cx+\cw-0.025, \cy+\rh-0.025);}}

% Vector \eta_t shown as a horizontal strip with per-channel intensities
\def\etaBY{5.30}
\node[font=\scriptsize\bfseries, text=appleOrange!40!black]
    at (8.0, \etaBY+0.42) {$\boldsymbol{\eta}_t\!\in\!\mathbb{R}^{d_v}$};
\foreach \i/\op in {0/22, 1/78, 2/38, 3/88, 4/18, 5/62, 6/48, 7/32}{%
  \pgfmathsetmacro{\cx}{8.0 - 4*\cw + \i*\cw}%
  \fill[appleOrange!\op, rounded corners=0.6pt]
    (\cx+0.03, \etaBY-0.18) rectangle (\cx+\cw-0.03, \etaBY+0.18);}
\pgfmathsetmacro{\etaBL}{8.0-4*\cw-0.02}
\pgfmathsetmacro{\etaBR}{8.0+4*\cw+0.02}
\pgfmathsetmacro{\etaBT}{\etaBY+0.21}
\pgfmathsetmacro{\etaBB}{\etaBY-0.21}
\draw[draw=appleOrange!50, line width=0.4pt, rounded corners=1.5pt]
    (\etaBL, \etaBB) rectangle (\etaBR, \etaBT);

\node[card, minimum width=2.9cm, minimum height=1.5cm] (formulaB) at (8.0, 3.3) {};
\node[font=\footnotesize\bfseries, text=appleDark] at ($(formulaB.north)+(0,-0.25)$) {Update};
\node[font=\scriptsize, text=appleDarkGray, align=center] at ($(formulaB.center)+(0,-0.07)$)
    {$\mathbf{S}_t\!=\!\mathbf{S}_{t-1}\!\bigl(\mathbf{I}{-}\lambda_t\!\operatorname{Diag}(\boldsymbol{\eta}_t)\bigr)$\\[-1pt]
     $+\,\mathbf{x}_t(\boldsymbol{\eta}_t\!\odot\!\mathbf{y}_t)^{\!\top}$};

\node[card, minimum width=2.7cm, minimum height=3.3cm] (stateBcur) at (\matBX, \matAY) {};
\node[font=\footnotesize\bfseries, text=appleDark] at (\matBX, \matAY+1.35) {$\mathbf{S}_t$};
\node[font=\tiny, text=appleGray] at (\matBX, \matAY+1.10) {cols updated by $\eta_{j,t}$};
\foreach \c/\op in {0/22, 1/78, 2/38, 3/88, 4/18, 5/62, 6/48, 7/32}{%
  \foreach \r in {0,...,5}{%
    \pgfmathsetmacro{\cx}{\gridBleft + \c*\cw}\pgfmathsetmacro{\cy}{\gridAbot + \r*\rh}%
    \fill[appleOrange!\op, rounded corners=0.6pt]
      (\cx+0.025, \cy+0.025) rectangle (\cx+\cw-0.025, \cy+\rh-0.025);}}

\foreach \c/\h in {0/0.06, 1/0.22, 2/0.11, 3/0.25, 4/0.05, 5/0.18, 6/0.14, 7/0.09}{%
  \pgfmathsetmacro{\cx}{\gridBleft + \c*\cw}%
  \fill[appleOrange, rounded corners=0.6pt]
    (\cx+0.03, \barAY) rectangle (\cx+\cw-0.03, \barAY+\h);}
\node[font=\tiny, text=appleOrange!40!black] at (\matBX, \barAY-0.17)
    {$\eta_{j,t}$ per col (varies)};

\node[card, minimum width=1.9cm, minimum height=1.4cm] (outputB) at (13.95, \matAY) {};
\node[font=\footnotesize\bfseries, text=appleDark] at ($(outputB.north)+(0,-0.25)$) {Output};
\node[pill={applePurple}{1.6cm}] at ($(outputB.center)+(0,-0.05)$)
    {$\mathbf{S}_t^{\!\top}\mathbf{q}_t$};

\draw[arr]    (xtB.east) -- ++(0.20,0) |- (formulaB.west |- xtB);
\draw[arr]    (ytB.east) -- ++(0.20,0) |- (formulaB.west |- ytB);
\draw[arr]    (stateBprev.east) -- (formulaB.west);
\draw[arrW]   (formulaB.east)   -- (stateBcur.west);
\draw[arrR]   (stateBcur.east)  -- (outputB.west);
\draw[arrEta] (8.0, \etaBB)     -- (formulaB.north);
\draw[arrR]   (qtB.south) -- ++(0,-1.20) -| (outputB.south);

\end{scope}

% ============================================================
% PANEL C: FALCON-3A -- sliding-window inner-product write (yshift=-12.6cm)
% ============================================================
\begin{scope}[yshift=-12.6cm]

\node[font=\large\bfseries, text=appleDark] at (7.5, 6.55)
    {C. Falcon-3A: sliding-window inner-product write};
\node[font=\footnotesize, text=appleGray] at (7.5, 6.15)
    {scalar $\eta_t$, write $\bar{\mathbf{N}}_t^{(B)}$ averaged over window $\mathcal{I}_t$ of size $B$};

% Input card -- Window with explicit B mini-cells
\node[card, minimum width=2.3cm, minimum height=2.9cm] (inputC) at (1.5, 3.3) {};
\node[font=\footnotesize\bfseries, text=appleDark] at ($(inputC.north)+(0,-0.28)$)
    {Window $\mathcal{I}_t$ ($B\!=\!4$)};

% Window-cell geometry (4 cells, 0.36cm wide, 0.05cm gap)
\pgfmathsetmacro{\miniW}{0.36}
\pgfmathsetmacro{\miniG}{0.05}
\pgfmathsetmacro{\stripL}{1.5 - 2*\miniW - 1.5*\miniG}
\pgfmathsetmacro{\xLastC}{\stripL + 3*(\miniW+\miniG)}

% --- X-window: 4 blue mini-cells, gradient (older -> newer) ---
\foreach \i/\op in {0/22, 1/42, 2/65, 3/95}{%
  \pgfmathsetmacro{\xL}{\stripL + \i*(\miniW+\miniG)}%
  \fill[appleBlue!\op, rounded corners=0.6pt]
    (\xL, 3.70) rectangle (\xL+\miniW, 4.00);}
% Latest cell (j=t): thicker border
\draw[appleBlue, line width=0.7pt, rounded corners=0.6pt]
  (\xLastC, 3.70) rectangle (\xLastC+\miniW, 4.00);
% Phantom anchor matching strip right edge
\node[draw=none, fill=none, inner sep=0pt,
      minimum width=1.59cm, minimum height=0.30cm]
  (xtC) at (1.5, 3.85) {};
\node[font=\scriptsize\bfseries, text=appleBlue] at (0.50, 3.85) {$\mathbf{x}_j$};

% --- V-window: 4 green mini-cells, same gradient ---
\foreach \i/\op in {0/22, 1/42, 2/65, 3/95}{%
  \pgfmathsetmacro{\xL}{\stripL + \i*(\miniW+\miniG)}%
  \fill[appleGreen!\op, rounded corners=0.6pt]
    (\xL, 3.15) rectangle (\xL+\miniW, 3.45);}
\draw[appleGreen!55!black, line width=0.7pt, rounded corners=0.6pt]
  (\xLastC, 3.15) rectangle (\xLastC+\miniW, 3.45);
\node[draw=none, fill=none, inner sep=0pt,
      minimum width=1.59cm, minimum height=0.30cm]
  (ytC) at (1.5, 3.30) {};
\node[font=\scriptsize\bfseries, text=appleGreen!55!black] at (0.50, 3.30) {$\mathbf{v}_j$};

% q_t pill (single, regular)
\node[pill={applePurple}{1.85cm}] (qtC) at (1.5, 2.75) {$\mathbf{q}_t$};

% S_{t-1} grid
\node[card, minimum width=2.7cm, minimum height=3.3cm] (stateCprev) at (\matAX, \matAY) {};
\node[font=\footnotesize\bfseries, text=appleDark] at (\matAX, \matAY+1.35) {$\mathbf{S}_{t-1}$};
\node[font=\tiny, text=appleGray] at (\matAX, \matAY+1.10) {$d_x\!\times\!d_v$};
\foreach \r in {0,...,5}{\foreach \c in {0,...,7}{%
  \pgfmathsetmacro{\cx}{\gridAleft + \c*\cw}\pgfmathsetmacro{\cy}{\gridAbot + \r*\rh}%
  \fill[appleLightBlue, rounded corners=0.6pt]
    (\cx+0.025, \cy+0.025) rectangle (\cx+\cw-0.025, \cy+\rh-0.025);}}

% scalar \eta_t
\node[circle, draw=appleOrange, fill=appleLightOrange, line width=0.7pt,
      minimum size=0.95cm, font=\footnotesize\bfseries, text=appleOrange!40!black]
    (etaC) at (8.0, 5.30) {$\eta_t$};

% Update formula C (windowed inner-product write)
\node[card, minimum width=3.3cm, minimum height=1.7cm] (formulaC) at (8.0, 3.3) {};
\node[font=\footnotesize\bfseries, text=appleDark] at ($(formulaC.north)+(0,-0.25)$) {Update};
\node[font=\scriptsize, text=appleDarkGray, align=center] at ($(formulaC.center)+(0,-0.10)$)
    {$\mathbf{S}_t\!=\!(1{-}\eta_t\lambda_t)\mathbf{S}_{t-1}\!+\!\eta_t\bar{\mathbf{N}}_t^{(B)}$\\[2pt]
     $\bar{\mathbf{N}}_t^{(B)}\!=\!\tfrac{1}{B_t}\!\sum\limits_{j\in\mathcal{I}_t}\!\mathbf{x}_j\mathbf{v}_{j}^{\!\top}$};

% S_t (uniform orange columns, sliding-window inner-product)
\node[card, minimum width=2.7cm, minimum height=3.3cm] (stateCcur) at (\matBX, \matAY) {};
\node[font=\footnotesize\bfseries, text=appleDark] at (\matBX, \matAY+1.35) {$\mathbf{S}_t$};
\node[font=\tiny, text=appleGray] at (\matBX, \matAY+1.10) {windowed write step};
\foreach \r in {0,...,5}{\foreach \c in {0,...,7}{%
  \pgfmathsetmacro{\cx}{\gridBleft + \c*\cw}\pgfmathsetmacro{\cy}{\gridAbot + \r*\rh}%
  \fill[appleOrange!55, rounded corners=0.6pt]
    (\cx+0.025, \cy+0.025) rectangle (\cx+\cw-0.025, \cy+\rh-0.025);}}

% Per-column \eta bar (uniform, scalar)
\foreach \c in {0,...,7}{\pgfmathsetmacro{\cx}{\gridBleft + \c*\cw}%
  \fill[appleOrange, rounded corners=0.6pt]
    (\cx+0.03, \barAY) rectangle (\cx+\cw-0.03, \barAY+0.20);}
\node[font=\tiny, text=appleOrange!40!black] at (\matBX, \barAY-0.17)
    {$\eta_t$ per col (uniform)};

% Output C
\node[card, minimum width=1.9cm, minimum height=1.4cm] (outputC) at (13.95, \matAY) {};
\node[font=\footnotesize\bfseries, text=appleDark] at ($(outputC.north)+(0,-0.25)$) {Output};
\node[pill={applePurple}{1.6cm}] at ($(outputC.center)+(0,-0.05)$)
    {$\mathbf{S}_t^{\!\top}\mathbf{q}_t$};

% Arrows C
\draw[arr]    (xtC.east) -- ++(0.20,0) |- (formulaC.west |- xtC);
\draw[arr]    (ytC.east) -- ++(0.20,0) |- (formulaC.west |- ytC);
\draw[arr]    (stateCprev.east) -- (formulaC.west);
\draw[arrW]   (formulaC.east)   -- (stateCcur.west);
\draw[arrR]   (stateCcur.east)  -- (outputC.west);
\draw[arrEta] (etaC.south)      -- (formulaC.north);
\draw[arrR]   (qtC.south) -- ++(0,-1.20) -| (outputC.south);

\end{scope}

\end{tikzpicture}}
\caption{\textbf{Falcon-1A vs.\ Falcon-2A vs.\ Falcon-3A.}
Each panel shows one inner-product fast-weight update step on the fixed-size
state $\mathbf{S}_{t-1}\!\in\!\mathbb{R}^{d_x\times d_v}$. We write
$\mathbf{q}_t:=\phi(\qb_t)$ for the query feature. Unlike the regression family
(Falcon-1/2/3), these rules write the target directly rather than a residual:
there is no $\mathbf{S}_{t-1}^{\!\top}\mathbf{x}_t$ subtraction inside the
write. Forgetting acts only on the carry---through a scalar factor in
Falcon-1A/Falcon-3A and through per-column factors in Falcon-2A.
\textbf{(A)~Falcon-1A} applies one scalar $\eta_t$ to all $d_v$ value channels;
every column of $\mathbf{S}_t$ receives the same energy-normalized write gain
$\eta_t=\beta_t/(\|\mathbf{x}_t\|_2^2+\lambda_t+\varepsilon)$ (uniform orange shading and flat per-column bar). \textbf{(B)~Falcon-2A} promotes the write gain to a vector
$\boldsymbol{\eta}_t\!\in\!\mathbb{R}^{d_v}$ shown as the orange strip above the
update card: each column of $\mathbf{S}_t$ receives its own gain $\eta_{j,t}$, while the write feature $\mathbf{x}_t$
and target $\mathbf{v}_t$ remain shared.
\textbf{(C)~Falcon-3A} returns to a scalar $\eta_t$ but writes the windowed
average cross-covariance
$\bar{\mathbf{N}}_t^{(B)}=B_t^{-1}\!\sum_{j\in\mathcal{I}_t}\mathbf{x}_j\mathbf{v}_j^{\!\top}$
over a sliding window $\mathcal{I}_t$ of $B$ recent causal pairs; the blue/green
strips show four $\{\mathbf{x}_j,\mathbf{v}_j\}_{j\in\mathcal{I}_t}$ entries. The denominator
$\eta_t=\beta_t/(\bar E_t^{(B)}+\lambda_t+\varepsilon)$ uses the window
energy $\bar E_t^{(B)}$ as a write-magnitude normalizer. The read
$\mathbf{o}_t=\mathbf{S}_t^{\!\top}\mathbf{q}_t$ uses the updated state in all three cases.}
\label{fig:falcon123A-overview}
\end{figure}

\subsection{Inner Product Loss (Linear Attention and Mamba-2)}
\label{sec:inner_product_loss}

We now consider an Inner Product objective that encourages alignment between the state prediction and the target. We write it in minimization form and optionally add an $L_2$ penalty:
\begin{equation}
\label{eq:ip_loss}
\ell_t^{\rm ip}(\Sbb)
\triangleq -\langle \Sbb^\top \mathbf{x}_t, \mathbf{y}_t\rangle + \frac{\lambda_t}{2}\|\Sbb\|_F^2,
\qquad \lambda_t\ge 0.
\end{equation}
When $\lambda_t=0$, the objective is linear in $\Sbb$ (no finite minimizer) and gradient descent reduces to purely additive Hebbian writes.

\paragraph{Standard vs.\ next-latent alignment.}
If we choose the unshifted features $(\mathbf{x}_t,\mathbf{y}_t)=(\phi(\mathbf{k}_t),\vb_t)$ (or $(\mathbf{k}_t,\vb_t)$ in the unkernelized case), the additive update below matches the usual Linear Attention write $\phi(\mathbf{k}_t)\vb_t^\top$ (Eq.~\eqref{eq:lin_attn_rec}). Our next-latent framework instead uses $(\mathbf{x}_t,\mathbf{y}_t)=(\phi(\mathbf{k}_{t-1}),\vb_t)$, yielding a one-step shifted write stream.

\paragraph{\Falcon-A variants and notation.}
We use the suffix ``A'' for the inner-product objective and keep the numerical
index aligned with the regression family. Thus \Falcon-1A is the scalar
non-sliding inner-product rule, \Falcon-2A is the per-column non-sliding
inner-product rule, and \Falcon-3A is the sliding-window inner-product rule in
Section~\ref{sec:methods_sliding_ip}. As in \Falcon-1/\Falcon-2/\Falcon-3,
$\beta_t$ denotes the dimensionless gain, $\lambda_t$ the actual shrinkage
coefficient used by the recurrence (obtained directly or via the same
scale-coupled construction described above), and $\eta_t$ the resulting step
size. For the inner-product family, this step size should be read as an
energy-normalized write gain rather than as a curvature-matched denominator.
Any decay fraction $\alpha_t:=\eta_t\lambda_t$ is derived rather than
independently parameterized.

\begin{figure}[ht!]
\centering
\resizebox{0.7\linewidth}{!}{%
\begin{tikzpicture}[
font=,
>=Latex,
base_node/.style={
  thick,
  rounded corners=8pt,
  blur shadow={shadow blur steps=5, shadow opacity=15}
},
tensor/.style={
  base_node,
  draw=appleBlue,
  fill=appleLightBlue,
  minimum height=2.4em,
  minimum width=2.4em,
  font=\bfseries
},
value_tensor/.style={
  base_node,
  draw=appleGreen,
  fill=appleLightGreen,
  minimum height=2.4em,
  minimum width=2.4em,
  font=\bfseries
},
query_tensor/.style={
  base_node,
  draw=appleRed,
  fill=appleLightRed,
  minimum height=2.4em,
  minimum width=2.4em,
  font=\bfseries
},
state/.style={
  base_node,
  draw=appleGray,
  fill=appleLightGray,
  rounded corners=10pt,
  minimum height=3.4em,
  minimum width=3.4em,
  align=center,
  font=\bfseries
},
op/.style={
  circle,
  fill=appleLightGray,
  draw=appleGray,
  inner sep=2pt,
  thick,
  font=\bfseries\small
},
arrow/.style={
  ->,
  thick,
  color=appleDarkGray,
  line width=1.2pt,
  rounded corners=4pt
},
update_arrow/.style={
  arrow,
  color=appleGreen!65!black
},
read_arrow/.style={
  arrow,
  color=appleRed
},
decay_arrow/.style={
  arrow,
  dashed,
  color=appleOrange!75!black,
  line width=1.6pt,
  shorten >=1pt
},
section_label/.style={
  font=\bfseries\Large,
  anchor=north west,
  color=appleDark
},
equiv_badge/.style={
  base_node,
  draw=appleDark,
  fill=white,
  rounded corners=14pt,
  inner sep=8pt,
  minimum width=1.6cm,
  minimum height=1.6cm,
  font=\bfseries\Huge,
  text=appleDark
},
eqbox/.style={
  base_node,
  fill=white,
  draw=appleGray!50,
  inner sep=9pt
}
]

% =========================================================
% PANEL A: Recurrent Form (Top)
% =========================================================
\begin{scope}[local bounding box=panelA]

\node[font=\bfseries\small, text=appleGray] (timeLabelA) at (-1.6,2.7) {time $\rightarrow$};

\node[state] (S0) at (0,0) {$\mathbf{S}_0$};
\node[state, right=1.5cm of S0] (S1) {$\mathbf{S}_1$};
\node[state, right=1.5cm of S1] (S2) {$\mathbf{S}_2$};
\node[font=\bfseries, right=0.6cm of S2] (sdots) {$\cdots$};
\node[state, right=0.6cm of sdots] (St) {$\mathbf{S}_t$};

\node[tensor, scale=0.85, above=1.0cm of S1, xshift=-7pt] (k1) {$\mathbf{k}_{0}$};
\node[value_tensor, scale=0.85, right=0.12cm of k1] (v1) {$\mathbf{v}_1$};

\node[tensor, scale=0.85, above=1.0cm of S2, xshift=-7pt] (k2) {$\mathbf{k}_{1}$};
\node[value_tensor, scale=0.85, right=0.12cm of k2] (v2) {$\mathbf{v}_2$};

\node[tensor, scale=0.85, above=1.0cm of St, xshift=-7pt] (kt) {$\mathbf{k}_{t-1}$};
\node[value_tensor, scale=0.85, right=0.12cm of kt] (vt) {$\mathbf{v}_t$};

\draw[arrow] (S0) -- (S1);
\draw[arrow] (S1) -- (S2);
\draw[arrow] (S2) -- (sdots);
\draw[arrow] (sdots) -- (St);

\draw[update_arrow] ($(k1.south)!0.5!(v1.south)$) -- (S1.north);
\draw[update_arrow] ($(k2.south)!0.5!(v2.south)$) -- (S2.north);
\draw[update_arrow] ($(kt.south)!0.5!(vt.south)$) -- (St.north);

\draw[decay_arrow] (S0.north east) to[bend left=24] (S1.north west);
\draw[decay_arrow] (S1.north east) to[bend left=24] (S2.north west);
\draw[decay_arrow] (S2.north east) to[bend left=24] (sdots.north);
\draw[decay_arrow] (sdots.north) to[bend left=24] (St.north west);

\node[query_tensor, scale=0.9, below=1.0cm of St] (qt) {$\mathbf{q}_t$};
\node[base_node, dashed, draw=appleGray, fill=white,
      minimum width=1.7cm, minimum height=2.2em,
      right=1.3cm of St] (ytA) {$\mathbf{o}_t$};

\draw[read_arrow] (qt.north) -- (St.south);
\draw[arrow, color=appleRed!85!black] (St.east) -- (ytA.west);
\node[font=\bfseries\scriptsize, text=appleRed,
      above=0.04cm of ytA, anchor=south]
  {read $\mathbf{S}_t^{\!\top}\phi(\mathbf{q}_t)$};

\node[eqbox, below=2.1cm of S2, align=left, xshift=-0.6cm] (eqA) {%
\begin{minipage}{8.6cm}
\centering
$\displaystyle
\begin{aligned}
\mathbf{x}_t &:= \phi(\mathbf{k}_{t-1}),\qquad
\gamma_t := 1-\eta_t\lambda_t\\[2pt]
\mathbf{S}_t &= \gamma_t\,\mathbf{S}_{t-1}
              + \eta_t\,\mathbf{x}_t\mathbf{v}_t^{\!\top}\\
\mathbf{o}_t &= \mathbf{S}_t^{\!\top}\phi(\mathbf{q}_t)
\end{aligned}$\\[6pt]
{\footnotesize
\tikz[baseline=-0.55ex]\draw[->,color=appleGreen!55!black,line width=1.4pt](0,0)--(0.45,0);\;
\textbf{write}\,$\eta_s\mathbf{x}_s\mathbf{v}_s^{\!\top}$
\quad\quad
\tikz[baseline=-0.55ex]\draw[->,color=appleOrange!75!black,dashed,line width=1.4pt](0,0)--(0.45,0);\;
\textbf{decay}\,$\gamma_s$ (scalar)}
\end{minipage}
};

\node[base_node, fill=appleLightBlue!60, draw=appleBlue!50,
      inner sep=6pt, font=\bfseries\scriptsize, text=appleBlue,
      right=0.55cm of eqA, align=center] (complA)
  {$\mathcal{O}(N)$ time, $\mathcal{O}(1)$ state\\sequential (causal scan)};

\end{scope}

% =========================================================
% CENTER: Equivalence strip A -> B
% =========================================================
\coordinate (xRefEquiv) at ($(panelA.south)+(-3.5,0)$);
\coordinate (yPosAB) at ($(panelA.south)+(0,-1.7)$);
\node[equiv_badge] (equiv) at (xRefEquiv |- yPosAB) {$\equiv$};

\node[eqbox, draw=appleDark!40, fill=appleLightGray!50,
      right=0.45cm of equiv, inner sep=8pt,
      font=\small, text=appleDark, align=center] (derivBox)
{\textbf{Unroll the scalar recurrence with}
$\delta_t:=\prod_{r=1}^{t}\gamma_r$:\;
$\mathbf{o}_t = \delta_t\mathbf{S}_0^{\!\top}\phi(\mathbf{q}_t)
+\sum_{j\le t} M_{t,j}\langle\phi(\mathbf{q}_t),\mathbf{x}_j\rangle\mathbf{v}_j$%
};

% =========================================================
% PANEL B: Parallel Form
% =========================================================
\begin{scope}[shift={(0,-11.2)}, local bounding box=panelB]

\node[base_node, draw=appleRed, fill=appleLightRed,
      minimum width=1.0cm, minimum height=2.4cm,
      align=center, font=\bfseries] (Qmat) at (0,0)
  {$\phi(\mathbf{Q})$};
\node[font=\scriptsize, text=appleRed, above=0.05cm of Qmat]
  {$N\!\times\!d$};

\node[op, right=0.45cm of Qmat] (mul1) {$\times$};

\node[base_node, draw=appleBlue, fill=appleLightBlue,
      minimum width=2.0cm, minimum height=1.0cm,
      align=center, font=\bfseries, right=0.45cm of mul1] (XTmat)
  {$\mathbf{X}^{\!\top}$};
\node[font=\scriptsize, text=appleBlue, above=0.05cm of XTmat]
  {$d\!\times\!N$\,(shifted)};

\node[font=\bfseries\Large, text=appleDark, right=0.5cm of XTmat] (eq1) {$=$};

\node[base_node, draw=appleGray, fill=appleLightGray,
      minimum width=2.4cm, minimum height=2.4cm,
      align=center, font=\bfseries, right=0.5cm of eq1] (Scores)
  {};
\begin{scope}
  \clip[rounded corners=8pt] ([xshift=2pt,yshift=-2pt]Scores.north west)
        rectangle ([xshift=-2pt,yshift=2pt]Scores.south east);
  \shade[bottom color=appleOrange!8, top color=appleOrange!55,
         rounded corners=8pt]
    ([xshift=2pt,yshift=-2pt]Scores.north west) --
    ([xshift=-2pt,yshift=2pt]Scores.south east) --
    ([xshift=2pt,yshift=2pt]Scores.south west) -- cycle;
\end{scope}
\foreach \i in {0.2,0.5,0.8,1.1,1.4,1.7,2.0} {
  \draw[appleRed!22, line width=0.6pt]
    ([xshift=\i cm,yshift=-2pt]Scores.north west) --
    ([xshift=-2pt,yshift=-\i cm]Scores.north east);
}
\node[font=\bfseries\footnotesize, text=appleDark, align=center]
  at (Scores.center) {$\phi(\mathbf{Q})\mathbf{X}^{\!\top}$\\$\odot\,\mathbf{M}$};
\node[font=\scriptsize, text=appleGray, above=0.05cm of Scores]
  {$N\!\times\!N$ (decay-weighted causal)};

\node[op, right=0.45cm of Scores] (mul2) {$\times$};

\node[base_node, draw=appleGreen, fill=appleLightGreen,
      minimum width=1.2cm, minimum height=2.4cm,
      align=center, font=\bfseries, right=0.45cm of mul2] (Vmat)
  {$\mathbf{V}$};
\node[font=\scriptsize, text=appleGreen!55!black, above=0.05cm of Vmat]
  {$N\!\times\!d_v$};

\node[font=\bfseries\Large, text=appleDark, right=0.55cm of Vmat] (eq2) {$=$};

\node[base_node, draw=appleDark, fill=white,
      minimum width=1.2cm, minimum height=2.4cm,
      align=center, font=\bfseries, right=0.55cm of eq2] (Out)
  {$\mathbf{O}$};
\node[font=\scriptsize, text=appleDark, above=0.05cm of Out]
  {$N\!\times\!d_v$};

\draw[arrow, color=appleRed!75!black] (Qmat.east) -- (mul1.west);
\draw[arrow, color=appleBlue] (XTmat.west) -- (mul1.east);
\draw[arrow] (XTmat.east) -- (eq1.west);
\draw[arrow] (eq1.east) -- (Scores.west);
\draw[arrow] (Scores.east) -- (mul2.west);
\draw[arrow, color=appleGreen!55!black] (Vmat.west) -- (mul2.east);
\draw[arrow] (Vmat.east) -- (eq2.west);
\draw[arrow] (eq2.east) -- (Out.west);

\node[base_node, fill=white, draw=appleOrange!55,
      minimum width=4.6cm, minimum height=1.4cm, align=left,
      font=\footnotesize, text=appleOrange!40!black,
      below=0.95cm of Scores, inner sep=8pt] (maskBox) {%
\textbf{Decay-mask entries} ($j\le t$):\\
$\displaystyle M_{t,j} = \eta_j \prod_{r=j+1}^{t}\gamma_r,$\\[1pt]
$\displaystyle\boldsymbol{\delta} := (\delta_1,\ldots,\delta_N),\;\delta_t=\prod_{r\le t}\gamma_r$};

\draw[arrow, dashed, color=appleOrange!65!black] (maskBox.north) -- (Scores.south);

\node[eqbox, below=0.45cm of maskBox, align=center, xshift=-1.0cm] (eqB) {%
$\displaystyle
\mathbf{O} \;=\; \operatorname{Diag}(\boldsymbol{\delta})\,\phi(\mathbf{Q})\mathbf{S}_0
\;+\;
\bigl(\phi(\mathbf{Q})\mathbf{X}^{\!\top}\odot \mathbf{M}\bigr)\,\mathbf{V}$%
};

\node[base_node, fill=appleLightRed!55, draw=appleRed!50,
      inner sep=6pt, font=\bfseries\scriptsize, text=appleRed,
      right=0.55cm of eqB, align=center] (complB)
  {$\mathcal{O}(N^2)$ time\\fully parallel over $t$};

\end{scope}

% =========================================================
% CENTER: Equivalence strip B -> C
% =========================================================
\coordinate (yPosBC) at ($(panelB.south)+(-2,-1.75)$);
\node[equiv_badge] (equiv2) at (xRefEquiv |- yPosBC) {$\equiv$};

\node[eqbox, draw=appleDark!40, fill=appleLightGray!50,
      right=0.45cm of equiv2, inner sep=9pt,
      font=\small, text=appleDark, align=center] (derivBox2)
{\textbf{Regroup tokens into }$M$\textbf{ chunks of size }$C$, $N=MC$; chunk-exit decay $g^{(k)}=\prod_{t\in k}\gamma_t$:\\[3pt]
$\displaystyle
\mathbf{O}^{(k)}=\operatorname{Diag}(\boldsymbol{\delta}^{(k)})\,\phi(\mathbf{Q}^{(k)})\mathbf{S}^{(k-1)}
\;+\;
\bigl(\phi(\mathbf{Q}^{(k)})\mathbf{X}^{(k)\,\!\top}\!\odot\mathbf{M}^{(k)}\bigr)\mathbf{V}^{(k)},\;\;
\mathbf{S}^{(k)} = g^{(k)}\mathbf{S}^{(k-1)} + \mathbf{X}^{(k)\,\!\top}\widehat{\mathbf{V}}^{(k)}$%
};

% =========================================================
% PANEL C: Chunk-wise Parallel Form
% =========================================================
\begin{scope}[shift={(0,-22.7)}, local bounding box=panelC,
  chunk_block/.style={
    base_node,
    draw=appleGray!60,
    fill=appleLightGray!45,
    minimum width=2.6cm,
    minimum height=2.05cm,
    rounded corners=10pt
  }
]

\node[state, minimum width=2.4em, minimum height=2.4em] (Sc0) at (0,0) {$\mathbf{S}^{(0)}$};

% Chunk 1
\node[chunk_block, right=0.7cm of Sc0] (Ck1) {};
\node[font=\bfseries\footnotesize, text=appleDark]
  at ([yshift=0.65cm]Ck1.center) {Chunk 1};
\node[base_node, draw=appleRed, fill=appleLightRed,
      minimum width=0.35cm, minimum height=0.65cm]
  at ([xshift=-0.78cm, yshift=-0.1cm]Ck1.center) (Q1m) {};
\node[font=\bfseries\scriptsize, text=appleRed,
      below=0.02cm of Q1m] {$\mathbf{Q}_{\!1}$};
\node[base_node, draw=appleBlue, fill=appleLightBlue,
      minimum width=0.65cm, minimum height=0.35cm]
  at ([xshift=0.00cm, yshift=-0.1cm]Ck1.center) (X1m) {};
\node[font=\bfseries\scriptsize, text=appleBlue,
      below=0.02cm of X1m] {$\mathbf{X}_{\!1}^{\!\top}$};
\node[base_node, draw=appleGreen, fill=appleLightGreen,
      minimum width=0.35cm, minimum height=0.65cm]
  at ([xshift=0.78cm, yshift=-0.1cm]Ck1.center) (V1m) {};
\node[font=\bfseries\scriptsize, text=appleGreen!55!black,
      below=0.02cm of V1m] {$\mathbf{V}_{\!1}$};

\node[state, minimum width=2.4em, minimum height=2.4em, right=0.7cm of Ck1] (Sc1) {$\mathbf{S}^{(1)}$};

% Chunk 2
\node[chunk_block, right=0.7cm of Sc1] (Ck2) {};
\node[font=\bfseries\footnotesize, text=appleDark]
  at ([yshift=0.65cm]Ck2.center) {Chunk 2};
\node[base_node, draw=appleRed, fill=appleLightRed,
      minimum width=0.35cm, minimum height=0.65cm]
  at ([xshift=-0.78cm, yshift=-0.1cm]Ck2.center) (Q2m) {};
\node[font=\bfseries\scriptsize, text=appleRed,
      below=0.02cm of Q2m] {$\mathbf{Q}_{\!2}$};
\node[base_node, draw=appleBlue, fill=appleLightBlue,
      minimum width=0.65cm, minimum height=0.35cm]
  at ([xshift=0.00cm, yshift=-0.1cm]Ck2.center) (X2m) {};
\node[font=\bfseries\scriptsize, text=appleBlue,
      below=0.02cm of X2m] {$\mathbf{X}_{\!2}^{\!\top}$};
\node[base_node, draw=appleGreen, fill=appleLightGreen,
      minimum width=0.35cm, minimum height=0.65cm]
  at ([xshift=0.78cm, yshift=-0.1cm]Ck2.center) (V2m) {};
\node[font=\bfseries\scriptsize, text=appleGreen!55!black,
      below=0.02cm of V2m] {$\mathbf{V}_{\!2}$};

\node[state, minimum width=2.4em, minimum height=2.4em, right=0.7cm of Ck2] (Sc2) {$\mathbf{S}^{(2)}$};

\node[font=\bfseries, right=0.45cm of Sc2] (cdotsC) {$\cdots$};

% Chunk M
\node[chunk_block, right=0.45cm of cdotsC] (CkM) {};
\node[font=\bfseries\footnotesize, text=appleDark]
  at ([yshift=0.65cm]CkM.center) {Chunk $M$};
\node[base_node, draw=appleRed, fill=appleLightRed,
      minimum width=0.35cm, minimum height=0.65cm]
  at ([xshift=-0.78cm, yshift=-0.1cm]CkM.center) (QMm) {};
\node[font=\bfseries\scriptsize, text=appleRed,
      below=0.02cm of QMm] {$\mathbf{Q}_{\!M}$};
\node[base_node, draw=appleBlue, fill=appleLightBlue,
      minimum width=0.65cm, minimum height=0.35cm]
  at ([xshift=0.00cm, yshift=-0.1cm]CkM.center) (XMm) {};
\node[font=\bfseries\scriptsize, text=appleBlue,
      below=0.02cm of XMm] {$\mathbf{X}_{\!M}^{\!\top}$};
\node[base_node, draw=appleGreen, fill=appleLightGreen,
      minimum width=0.35cm, minimum height=0.65cm]
  at ([xshift=0.78cm, yshift=-0.1cm]CkM.center) (VMm) {};
\node[font=\bfseries\scriptsize, text=appleGreen!55!black,
      below=0.02cm of VMm] {$\mathbf{V}_{\!M}$};

\node[state, minimum width=2.4em, minimum height=2.4em, right=0.7cm of CkM] (ScM) {$\mathbf{S}^{(M)}$};

% State propagation arrows with decay labels g^(k)
\draw[arrow, update_arrow] (Sc0) -- node[above, font=\bfseries\tiny, text=appleOrange!40!black]{$g^{(1)}$} (Ck1);
\draw[arrow, update_arrow] (Ck1) -- (Sc1);
\draw[arrow, update_arrow] (Sc1) -- node[above, font=\bfseries\tiny, text=appleOrange!40!black]{$g^{(2)}$} (Ck2);
\draw[arrow, update_arrow] (Ck2) -- (Sc2);
\draw[arrow, update_arrow] (Sc2) -- (cdotsC);
\draw[arrow, update_arrow] (cdotsC) -- node[above, font=\bfseries\tiny, text=appleOrange!40!black]{$g^{(M)}$} (CkM);
\draw[arrow, update_arrow] (CkM) -- (ScM);

% Output drops
\node[base_node, dashed, draw=appleGray, fill=white,
      minimum width=1.2cm, minimum height=1.9em,
      below=1.0cm of Ck1, font=\bfseries] (O1c) {$\mathbf{O}^{(1)}$};
\node[base_node, dashed, draw=appleGray, fill=white,
      minimum width=1.2cm, minimum height=1.9em,
      below=1.0cm of Ck2, font=\bfseries] (O2c) {$\mathbf{O}^{(2)}$};
\node[base_node, dashed, draw=appleGray, fill=white,
      minimum width=1.2cm, minimum height=1.9em,
      below=1.0cm of CkM, font=\bfseries] (OMc) {$\mathbf{O}^{(M)}$};

\draw[arrow, color=appleRed!85!black] (Ck1.south) -- (O1c.north);
\draw[arrow, color=appleRed!85!black] (Ck2.south) -- (O2c.north);
\draw[arrow, color=appleRed!85!black] (CkM.south) -- (OMc.north);

\node[eqbox, below=2.95cm of Sc2, align=center, xshift=-3.0cm] (eqC) {%
$\displaystyle
\begin{aligned}
\text{(P1, parallel)}\quad
&\mathbf{M}^{(k)}_{i,s}=\mathbb{I}[s\le i]\,\eta_{a_k+s-1}\!\!\prod_{r=s+1}^{i}\!\gamma_{a_k+r-1},\quad
\delta^{(k)}_i=\!\prod_{r=1}^{i}\!\gamma_{a_k+r-1},\quad
g^{(k)}=\delta^{(k)}_C\\[1pt]
\text{(P2, sequential)}\quad
&\mathbf{S}^{(k)} = g^{(k)}\mathbf{S}^{(k-1)} + \mathbf{X}^{(k)\,\!\top}\widehat{\mathbf{V}}^{(k)},\quad
\widehat{\mathbf{V}}^{(k)}_{i,:}\!=\!\eta_{a_k+i-1}\!\!\prod_{r=i+1}^{C}\!\gamma_{a_k+r-1}\,\mathbf{v}_{a_k+i-1}\\[1pt]
\text{(P3, parallel)}\quad
&\mathbf{O}^{(k)} = \underbrace{\operatorname{Diag}(\boldsymbol{\delta}^{(k)})\,\phi(\mathbf{Q}^{(k)})\mathbf{S}^{(k-1)}}_{\text{cross-chunk (decayed history)}}
+\underbrace{\bigl(\phi(\mathbf{Q}^{(k)})\mathbf{X}^{(k)\,\!\top}\!\odot\mathbf{M}^{(k)}\bigr)\mathbf{V}^{(k)}}_{\text{intra-chunk (parallel)}}
\end{aligned}$%
};

\node[base_node, fill=appleLightGreen!55, draw=appleGreen!55!black,
      inner sep=6pt, font=\bfseries\scriptsize, text=appleGreen!45!black,
      right=0.55cm of eqC, align=center] (complC)
  {$\mathcal{O}(NC)$ time\\$M=N/C$ sequential steps};

\end{scope}

% =========================================================
% Backgrounds & titles
% =========================================================
\path
  let \p1=(panelA.west), \p2=(panelB.west), \p5=(panelC.west),
      \p3=(panelA.east), \p4=(panelB.east), \p6=(panelC.east)
  in
    coordinate (commonW) at ({min(\x1,min(\x2,\x5))-0.1},0)
    coordinate (commonE) at ({max(\x3,max(\x4,\x6))+0.1},0);

\node[inner sep=0pt, minimum size=0pt] (anchorA_L) at (commonW |- panelA.center) {};
\node[inner sep=0pt, minimum size=0pt] (anchorA_R) at (commonE |- panelA.center) {};
\node[inner sep=0pt, minimum size=0pt] (anchorB_L) at (commonW |- panelB.center) {};
\node[inner sep=0pt, minimum size=0pt] (anchorB_R) at (commonE |- panelB.center) {};
\node[inner sep=0pt, minimum size=0pt] (anchorC_L) at (commonW |- panelC.center) {};
\node[inner sep=0pt, minimum size=0pt] (anchorC_R) at (commonE |- panelC.center) {};

\begin{scope}[on background layer]
  \node[draw=appleGray!30, left color=appleLightGray!30, right color=appleLightGray,
        rounded corners=16pt,
        fit=(panelA)(eqA)(complA)(anchorA_L)(anchorA_R),
        inner sep=18pt, inner ysep=26pt] (boxA) {};
  \node[draw=appleGray!30, left color=appleLightGray!30, right color=appleLightGray,
        rounded corners=16pt,
        fit=(panelB)(eqB)(complB)(maskBox)(anchorB_L)(anchorB_R),
        inner sep=18pt, inner ysep=26pt] (boxB) {};
  \node[draw=appleGray!30, left color=appleLightGray!30, right color=appleLightGray,
        rounded corners=16pt,
        fit=(panelC)(eqC)(complC)(anchorC_L)(anchorC_R),
        inner sep=18pt, inner ysep=26pt] (boxC) {};
\end{scope}

\node[section_label] at ([yshift=-5pt, xshift=14pt]boxA.north west |- boxA.north)
  {A. Recurrent Form (causal scan with scalar decay)};
\node[section_label] at ([yshift=-5pt, xshift=14pt]boxB.north west |- boxB.north)
  {B. Parallel Form (decay-mask attention)};
\node[section_label] at ([yshift=-5pt, xshift=14pt]boxC.north west |- boxC.north)
  {C. Chunk-wise Parallel Form};

\end{tikzpicture}
}
\caption{\textbf{Three equivalent views of \Falcon-1A.}
(A) The recurrent form maintains a fixed-size matrix state $\mathbf{S}_t\in\mathbb{R}^{d_x\times d_v}$. Compared with linear attention, each carry is multiplied by a scalar decay $\gamma_t=1-\eta_t\lambda_t$, and the write feature is the prefix feature $\mathbf{x}_t:=\phi(\mathbf{k}_{t-1})$ rather than the same-step feature $\phi(\mathbf{k}_t)$.
(B) Unrolling the scalar recurrence yields a masked-attention form with the same shifted features but a \emph{decay-weighted} causal mask: $M_{t,j}=\eta_j\prod_{r=j+1}^{t}\gamma_r$ for $j\le t$ and $0$ otherwise.
(C) The chunk-wise form splits the sequence into $M$ chunks of size $C$.}
\label{fig:falcon1a_equivalence}
\end{figure}

\paragraph{Gradient and update.}
The gradient of Eq.~\eqref{eq:ip_loss} is
\[
\nabla_{\Sbb}\ell_t^{\rm ip}(\Sbb)= -\mathbf{x}_t\mathbf{y}_t^\top + \lambda_t \Sbb.
\]
A scalar gradient step gives the Linear-Attention/Mamba-2 style update, which
we denote \Falcon-1A:
\begin{equation}
\label{eq:Falcon1a_update}
\Sbb_t
= (1-\eta_t\lambda_t)\Sbb_{t-1} + \eta_t\,\mathbf{x}_t\mathbf{y}_t^\top,
\end{equation}
where $\mathbf{x}_t=\phi(\mathbf{k}_{t-1})$ and $\mathbf{y}_t=\vb_t$ under next-latent alignment. Setting $\lambda_t=0$ recovers the usual additive write $\Sbb_t=\Sbb_{t-1}+\eta_t\mathbf{x}_t\mathbf{y}_t^\top$.

\paragraph{\Falcon-2A: per-column inner-product write.}
Because Eq.~\eqref{eq:ip_loss} decomposes over value coordinates, each column
can use its own energy-normalized learning rate. Define
\[
\boldsymbol{\eta}_t:=(\eta_{1,t},\ldots,\eta_{d_v,t})^\top.
\]
The per-column inner-product update is
\begin{equation}
\label{eq:Falcon2a_update}
\Sbb_t
=
\Sbb_{t-1}\!\left(\Ib_{d_v}-\lambda_t\operatorname{Diag}(\boldsymbol{\eta}_t)\right)
+\mathbf{x}_t(\boldsymbol{\eta}_t\odot\mathbf{y}_t)^\top .
\end{equation}
Equivalently, the $j$-th column evolves as
\[
\mathbf{s}_{t,j}
=(1-\eta_{j,t}\lambda_t)\mathbf{s}_{t-1,j}
+\eta_{j,t}y_{t,j}\mathbf{x}_t .
\]
Thus \Falcon-2A is not the scalar inner-product rule; it is the per-column
inner-product analogue of \Falcon-2.

\begin{figure}[ht!]
\centering
\resizebox{0.72\linewidth}{!}{%
% [inline block 2: 1 envs, 20857 chars -> data_tex | \begin{tikzpicture}[ font=,...]

}
\caption{\textbf{Three equivalent views of \Falcon-2A.}
(A) The recurrent form maintains a fixed-size matrix state $\mathbf{S}_t\in\mathbb{R}^{d_x\times d_v}$.
(B) Unrolling the per-column recurrence yields a masked-attention form with the same shifted features but $d_v$ \emph{channel-specific} decay-weighted causal masks: $M^{(c)}_{t,j}=\eta_{c,j}\prod_{r=j+1}^{t}\gamma_{c,r}$ for $j\le t$ and $0$ otherwise.
(C) The chunk-wise form splits the sequence into $M$ chunks of size $C$.}
\label{fig:falcon2a_equivalence}
\end{figure}

\paragraph{\Falcon-1A/\Falcon-2A step sizes.}
Unlike regression, the inner-product objective is $\lambda_t$-smooth
independently of the write-feature energy. Accordingly, the objective-matched
denominator would depend only on $\lambda_t$; in the inner-product
implementations studied here, we instead retain an energy-normalized write gain
to control the magnitude of the additive write. Let $E_t:=\|\mathbf{x}_t\|_2^2$,
and if the scale-coupled parameterization is active set
$\lambda_t:=\bar\lambda_t E_t$ before applying the update. The scalar
\Falcon-1A step size is

\begin{equation}
\label{eq:ip_stepsize}
\eta_t
= \frac{\beta_t}{E_t+\lambda_t+\varepsilon},
\qquad \beta_t\in(0,2),\ \varepsilon\ge 0.
\end{equation}
The per-column \Falcon-2A step sizes are
\begin{equation}
\label{eq:ip_stepsize_vector}
\eta_{j,t}
= \frac{\beta_{j,t}}{E_t+\lambda_t+\varepsilon},
\qquad \beta_{j,t}\in(0,2),\ \varepsilon\ge 0.
\end{equation}
As in Eq.~\eqref{eq:nlms-stepsize}, we set $\eta_t:=0$ (or
$\eta_{j,t}:=0$ for all $j$) when the denominator vanishes, and also at the
boundary sentinel $t=1$ when $\mathbf{x}_1=\mathbf{0}$ is imposed.
When $\lambda_t>0$, these choices satisfy
$\eta_t<2/\lambda_t$ and $\eta_{j,t}<2/\lambda_t$ for any admissible
$\beta_t,\beta_{j,t}$. Since $\ell_t^{\rm ip}$ is $\lambda_t$-smooth
(its Hessian is $\lambda_t\Ib$), Lemma~\ref{lem:per-step-descent} yields
per-step descent for the unclamped scalar update, and the same argument applies
column-wise to Eq.~\eqref{eq:Falcon2a_update}. If the later positive-decay clamp is activated for log-space unrolling, the implemented shrinkage should be 
interpreted as using the effective ridge coefficient $\tilde\lambda_t:=\alpha_t/\eta_t$ in the scalar case, or
$\tilde\lambda_{j,t}:=\alpha_{j,t}/\eta_{j,t}$ in the per-column case, whenever the corresponding step size is positive. The $E_t$ term is not required by
curvature, but stabilizes the write magnitude and yields a sensible $\lambda_t\to 0$ limit.

\paragraph{Decay positivity.}
Some parallel/unrolled forms (Section~\ref{sec:mb_ip_parallel}) use $\gamma_t:=1-\eta_t\lambda_t$ in log space and therefore require $\gamma_t>0$.
In implementations, compute $\alpha_t:=\eta_t\lambda_t$ and, if necessary, clamp $\alpha_t\leftarrow \min(\alpha_t,1-\varepsilon_\gamma)$ before computing $\log\gamma_t=\operatorname{log1p}(-\alpha_t)$ (fp32). The clamp is inactive whenever $\eta_t\lambda_t<1-\varepsilon_\gamma$; in that regime, the dynamics match $\gamma_t=1-\eta_t\lambda_t$ exactly.
As above, the descent statement pertains to the unclamped recurrence. When the clamp activates, the implemented recurrence should be viewed as a numerically safe surrogate whose shrinkage path uses the effective ridge coefficient
\[
\tilde\lambda_t := \alpha_t/\eta_t
\]
whenever $\eta_t>0$ (and $\tilde\lambda_t:=0$ when $\eta_t=0$), while the additive write gain remains $\eta_t$.

\subsection{Mini-batch Update Rule for Regression (Falcon-3)}
\label{sec:methods_sliding_regression}

To better capture local dependencies and reduce noise accumulation, we introduce a \emph{sliding-state} mechanism: instead of updating the state from only the instantaneous residual, we take a single mini-batch gradient step on a finite history window of nominal size $B$.

Falcon-3 can be viewed as a sliding-window specialization of the internal-objective view exemplified by ATLAS~\citep{behrouz2025atlas}, but here instantiated with a linear matrix memory, a squared-error objective, and strict next-latent alignment. Exact continuation across segment boundaries additionally requires a fixed-width tail of the last $B{-}1$ causal pairs; Appendix~\ref{app:sliding_window_remarks} records the details.

\begin{algorithm}[t]
\caption{Falcon-3 (Recurrent Form)}
\label{alg:Falcon3_reference}
\small
\begin{algorithmic}[1]
\Require Query features $\{\mathbf{q}_t\}_{t=1}^{T}$ with $\mathbf{q}_t=\phi(\qb_t)$, shifted write-features $\{\mathbf{x}_t\}_{t=1}^{T}$ with $\mathbf{x}_1=\mathbf{0}$ and $\mathbf{x}_t=\phi(\mathbf{k}_{t-1})$ for $t\ge 2$, values $\{\vb_t\}_{t=1}^{T}$, window size $B$, gains $\beta_t\in(0,2)$, ridge $\lambda_t\ge 0$, stabilizer $\varepsilon>0$, init $\Sbb_0=\mathbf{0}$.
\Ensure Outputs $\mathbf{O}\in\mathbb{R}^{T\times d_v}$ with row $t$ equal to $\ob_t^\top=\mathbf{q}_t^\top\Sbb_t$ (read-after-write), final matrix state $\Sbb_T$, and final FIFO buffer $\mathcal{W}_T$; exact continuation across a later segment requires both $\Sbb_T$ and $\mathcal{W}_T$.
\State Initialize empty FIFO buffer $\mathcal{W}\gets [\,]$ and output buffer $\mathbf{O}\gets \mathbf{0}$.
\For{$t=1,\ldots,T$}
\If{$t=1$}
\State $\eta_t\gets 0$, \quad $\Sbb_t\gets \Sbb_{t-1}$, \quad $\ob_t\gets \Sbb_t^\top\mathbf{q}_t$
\State Write $\ob_t^\top$ into row $t$ of $\mathbf{O}$ and \textbf{continue}
\EndIf
\State Push $(\mathbf{x}_t,\vb_t)$ into $\mathcal{W}$; if $|\mathcal{W}|>B$, pop the oldest pair.
\State $B_t\gets |\mathcal{W}|$, \quad $\mathbf{U}_t\gets \mathbf{0}_{d_x\times d_v}$
\ForAll{$(\mathbf{x},\mathbf{v})\in\mathcal{W}$}
\State $\mathbf{U}_t\gets \mathbf{U}_t + \mathbf{x}\big(\mathbf{v}-\Sbb_{t-1}^\top\mathbf{x}\big)^\top$
\EndFor
\State Let $\mathbf{X}_t\in\mathbb{R}^{d_x\times B_t}$ stack the write-features in $\mathcal{W}$.
\State $\mu_t^{(B)}\gets \lambda_{\max}(\mathbf{X}_t^\top\mathbf{X}_t)/B_t$ \Comment{equiv.\ $\lambda_{\max}(\bar{\Cb}_t^{(B)})$}
\State $\eta_t\gets \beta_t/(\mu_t^{(B)}+\lambda_t+\varepsilon)$
\State $\Sbb_t\gets (1-\eta_t\lambda_t)\Sbb_{t-1} + (\eta_t/B_t)\,\mathbf{U}_t$
\State $\ob_t\gets \Sbb_t^\top\mathbf{q}_t$
\State Write $\ob_t^\top$ into row $t$ of $\mathbf{O}$
\EndFor
\State \Return $(\mathbf{O},\Sbb_T,\mathcal{W})$
\end{algorithmic}
\end{algorithm}

% =========================================================
% FIGURE: Three equivalent views of Falcon-3
%   A. Recurrent Form (sliding mini-batch causal scan)
%   B. Parallel Form  (ParallelFlow tensorInv on time x rank)
%   C. Chunk-wise Parallel Form (three-phase ParallelFlow)
%
% Requires the same preamble as Figure: Linear Attention Equivalence
% and Figure: Three equivalent views of Falcon-1
%   - colors: appleBlue, appleLightBlue, appleRed, appleLightRed,
%     appleGreen, appleLightGreen, appleGray, appleLightGray,
%     appleDarkGray, appleDark, applePurple, appleLightPurple,
%     appleTeal, appleLightTeal
%   - tikzlibraries: positioning, calc, shapes.geometric,
%     arrows.meta, backgrounds, fit, matrix, shadows.blur
% =========================================================
\begin{figure}[ht!]
\centering
\providecolor{applePurple}{RGB}{175, 82, 222}
\providecolor{appleLightPurple}{RGB}{243, 233, 255}
\providecolor{appleTeal}{RGB}{48, 176, 199}
\providecolor{appleLightTeal}{RGB}{213, 240, 245}
\resizebox{0.75\linewidth}{!}{%
% [inline block 3: 1 envs, 20644 chars -> data_tex | \begin{tikzpicture}[ font=,...]

}
\caption{\textbf{Three equivalent views of \Falcon-3.}
(A) The recurrent form maintains a fixed-size matrix state $\mathbf{S}_t\in\mathbb{R}^{d_x\times d_v}$. Each step writes a mini-batch gradient on the window $\mathcal{I}_t$ of the last $B$ causal pairs $\{(\phi(\mathbf{k}_{j-1}),\mathbf{v}_j)\}_{j\in\mathcal{I}_t}$, with all residuals $\mathbf{r}_{j,t}=\mathbf{v}_j-\mathbf{S}_{t-1}^{\!\top}\mathbf{x}_j$ evaluated at the pre-update state.
(B) After the scalar positive-decay renormalization (the identity when $\lambda_t=0$), zero-padding each active window to width $B$ recasts the update as a rank-$B$ affine recurrence $\mathbf{S}_t=\mathbf{S}_{t-1}+\mathbf{A}_t\mathbf{R}_t^{\!\top}\mathbf{S}_{t-1}+\mathbf{A}_t\widetilde{\mathbf{R}}_t^{\!\top}$.
(C) Splitting the sequence into $M$ chunks of size $C$ yields the exact three-phase ParallelFlow algorithm.}
\label{fig:falcon3_equivalence}
\end{figure}

\paragraph{Sequence-parallel training.}
After zero-padding each active window to width $B$, Falcon-3 becomes a fixed-rank-$B$ low-rank recurrence. Algorithm~\ref{alg:Falcon3_reference} gives the reference sequential update, and Algorithm~\ref{alg:Falcon3_forward} gives the chunk-parallel ParallelFlow implementation based on the positive-decay reduction in Eq.~\eqref{eq:deltanet3_residual_renorm}. Appendix~\ref{app:deltanet3_parallel} records the driver construction and mask conventions. The inner-product counterpart (Falcon-3A) admits an explicit masked linear-attention form (Section~\ref{sec:mb_ip_parallel}), enabling fully vectorized training over the sequence dimension.

For $t\ge 2$, let the active window indices be $\mathcal{I}_t=\{j \mid \max(2,t-B+1)\le j\le t\}$ and denote the realized window size by $B_t:=|\mathcal{I}_t|\le B$ (so $B_t\ge 1$). Define the write feature $\mathbf{x}_j:=\phi(\mathbf{k}_{j-1})$ for $j\ge 2$ (so $\mathbf{x}_j=\mathbf{k}_{j-1}$ when $\phi$ is the identity), and impose the boundary convention $\mathbf{x}_1:=\mathbf{0}$. We set $\eta_1:=0$, so the $t=1$ write is a no-op; all windowed objectives/updates below are defined for $t\ge 2$.
To make the update magnitude (and hence the effective decay) invariant to the nominal window size $B$, we optimize the window-average squared loss:
\begin{align}
\ell_t^{\rm reg,(B)}(\Sbb)
:=
\frac{1}{2B_t} \sum_{j \in \mathcal{I}_t} \|\Sbb^\top \mathbf{x}_{j} - \vb_j\|_2^2
+ \frac{\lambda_t}{2}\|\Sbb\|_F^2,\qquad (t\ge 2).
\end{align}

\noindent\textbf{Sufficient Statistics.}
We define the sliding covariance $\Cb_t^{(B)}$ and cross-covariance $\Nb_t^{(B)}$ matrices:
\begin{align}
\Cb_t^{(B)} \triangleq \sum_{j \in \mathcal{I}_t} \mathbf{x}_{j}\mathbf{x}_{j}^\top, \qquad
\Nb_t^{(B)} \triangleq \sum_{j \in \mathcal{I}_t} \mathbf{x}_{j}\vb_j^\top.
\end{align}
Define the window-averaged statistics
\[
\bar{\Cb}_t^{(B)} := \frac{1}{B_t}\Cb_t^{(B)}, \qquad
\bar{\Nb}_t^{(B)} := \frac{1}{B_t}\Nb_t^{(B)}.
\]
Then the gradient evaluated at the pre-update state is
\[
\nabla_{\Sbb} \ell_t^{\rm reg,(B)}(\Sbb_{t-1})
= \bar{\Cb}_t^{(B)}\Sbb_{t-1} - \bar{\Nb}_t^{(B)} + \lambda_t \Sbb_{t-1}.
\]

\noindent\textbf{Update Rule.}
We apply a block-normalized gradient step:
\begin{align}
\Sbb_t = \Sbb_{t-1} - \eta_t \nabla_{\Sbb}\ell_t^{\rm reg,(B)}(\Sbb_{t-1}).
\end{align}
Substituting the gradient yields the affine update
\begin{equation}
\label{eq:deltanet3_affine}
\Sbb_t = \big(\Ib_{d_x} - \eta_t(\bar{\Cb}_t^{(B)} + \lambda_t \Ib_{d_x})\big)\Sbb_{t-1} + \eta_t \bar{\Nb}_t^{(B)}.
\end{equation}
Equivalently, collecting residuals at the pre-update state yields
\begin{equation}
\label{eq:deltanet3_residual}
\Sbb_t
=
(1-\eta_t\lambda_t)\Sbb_{t-1}
\;+\;
\frac{\eta_t}{B_t}\sum_{j\in\mathcal{I}_t}\mathbf{x}_j\big(\vb_j-\Sbb_{t-1}^\top\mathbf{x}_j\big)^\top.
\end{equation}
This is the direct mini-batch analogue of Eq.~\eqref{eq:ogd}: all window residuals are evaluated at the pre-update state $\Sbb_{t-1}$, and the update averages their rank-one gradients.

For $t\ge 2$, let $\mathbf{X}_t\in\mathbb{R}^{d_x\times B_t}$ stack the active-window write-features, so that $\bar{\Cb}_t^{(B)}=\mathbf{X}_t\mathbf{X}_t^\top/B_t$. We normalize by the exact local smoothness scale of the windowed ridge objective rather than by its trace upper bound:
\[
\mu_t^{(B)}:=\lambda_{\max}(\bar{\Cb}_t^{(B)})=\frac{\|\mathbf{X}_t\|_2^2}{B_t}=\frac{\lambda_{\max}(\mathbf{X}_t^\top\mathbf{X}_t)}{B_t},
\qquad
\eta_t = \frac{\beta_t}{\mu_t^{(B)} + \lambda_t + \varepsilon},
\qquad \beta_t \in (0,2),\ \varepsilon>0.
\]
Then $L_t^{(B)}=\mu_t^{(B)}+\lambda_t$, so whenever $L_t^{(B)}>0$ this normalization ensures $\eta_t \in (0,2/L_t^{(B)})$ for any $\beta_t\in(0,2)$, and Lemma~\ref{lem:per-step-descent} yields per-step descent for Eq.~\eqref{eq:deltanet3_residual} before any positive-decay clamp. If $L_t^{(B)}=0$ (equivalently, $\bar{\Cb}_t^{(B)}=0$ and $\lambda_t=0$), then $\bar{\Nb}_t^{(B)}=0$ as well and the update is a no-op.
Crucially, because we optimize the window average, $\bar{\Nb}_t^{(B)}$ is an average and $\mu_t^{(B)}$ is the spectral norm of an average covariance, so neither quantity grows linearly with the nominal window size $B$. If the write feature itself is RMS-normalized, then
\[
\mu_t^{(B)}\le \bar E_t^{(B)}:=\operatorname{tr}(\bar{\Cb}_t^{(B)})\approx d_x,
\]
so the denominator remains $O(d_x)$ rather than $O(Bd_x)$; for a generic kernel map $\phi$, the correct statement is simply that $\mu_t^{(B)}$ tracks the realized windowed smoothness scale. Consequently, neither the injection $\eta_t\bar{\Nb}_t^{(B)}$ nor the decay fraction $\alpha_t:=\eta_t\lambda_t$ is systematically amplified by increasing $B$. If the scale-coupled ridge parameterization of Section~\ref{sec:scaled_fast_weight} is enabled, replace $\lambda_t$ throughout this subsection by
$\lambda_t^{\mathrm{eff}}:=\bar\lambda_t\,\mu_t^{(B)}$. In the current implementation, this smoothness statistic can be treated as a statistics-only multiplier when constructing $\lambda_t^{\mathrm{eff}}$ (detached / stop-gradient through the multiplier), while the step-size denominator still uses the live $\mu_t^{(B)}$.
Importantly, one need not materialize the $d_x\times d_x$ matrix $\Cb_t^{(B)}$ to evaluate either the gradient or the step size: if we stack the window write-features into $\mathbf{X}_t \in \mathbb{R}^{d_x\times B_t}$, then
\begin{equation}
\bar{\Cb}_t^{(B)}\Sbb_{t-1} = \frac{1}{B_t}\,\mathbf{X}_t \big(\mathbf{X}_t^\top \Sbb_{t-1}\big),
\qquad
\mu_t^{(B)}=\frac{\lambda_{\max}(\mathbf{X}_t^\top\mathbf{X}_t)}{B_t}.
\end{equation}
Since $B_t\le B$ is small, $\mu_t^{(B)}$ can be computed exactly from the $B_t\times B_t$ Gram matrix or approximated with a few power iterations. For the implemented positive-decay recurrence, define
\[
\begin{aligned}
\alpha_t^{\rm raw}&:=\eta_t\lambda_t,
&\alpha_t&:=\min(\alpha_t^{\rm raw},\,1-\varepsilon_\gamma),
&\gamma_t&:=1-\alpha_t,\\
c_0&:=1,
&c_t&:=\prod_{r=1}^{t}\gamma_r,
&\tilde{\Sbb}_t&:=\Sbb_t/c_t.
\end{aligned}
\]
When $\alpha_t=\alpha_t^{\rm raw}$, this is exactly the ridge-gradient
recurrence in Eq.~\eqref{eq:deltanet3_residual}. If the clamp activates, the
implemented shrinkage path should instead be interpreted as using the effective
coefficient
\[
\tilde\lambda_t:=
\begin{cases}
\alpha_t/\eta_t, & \eta_t>0,\\
0, & \eta_t=0,
\end{cases}
\]
while keeping the same residual injection gain $\eta_t$. Thus the descent claim
above applies to the unclamped update; the clamped update is a positive-decay
surrogate. With this convention, the implemented recurrence is equivalent to
\begin{equation}
\label{eq:deltanet3_residual_renorm}
\tilde{\Sbb}_t
=
\tilde{\Sbb}_{t-1}
+
\frac{\hat\eta_t}{B_t}\sum_{j\in\mathcal{I}_t}
\mathbf{x}_j\Big(\frac{\vb_j}{c_{t-1}}-\tilde{\Sbb}_{t-1}^\top\mathbf{x}_j\Big)^\top,
\qquad
\hat\eta_t:=\eta_t/\gamma_t.
\end{equation}
Algorithm~\ref{alg:Falcon3_forward} realizes Eq.~\eqref{eq:deltanet3_residual_renorm} chunk-locally via log-prefix decays; Appendix~\ref{app:sliding_window_remarks} records the corresponding continuation and chunk-boundary details.

\begin{breakablealgorithm}
\caption{Falcon-3 via ParallelFlow (Chunk-parallel Forward)}
\label{alg:Falcon3_forward}
\small
\begin{algorithmic}[1]
\Require Queries $\{\qb_t\}_{t=1}^{T}$, keys $\{\mathbf{k}_t\}_{t=1}^{T}$, values $\{\vb_t\}_{t=1}^{T}$, feature map $\phi$ (default identity), window size $B$, gains $\beta_t\in(0,2)$, ridge $\lambda_t\ge 0$, stabilizer $\varepsilon>0$, decay clamp $\varepsilon_\gamma>0$, chunk size $C$ (assume $C\mid T$), init $\Sbb_0$.
\Ensure Outputs $\mathbf{O}\in\mathbb{R}^{T\times d_v}$ with row $t$ equal to $\ob_t^\top=\phi(\qb_t)^\top\Sbb_t$ (read-after-write), final matrix state $\Sbb_T$, and boundary tail $\mathcal{T}_T:=\{(\mathbf{x}_j,\vb_j)\}_{j=\max(2,T-B+2)}^{T}$ for exact continuation across a later segment.

\State Write/query features: set $\mathbf{x}_1\gets \mathbf{0}$, for $t\ge 2$ set $\mathbf{x}_t\gets \phi(\mathbf{k}_{t-1})$, and for all $t$ set $\mathbf{q}_t\gets \phi(\qb_t)$.
\State Window indices $\mathcal{I}_t\gets\{j\mid \max(2,t-B+1)\le j\le t\}$,\; window sizes $B_t\gets|\mathcal{I}_t|$,\; and $B_t^+\gets \max(1,B_t)$.
\State Window smoothness statistics: $\mu_1^{(B)}\gets 0$ and for $t\ge 2$ set $\mu_t^{(B)}\gets \lambda_{\max}(\mathbf{G}_t)/B_t$, where $\mathbf{G}_t:=\mathbf{X}_t^\top\mathbf{X}_t$ and $\mathbf{X}_t:=[\mathbf{x}_j]_{j\in\mathcal{I}_t}\in\mathbb{R}^{d_x\times B_t}$.
\State Step sizes: set $\eta_1\gets 0$ and for $t\ge 2$ set $\eta_t\gets \beta_t/(\mu_t^{(B)}+\lambda_t+\varepsilon)$.
\State Clamp decays: $\alpha_t\gets \min(\eta_t\lambda_t,\,1-\varepsilon_\gamma)$,\; $\log\gamma_t\gets \operatorname{log1p}(-\alpha_t)$,\; $\gamma_t\gets \exp(\log\gamma_t)$,\; and $\hat\eta_t\gets \eta_t/\gamma_t$.
\State Partition into $M=T/C$ chunks $[a_k,b_k]=[(k{-}1)C{+}1,kC]$.

\State \Comment{\textbf{Phase 1: Intra-chunk ParallelFlow solves (parallel over chunks)}}
\For{$k=1$ \textbf{to} $M$ \textbf{in parallel}}
  \State $a\gets a_k$, $b\gets b_k$, $p\gets \max(2,a-B+1)$,\; $J\gets\{p,\ldots,b\}$.
    \State Slice $\mathbf{Q}^{(k)}\gets (\mathbf{q}_t^\top)_{t=a}^{b}\in\mathbb{R}^{C\times d_x}$ and gather the write pairs $\{(\mathbf{x}_j,\vb_j)\}_{j\in J}$.
    \State Build zero-padded window stacks $\mathbf{X}_{\rm win}^{(k)}\in\mathbb{R}^{d_x\times (CB)}$ and $\mathbf{V}_{\rm win}^{(k)}\in\mathbb{R}^{d_v\times (CB)}$:
    the $i$-th block of $B$ columns contains the pairs from $\mathcal{I}_{a+i-1}$ (in increasing $j$), padded with $\mathbf{0}$ to width $B$.
    \State Local log-prefix decays (length-$C$ scan): $\delta^{(k)}_0\gets 1$, $u_0\gets 0$, and for $i=1{:}C$ set
    $u_i\gets u_{i-1}+\log\gamma_{a+i-1}$,\; $\delta^{(k)}_i\gets \exp(u_i)$.
    \State Rescale values blockwise: divide every column in block $i$ of $\mathbf{V}_{\rm win}^{(k)}$ by the scalar $\delta^{(k)}_{i-1}$, yielding $\widehat{\mathbf{V}}_{\rm win}^{(k)}$.
    \State \Comment{Flatten time$\times$rank. Same-time rank components are uncoupled; only earlier time-blocks interact inside \texttt{tensorInv}.}
    \State Driver matrices:
    scale block $i$ of $\mathbf{X}_{\rm win}^{(k)}$ by $s_i:=\hat\eta_{a+i-1}/B^+_{a+i-1}$ to obtain $\mathbf{A}^{(k)}\in\mathbb{R}^{d_x\times(CB)}$;
    set $\mathbf{R}^{(k)}\gets -\mathbf{X}_{\rm win}^{(k)}$ and $\widetilde{\mathbf{R}}^{(k)}\gets \widehat{\mathbf{V}}_{\rm win}^{(k)}$.
    \State $(\mathbf{W}^{(k)},\mathbf{U}^{(k)})\gets \texttt{tensorInv}\big(\mathbf{A}^{(k)},\mathbf{R}^{(k)},\widetilde{\mathbf{R}}^{(k)}\big)$ \Comment{block-strict-causal solve on $(\text{time},\text{rank})$ pairs}
    \State Optional scan form: $\mathbf{M}^{(k)}\gets \delta^{(k)}_{C}\big(\Ib_{d_x}+\mathbf{A}^{(k)}\mathbf{W}^{(k)}\big)$ and $\mathbf{b}^{(k)}\gets \delta^{(k)}_{C}\mathbf{A}^{(k)}\mathbf{U}^{(k)}$.
    \State Cache $\big(\mathbf{Q}^{(k)},\mathbf{A}^{(k)},\mathbf{W}^{(k)},\mathbf{U}^{(k)},\boldsymbol{\delta}^{(k)}\big)$ where $\boldsymbol{\delta}^{(k)}:=(\delta^{(k)}_1,\ldots,\delta^{(k)}_C)$.
\EndFor

\State \Comment{\textbf{Phase 2: Inter-chunk boundary propagation (sequential; equivalently an associative scan over }$(\mathbf{M}^{(k)},\mathbf{b}^{(k)})$\textbf{)}}
\State $\Sbb_{\rm in}^{(1)}\gets \Sbb_0$.
\For{$k=1$ \textbf{to} $M$}
  \State Store $\Sbb_{\rm in}^{(k)}$.
    \State $\mathbf{Z}^{(k)}\gets \mathbf{U}^{(k)}+\mathbf{W}^{(k)}\Sbb_{\rm in}^{(k)}$.
    \State $\widehat{\Sbb}_{\rm out}^{(k)}\gets \Sbb_{\rm in}^{(k)}+\mathbf{A}^{(k)}\mathbf{Z}^{(k)}$. \Comment{chunk exit in the local renormalized domain}
    \State $\Sbb_{\rm in}^{(k+1)}\gets \delta^{(k)}_{C}\,\widehat{\Sbb}_{\rm out}^{(k)}$. \Comment{restore original scale at the chunk boundary; equiv. $\Sbb_{\rm in}^{(k+1)}=\mathbf{M}^{(k)}\Sbb_{\rm in}^{(k)}+\mathbf{b}^{(k)}$}
\EndFor
\State $\Sbb_T\gets \Sbb_{\rm in}^{(M+1)}$.

\State \Comment{\textbf{Phase 3: Materialize token outputs (parallel over chunks)}}
\For{$k=1$ \textbf{to} $M$ \textbf{in parallel}}
  \State $\mathbf{Z}^{(k)}\gets \mathbf{U}^{(k)}+\mathbf{W}^{(k)}\Sbb_{\rm in}^{(k)}$.
  \State $\mathbf{H}^{(k)}\gets \mathbf{Q}^{(k)}\Sbb_{\rm in}^{(k)}$ \Comment{$C\times d_v$}
  \State $\mathbf{P}^{(k)}\gets \mathbf{Q}^{(k)}\mathbf{A}^{(k)}$ \Comment{$C\times(CB)$}
  \State Define the block-causal mask $L^{(k)}\in\{0,1\}^{C\times(CB)}$ by
  $L^{(k)}_{i,(m-1)B+1:mB}=\mathbb{I}[m\le i]\mathbf{1}_{1\times B}$ (read-after-write).
    \State $\widehat{\mathbf{O}}^{(k)}\gets \mathbf{H}^{(k)}+\big(\mathbf{P}^{(k)}\odot L^{(k)}\big)\mathbf{Z}^{(k)}$. \Comment{local-renormalized outputs}
    \State Row-rescale: $\mathbf{O}^{(k)}_{i,:}\gets \delta^{(k)}_{i}\,\widehat{\mathbf{O}}^{(k)}_{i,:}$ for $i=1{:}C$, and write into rows $a_k{:}b_k$ of $\mathbf{O}$.
\EndFor
\State $\mathcal{T}_T\gets\{(\mathbf{x}_j,\vb_j)\}_{j=\max(2,T-B+2)}^{T}$.
\State \Return $(\mathbf{O},\Sbb_T,\mathcal{T}_T)$.
\end{algorithmic}
\end{breakablealgorithm}

\noindent\textbf{Boundary/scan details.}
The offline overlap interpretation, exact continuation requirement, and explicit associative chunk map are given in Appendix~\ref{app:sliding_window_remarks} and Appendix~\ref{app:Falcon3_chunk_map}.

\subsection{Mini-batch Update Rule for Inner Product Loss (Falcon-3A)}
\label{sec:methods_sliding_ip}

% =========================================================
% Figure: Three equivalent views of Falcon-3A
%   A. Recurrent Form (sliding-window mini-batch causal scan)
%   B. Parallel Form  (window-induced decay-mask attention)
%   C. Chunk-wise Parallel Form
%
% Requires the same preamble as Figure \ref{fig:falcon1a_equivalence}:
%   - colors: appleBlue, appleLightBlue, appleRed, appleLightRed,
%     appleGreen, appleLightGreen, appleGray, appleLightGray,
%     appleDarkGray, appleDark, appleOrange, appleLightOrange,
%     appleTeal, appleLightTeal, applePurple, appleLightPurple
%   - tikzlibraries: positioning, calc, shapes.geometric,
%     arrows.meta, backgrounds, fit, matrix, shadows.blur
% =========================================================
\begin{figure}[ht!]
\centering
\resizebox{0.7\linewidth}{!}{%
% [inline block 4: 1 envs, 22074 chars -> data_tex | \begin{tikzpicture}[ font=,...]

}
\caption{\textbf{Three equivalent views of \Falcon-3A.}
(A) The recurrent form maintains a fixed-size matrix state $\mathbf{S}_t\in\mathbb{R}^{d_x\times d_v}$.
(B) Unrolling the scalar recurrence with the windowed write yields a masked-attention form with the same shifted features and a \emph{window-induced decay-weighted} causal mask. The window operator is $B$-banded, but the resulting mask generally has a dense lower-triangular decay tail.
(C) The chunk-wise form splits the sequence into $M$ chunks of size $C$.}
\label{fig:falcon3a_equivalence}
\end{figure}

We apply the same sliding-window principle to the inner-product objective, while keeping the same parameter semantics as in the rest of the family: $\beta_t$ is the dimensionless gain, $\lambda_t$ is the actual shrinkage coefficient used by the recurrence (after any optional scale coupling), and $\eta_t$ is the resulting normalized step size.

For $t\ge 2$, let $B_t:=|\mathcal{I}_t|\le B$ (and skip the boundary update at $t=1$). The windowed loss is
\begin{align}
\ell_t^{\rm ip,(B)}(\Sbb)
:= - \frac{1}{B_t}\sum_{j \in \mathcal{I}_t} \big\langle \Sbb^\top \mathbf{x}_{j}, \vb_j \big\rangle
+ \frac{\lambda_t}{2}\|\Sbb\|_F^2.
\end{align}
Define the window-averaged cross-covariance and write energy
\[
\bar{\Nb}_t^{(B)} := \frac{1}{B_t}\sum_{j \in \mathcal{I}_t} \mathbf{x}_{j}\vb_j^\top,
\qquad
\bar E_t^{(B)} := \frac{1}{B_t}\sum_{j\in\mathcal{I}_t}\|\mathbf{x}_j\|_2^2.
\]
Then the gradient evaluated at the pre-update state is
\[
\nabla_{\Sbb} \ell_t^{\rm ip,(B)}(\Sbb_{t-1})
= - \bar{\Nb}_t^{(B)} + \lambda_t \Sbb_{t-1}.
\]

Unlike \Falcon-3, whose regression step size uses the local smoothness $\mu_t^{(B)}$, the inner-product windowed rule uses the window-energy statistic as a practical write-gain normalizer. Let $E_t^{(B)}:=\bar E_t^{(B)}$, and if the scale-coupled parameterization is active set $\lambda_t:=\bar\lambda_t E_t^{(B)}$ before computing the step size:
\begin{equation}
\label{eq:ip_mb_stepsize}
\eta_t
= \frac{\beta_t}{E_t^{(B)}+\lambda_t+\varepsilon},
\qquad
\beta_t\in(0,2),\ \varepsilon>0,
\end{equation}
with the boundary convention $\eta_1:=0$. Applying one gradient step gives
\begin{equation}
\label{eq:Falcon3a_update}
\Sbb_t
=
(1-\eta_t\lambda_t)\Sbb_{t-1}
+
\eta_t\,\bar{\Nb}_t^{(B)}.
\end{equation}

When $\lambda_t>0$, the objective is $\lambda_t$-smooth, so Eq.~\eqref{eq:ip_mb_stepsize} implies $\eta_t<2/\lambda_t$ for any $\beta_t\in(0,2)$, and Lemma~\ref{lem:per-step-descent} yields per-step descent in $\ell_t^{\rm ip,(B)}$ for the unclamped update. If the positive-decay clamp below activates, the implemented shrinkage should again be interpreted as using the effective ridge coefficient $\tilde\lambda_t:=\alpha_t/\eta_t$ when $\eta_t>0$ (and $0$ when $\eta_t=0$), rather than as an exact gradient step for the original $\lambda_t$. When $\lambda_t=0$, the objective is linear and unbounded below, so the same normalization should be interpreted as a magnitude stabilizer for the additive write rather than as a bounded-objective guarantee.

For log-space unrolling, we introduce only the derived quantities
\begin{equation}
\label{eq:Falcon3a_decayfrac}
\alpha_t^{\rm raw}:=\eta_t\lambda_t,\qquad
\alpha_t:=\min(\alpha_t^{\rm raw},\,1-\varepsilon_\gamma),\qquad
\gamma_t:=1-\alpha_t\in[\varepsilon_\gamma,1],
\end{equation}
with $\alpha_1:=0$ and $\gamma_1:=1$. Thus, $\alpha_t$ and $\gamma_t$ are implementation variables derived from the same $(\beta_t,\lambda_t,\eta_t)$ parameterization used elsewhere; they are not separate learned controls.

Because we optimize the window average, neither the typical scale of $\bar{\Nb}_t^{(B)}$ nor the decay fraction $\alpha_t=\eta_t\lambda_t$ grows systematically with the nominal window size $B$. We again set $\eta_t:=0$ when the denominator in Eq.~\eqref{eq:ip_mb_stepsize} vanishes (in practice, we take $\varepsilon>0$). Appendix~\ref{app:sliding_window_remarks} records the stationary calculation and the exact boundary-state requirements. For $B=1$ and $t\ge 2$, Eq.~\eqref{eq:Falcon3a_update} reduces to the scalar non-sliding inner-product update, i.e.\ \Falcon-1A; in the additive ablation $\lambda_t\equiv 0$, the write is purely additive. The per-column non-sliding analogue is \Falcon-2A in Eq.~\eqref{eq:Falcon2a_update}.

\subsection{Parallel (Attention) Form of Mini-batch Inner Product Update}
\label{sec:mb_ip_parallel}

\begin{breakablealgorithm}
\caption{Chunk-parallel masked attention for Falcon-3A (Forward)}
\label{alg:Falcon3a_chunked_attn}
\small
\begin{algorithmic}[1]
\Require Query features $\Qb\in\mathbb{R}^{L\times d_x}$ (row $t$ is $\phi(\qb_t)^\top$), write features $\mathbf{X}\in\mathbb{R}^{L\times d_x}$ (row $t$ is $\mathbf{x}_t^\top=\phi(\mathbf{k}_{t-1})^\top$), values $\Vb\in\mathbb{R}^{L\times d_v}$ (row $t$ is $\vb_t^\top$), gains $\boldsymbol{\beta}\in(0,2)^L$, actual ridge coefficients $\boldsymbol{\lambda}\in\mathbb{R}_{\ge 0}^{L}$, stabilizer $\varepsilon>0$, window size $B$, chunk size $C$ (assume $C\mid L$), decay floor $\varepsilon_\gamma>0$, initial state $\Sbb_{\rm init}\in\mathbb{R}^{d_x\times d_v}$.
\Ensure Outputs $\Ob\in\mathbb{R}^{L\times d_v}$ where row $t$ is $\ob_t^\top=\phi(\qb_t)^\top\Sbb_t$ under
$\eta_1=0$, $\gamma_1=1$, $\bar{\Nb}^{(B)}_1=\mathbf{0}$, $\bar E_1^{(B)}=0$, and for $t\ge2$,
$\Sbb_t=\gamma_t\Sbb_{t-1}+\eta_t\bar{\Nb}^{(B)}_t$ with
$\bar{\Nb}^{(B)}_t=\frac{1}{B_t}\sum_{j\in\mathcal{I}_t}\mathbf{x}_j\vb_j^\top$ and
$\bar E_t^{(B)}=\frac{1}{B_t}\sum_{j\in\mathcal{I}_t}\|\mathbf{x}_j\|_2^2$,
$\eta_t=\beta_t/(\bar E_t^{(B)}+\lambda_t+\varepsilon)$,
and $\gamma_t:=1-\alpha_t$ with $\alpha_t:=\min(\eta_t\lambda_t,1-\varepsilon_\gamma)$.
\State Set $\mathcal{I}_1\gets\emptyset$, $B_1\gets0$. For $t\ge2$, define $\mathcal{I}_t=\{j\mid \max(2,t-B+1)\le j\le t\}$ and set $B_t\gets|\mathcal{I}_t|$.
\State When forming window weights, use $B_t^+\gets\max(1,B_t)$ so that $\mathbb{I}[j\in\mathcal{I}_t]/B_t^+$ is always well-defined.
\State Set $\bar E_1^{(B)}\gets 0$, and for $t\ge 2$ compute $\bar E_t^{(B)}\gets \frac{1}{B_t}\sum_{j\in\mathcal{I}_t}\|\mathbf{x}_j\|_2^2$.
\State If the scale-coupled ridge parameterization is active, set $\lambda_1\gets 0$ and for $t\ge 2$ set $\lambda_t\gets \bar\lambda_t\,\bar E_t^{(B)}$; otherwise treat the provided $\lambda_t$ as the actual coefficients.
\State Set $\eta_1\gets 0$. For $t\ge 2$, compute $\eta_t\gets \beta_t/(\bar E_t^{(B)}+\lambda_t+\varepsilon)$.
\State Set $\alpha_1\gets 0$, $\gamma_1\gets 1$, and $\log\gamma_1\gets 0$. For $t\ge 2$, compute $\alpha_t\gets \min(\eta_t\lambda_t,1-\varepsilon_\gamma)$, $\log\gamma_t\gets \operatorname{log1p}(-\alpha_t)$, and $\gamma_t\gets \exp(\log\gamma_t)$.
\State Partition into $M=L/C$ chunks with indices $[a_k,b_k]=[(k{-}1)C{+}1,kC]$.
\State Initialize output buffer $\Ob\gets \mathbf{0}_{L\times d_v}$.

\State \Comment{\textbf{Phase 1: Intra-chunk precomputation (parallel)}}
\For{$k=1$ \textbf{to} $M$ \textbf{in parallel}}
\State $a\gets a_k$, $b\gets b_k$, $p\gets \max(2,a-B+1)$, and let $J:=\{p,p{+}1,\dots,b\}$ (overlap length $|J|\le C+B-1$).
\State Slice $\Qb^{(k)}\gets \Qb_{a:b}$,\; $\mathbf{X}_{\rm ext}^{(k)}\gets \mathbf{X}_{J}$,\; $\Vb_{\rm ext}^{(k)}\gets \Vb_{J}$,\; $\boldsymbol{\eta}^{(k)}\gets \boldsymbol{\eta}_{a:b}$,\; and $\log\boldsymbol{\gamma}^{(k)}\gets (\log\boldsymbol{\gamma})_{a:b}$.
\State Compute log-prefix decays inside the chunk: $u_0\gets 0$ and $u_i\gets u_{i-1}+\log\gamma^{(k)}_i$ for $i=1,\ldots,C$ (scan); set $\delta_i\gets \exp(u_i)$ and $g^{(k)}\gets \delta_C$.
\State Build the causal decay matrix $D^{(k)}\in\mathbb{R}^{C\times C}$ with $D^{(k)}_{i,s}=\mathbb{I}[s\le i]\exp(u_i-u_s)$.
\State Build the $B$-banded window operator $A^{(k)}\in\mathbb{R}^{C\times |J|}$ (columns indexed by $j\in J$):
$A^{(k)}_{s,j}=\mathbb{I}[\,j\in\mathcal{I}_{a+s-1}\,]/B^+_{a+s-1}$.
\State Local mask $M^{(k)}\gets D^{(k)}\,\operatorname{Diag}(\boldsymbol{\eta}^{(k)})\,A^{(k)}\in\mathbb{R}^{C\times |J|}$.
\State Intra-chunk output $\Ob_{\rm intra}^{(k)}\gets \big((\Qb^{(k)}{\mathbf{X}_{\rm ext}^{(k)}}^\top)\odot M^{(k)}\big)\Vb_{\rm ext}^{(k)}$.
\State Chunk bias for boundary propagation:
$m_{\rm end}^{(k)}\gets M^{(k)}_{C,:}$ and $\Bb^{(k)}\gets {\mathbf{X}_{\rm ext}^{(k)}}^\top\big(\operatorname{Diag}(m_{\rm end}^{(k)})\Vb_{\rm ext}^{(k)}\big)$.
\State Cache $g^{(k)}$, $\Bb^{(k)}$, $\boldsymbol{\delta}^{(k)}:=(\delta_1,\ldots,\delta_C)$, and $\Ob_{\rm intra}^{(k)}$.
\EndFor

\State \Comment{\textbf{Phase 2: Inter-chunk recurrence (sequential over chunks or scan)}}
\State $\Sbb_{\rm in}^{(1)}\gets \Sbb_{\rm init}$
\For{$k=1$ \textbf{to} $M$}
\State Store $\Sbb_{\rm in}^{(k)}$ \Comment{Needed by Phase 3 to materialize the decayed-history term}
\State $\Sbb_{\rm out}^{(k)}\gets g^{(k)}\Sbb_{\rm in}^{(k)}+\Bb^{(k)}$
\State $\Sbb_{\rm in}^{(k+1)}\gets \Sbb_{\rm out}^{(k)}$
\EndFor

\State \Comment{\textbf{Phase 3: Materialize outputs (parallel)}}
\For{$k=1$ \textbf{to} $M$ \textbf{in parallel}}
\State $H^{(k)}\gets \Qb^{(k)}\Sbb_{\rm in}^{(k)}$ \Comment{$C\times d_v$}
\State $\Ob_{\rm hist}^{(k)}\gets \operatorname{Diag}(\boldsymbol{\delta}^{(k)})\,H^{(k)}$
\State $\Ob^{(k)}\gets \Ob_{\rm hist}^{(k)}+\Ob_{\rm intra}^{(k)}$
\State Write $\Ob_{a_k:b_k}\gets \Ob^{(k)}$.
\EndFor
\State \Return $\Ob$ (concatenate $\{\Ob^{(k)}\}_{k=1}^M$).
\end{algorithmic}
\end{breakablealgorithm}

We show that the recurrent sliding-window update in Section~\ref{sec:methods_sliding_ip} can be written as the sum of (i) a decayed-history term from the incoming boundary state and (ii) an (unnormalized) dot-product attention matrix with a structured causal mask. This view enables GPU-parallel training and matches the usual unrolling of gated linear recurrences into masked attention~\citep{sun2023retentive, qin2024lightning, yang2023gated, gu2023mamba,dao2024transformers}.

For the unclamped exposition, let $\lambda_s$ denote the actual shrinkage coefficient after any optional scale coupling, and define
\[
\eta_s := \frac{\beta_s}{\bar E_s^{(B)}+\lambda_s+\varepsilon},
\qquad
\gamma_s := 1-\eta_s\lambda_s,
\qquad
\delta_t:=\prod_{r=1}^{t}\gamma_r.
\]
Then the recurrence $\Sbb_s=\gamma_s \Sbb_{s-1}+\eta_s\bar{\Nb}_s^{(B)}$ unrolls to
\[
\Sbb_t=\delta_t \Sbb_0+\sum_{j=2}^{t} M_{t,j}\,\mathbf{x}_j\vb_j^\top,
\]
where, under the boundary convention $\mathbf{x}_1:=\mathbf{0}$ and $\eta_1=0$,
\[
M_{t,j}:=
\sum_{s=j}^{\min(t,j+B-1)}
\frac{\eta_s}{B_s}\prod_{r=s+1}^{t}\gamma_r,
\qquad 2\le j\le t,
\]
and $M_{t,j}:=0$ otherwise. Hence, the read-after-write output is
\[
\ob_t
=
\delta_t\,\Sbb_0^\top \phi(\qb_t)
\;+\;
\sum_{j=2}^{t} M_{t,j}\,\langle \phi(\qb_t), \mathbf{x}_{j} \rangle \vb_j.
\]

Equivalently, stacking query features $\Qb\in\mathbb{R}^{L\times d_x}$, write features $\mathbf{X}\in\mathbb{R}^{L\times d_x}$, values $\Vb\in\mathbb{R}^{L\times d_v}$, and mask $\Mb\in\mathbb{R}^{L\times L}$ with entries $\Mb_{t,j}=M_{t,j}$, we obtain
\[
\Ob
=
\operatorname{Diag}(\boldsymbol{\delta})\,\Qb\Sbb_0
\;+\;
(\Qb\mathbf{X}^\top\odot \Mb)\Vb,
\qquad
\boldsymbol{\delta}:=(\delta_1,\ldots,\delta_L)^\top.
\]
For the fresh-sequence default $\Sbb_0=\mathbf{0}$, only the masked-attention term remains. Appendix~\ref{app:Falcon3a_mask} gives the full derivation, the stationary special case, and the chunk-local log-space evaluation used by Algorithm~\ref{alg:Falcon3a_chunked_attn}.

\paragraph{Backward pass.}
The numerically stable backward pass is given in Appendix~\ref{app:Falcon3a_backward}; it differentiates through the structured mask, chunk-local log-decay renormalization, and normalized step-size computation.

\paragraph{Relation to prior internal-objective views.}
A brief comparison to test-time training and Titans- and ATLAS-style internal memory objectives is moved to Appendix~\ref{app:ttt_titans_relation}.

\section{Experiments}
\label{sec:experiments}

Our empirical evaluation emphasizes the scalar and sliding inner-product variants, \Falcon-1A and \Falcon-3A, and includes \Falcon-1.3 as a representative regression ablation. The 124M--130M-parameter language models are trained on FineWeb-Edu~\citep{penedo2024fineweb} with a matched 50B-token budget and evaluated by held-out perplexity and downstream task accuracy. We also use variable-length multi-digit addition as a controlled diagnostic for causal storage and length extrapolation. The per-column rules \Falcon-2 and \Falcon-2A, as well as the sliding regression rule \Falcon-3, are defined but not separately benchmarked in the main tables. Unless stated otherwise, fast-weight models use the scaled formulation of Section~\ref{sec:scaled_fast_weight} and the next-latent boundary convention $\mathbf{x}_1:=\mathbf{0}$. QK-RMSNorm and QK-$\ell_2$ denote the query/key normalization used inside the fast-weight block. In model names, ctx$\beta$, ctx$\eta$, and ctx$\lambda$ abbreviate context-conditioned $\beta$, $\eta$, and $\lambda$, respectively.

\subsection{Language modeling experiments}
\label{sec:results}

We train the 124M--130M-parameter models for 100{,}000 optimization steps with sequence length 1{,}024 and global batch size 480, for a total budget of approximately 49.2B tokens. The Transformer baseline uses a LLaMA-style architecture with RoPE and SwiGLU. Recurrent baselines include RetNet/LightningAttn, Mamba-2, DeltaNet, and Gated DeltaNet. Table~\ref{tab:perplexity_results} reports small-model perplexity; Tables~\ref{tab:accuracy_results_small} and~\ref{tab:accuracy_results_medium} report small- and medium-model downstream accuracy, respectively. The tables include scalar \Falcon-1A variants, \Falcon-3A.3 where available, and the regression ablation \Falcon-1.3. Unless explicitly ablated, fast-weight models use QK-RMSNorm and lightweight short convolutions on the attention projections. Full optimization and implementation details are deferred to Appendix~\ref{app:exp_setup}. We report teacher-forced perplexity and zero-shot / one-shot downstream accuracy.

On FineWeb-Edu validation perplexity (Table~\ref{tab:perplexity_results}), \Falcon-1.3 is strongest overall at 17.10; Gated DeltaNet is the strongest baseline at 17.32, and \Falcon-1A.3 is the best evaluated inner-product variant at 17.40. On the 124M--130M downstream evaluation (Table~\ref{tab:accuracy_results_small}), \Falcon-1A.2 has the best zero-shot average among the listed models (49.30), while \Falcon-1.3 has the best recurrent one-shot average (49.54). Within the small scalar inner-product ablation, QK-RMSNorm improves FineWeb-Edu perplexity relative to QK-$\ell_2$ normalization, and context-conditioned $\eta$ improves the small-model zero- and one-shot averages over the corresponding context-conditioned $\beta$ variant. The empirical takeaway is therefore not a uniform win, but that the proposed aligned, normalized updates preserve competitive language-model quality while providing the controlled arithmetic gains reported below.

\begin{table}[ht!]
\centering
\small
\setlength{\tabcolsep}{3.6pt}
\renewcommand{\arraystretch}{1.12}
\caption{
Comparison of models with 124M-130M parameters trained for a 50B-token budget on FineWeb-Edu. Lower is better. Boldface marks the best recurrent model in each column.
}
\label{tab:perplexity_results}
\begin{tabular}{l *{3}{S[table-format=2.2]}}
\toprule
\textbf{Model}
& \multicolumn{3}{c}{\textbf{Perplexity} $\downarrow$ \,} \\
\cmidrule(lr){2-4}
& \textbf{Wiki.} & \textbf{LMB.} & \textbf{FineEdu.} \\
\midrule
\multicolumn{4}{l}{\bfseries 124M-130M parameters}\\
\addlinespace[0.15em]

\hspace{2mm} (124M) Transformer (w.\ RoPE)
& 33.25 & 47.43 & 17.38 \\

\addlinespace[0.25em]\midrule\addlinespace[0.15em]

\hspace{2mm} (130M) RetNet/LightningAttn
& 36.86 & 65.16 & 18.79 \\
\hspace{2mm} (130M) Mamba-2
& 34.53 & 48.74 & 17.70 \\
\hspace{2mm} (130M) DeltaNet
& 34.19 & 52.84 & 17.84 \\
\hspace{2mm} (130M) Gated DeltaNet 
& \textbf{30.99} & \textbf{46.70} & 17.32 \\

\addlinespace[0.25em]\midrule\addlinespace[0.15em]
\multicolumn{4}{l}{\textit{Ours}}\\
\addlinespace[0.05em]
\rowcolor{black!2}
\hspace{2mm} (130M) \textbf{Falcon-1A.1 (QK-$\ell_2$-norm, ctx$\beta$-ctx$\lambda$)}
& 34.41 & 47.93 & 17.70 \\
\rowcolor{black!2}
\hspace{2mm} (130M) \textbf{Falcon-1A.2 (QK-$\ell_2$-norm, ctx$\eta$-ctx$\lambda$)}
& 34.20 & 51.01 & 17.70 \\
\rowcolor{black!2}
\hspace{2mm} (130M) \textbf{Falcon-1A.3 (QK-RMSNorm, ctx$\eta$-ctx$\lambda$)}
& 34.02 & 49.84 & 17.40 \\
\rowcolor{black!2}
\hspace{2mm} (130M) \textbf{Falcon-1.3 (QK-RMSNorm, ctx$\eta$-ctx$\lambda$)}
& 33.00 & 48.70 & \textbf{17.10} \\
\bottomrule
\end{tabular}
\end{table}

\begin{table*}[ht!]
\centering
\small
\setlength{\tabcolsep}{3.6pt}
\renewcommand{\arraystretch}{1.12}
\caption{
\textbf{Evaluation results on downstream tasks.}
Small models' zero-shot and one-shot accuracy. Accuracies use \texttt{acc}; tasks marked with $^*$ use \texttt{acc\_n} as in \texttt{lm-evaluation-harness}. Avg. is the unweighted average across the 8 tasks. Boldface marks the best recurrent model in each column.
}
\label{tab:accuracy_results_small}
\resizebox{\textwidth}{!}{%
\begin{tabular}{
l
*{8}{S[table-format=2.2]}
S[table-format=2.2]
}
\toprule
\textbf{Model}
& \multicolumn{9}{c}{\textbf{Accuracy} $\uparrow$} \\
\cmidrule(lr){2-10}
& \textbf{PIQA} & \textbf{Hella.}\textsuperscript{*} & \textbf{Wino.} & \textbf{ARC-e} & \textbf{ARC-c}\textsuperscript{*}
& \textbf{OBQA}\textsuperscript{*} & \textbf{Social IQA} & \textbf{SciQ} & \textbf{Avg.} \\
\midrule

\rowcolor{black!6}
\multicolumn{10}{c}{\bfseries Zero-shot (0-shot)}\\
\addlinespace[0.25em]
\multicolumn{10}{l}{\bfseries 124M-130M parameters}\\
\addlinespace[0.15em]

\hspace{2mm} (124M) Transformer (w.\ RoPE)
& 65.67 & 37.54 & 51.70 & 52.36 & 27.65 & 31.60 & 38.84 & 79.90 & 48.16 \\

\addlinespace[0.25em]\midrule\addlinespace[0.15em]
\hspace{2mm} (130M) RetNet/LightningAttn
& 64.91 & 35.36 & 49.64 & 57.62 & 26.28 & 32.20 & 37.97 & 80.40 & 48.05 \\
\hspace{2mm} (130M) Mamba-2
& 66.32 & 36.89 & 50.75 & 58.16 & 26.62 & 32.60 & 38.38 & 80.70 & 48.80 \\
\hspace{2mm} (130M) DeltaNet
& 66.38 & 37.15 & \textbf{52.33} & 57.37 & 26.79 & \textbf{34.00} & 39.00 & 78.00 & 48.88 \\
\hspace{2mm} (130M) Gated DeltaNet
& 65.51 & 37.75 & 49.72 & 58.88 & \textbf{27.90} & 31.60 & 38.28 & 80.60 & 48.78 \\

\addlinespace[0.25em]\midrule\addlinespace[0.15em]
\multicolumn{10}{l}{\textit{Ours}}\\
\addlinespace[0.05em]
\rowcolor{black!2}
\hspace{2mm} (130M) \textbf{Falcon-1A.1 (QK-$\ell_2$-norm, ctx$\beta$-ctx$\lambda$)}
& 66.05 & 37.27 & 50.67 & 57.62 & 27.65 & 31.60 & 38.95 & 81.10 & 48.86 \\
\rowcolor{black!2}
\hspace{2mm} (130M) \textbf{Falcon-1A.2 (QK-$\ell_2$-norm, ctx$\eta$-ctx$\lambda$)}
& \textbf{67.03} & 37.29 & \textbf{52.33} & 57.37 & 25.94 & 33.20 & 38.84 & \textbf{82.40} & \textbf{49.30} \\
\rowcolor{black!2}
\hspace{2mm} (130M) \textbf{Falcon-1A.3 (QK-RMSNorm, ctx$\eta$-ctx$\lambda$)}
& 66.10 & 37.55 & 50.12 & \textbf{59.01} & 26.79 & 32.00 & 37.82 & 82.20 & 48.95 \\
\rowcolor{black!2}
\hspace{2mm} (130M) \textbf{Falcon-3A.3 (QK-RMSNorm, ctx$\eta$-ctx$\lambda$)}
& 65.34 & 37.30 & 50.99 & 57.37 & 26.37 & 33.60 & \textbf{39.41} & 81.60 & 49.00 \\
\rowcolor{black!2}
\hspace{2mm} (130M) \textbf{Falcon-1.3 (QK-RMSNorm, ctx$\eta$-ctx$\lambda$)}
& 65.83 & \textbf{38.38} & 52.25 & 58.96 & 26.62 & 31.40 & 38.69 & 81.30 & 49.18 \\

\addlinespace[0.35em]
\specialrule{0.10em}{0.25em}{0.25em}
\addlinespace[0.10em]

\rowcolor{black!6}
\multicolumn{10}{c}{\bfseries One-shot (1-shot)}\\
\addlinespace[0.25em]

\hspace{2mm} (124M) Transformer (w.\ RoPE)
& 66.43 & 37.55 & 50.28 & 59.64 & 29.01 & 30.00 & 39.82 & 84.60 & 49.67 \\

\addlinespace[0.25em]\midrule\addlinespace[0.15em]
\hspace{2mm} (130M) RetNet/LightningAttn
& 65.13 & 35.19 & 49.64 & 56.78 & 25.85 & 28.80 & 36.44 & 81.50 & 47.42 \\
\hspace{2mm} (130M) Mamba-2
& 66.70 & 36.61 & 51.07 & 58.63 & 26.96 & \textbf{32.40} & 37.97 & 82.90 & 49.16 \\
\hspace{2mm} (130M) DeltaNet
& 66.27 & 36.67 & 50.36 & 57.70 & 27.39 & 32.00 & 37.72 & 79.90 & 48.50 \\
\hspace{2mm} (130M) Gated DeltaNet 
& 65.40 & 37.81 & 51.30 & 58.29 & 26.88 & 30.00 & 37.26 & 81.60 & 48.57 \\

\addlinespace[0.25em]\midrule\addlinespace[0.15em]
\multicolumn{10}{l}{\textit{Ours}}\\
\addlinespace[0.05em]
\rowcolor{black!2}
\hspace{2mm} (130M) \textbf{Falcon-1A.1 (QK-$\ell_2$-norm, ctx$\beta$-ctx$\lambda$)}
& \textbf{66.81} & 37.41 & 50.75 & 57.53 & 27.99 & \textbf{32.40} & 37.92 & 81.80 & 49.08 \\
\rowcolor{black!2}
\hspace{2mm} (130M) \textbf{Falcon-1A.2 (QK-$\ell_2$-norm, ctx$\eta$-ctx$\lambda$)}
& 66.38 & 36.97 & \textbf{52.96} & 57.32 & 27.82 & 31.40 & 37.56 & 83.20 & 49.20 \\
\rowcolor{black!2}
\hspace{2mm} (130M) \textbf{Falcon-1A.3 (QK-RMSNorm, ctx$\eta$-ctx$\lambda$)}
& 65.78 & 37.22 & 49.72 & 58.88 & 27.73 & 31.40 & 37.97 & 82.40 & 48.89 \\
\rowcolor{black!2}
\hspace{2mm} (130M) \textbf{Falcon-3A.3 (QK-RMSNorm, ctx$\eta$-ctx$\lambda$)}
& 65.72 & 36.47 & 51.78 & 58.29 & 26.54 & 32.00 & \textbf{38.23} & 83.20 & 49.03 \\
\rowcolor{black!2}
\hspace{2mm} (130M) \textbf{Falcon-1.3 (QK-RMSNorm, ctx$\eta$-ctx$\lambda$)}
& 65.67 & \textbf{38.09} & 52.80 & \textbf{59.55} & \textbf{29.01} & 30.40 & 37.46 & \textbf{83.40} & \textbf{49.54} \\
\bottomrule
\end{tabular}
}
\end{table*}

\subsection{Variable-length arithmetic addition tasks}
\label{sec:task_addition}

We use variable-length multi-digit addition as a controlled stress test for causal storage and extrapolation, following \citet{kaiser2015neural}. Each sample presents an $n$-digit prompt

\[
s_{\text{in}} := \texttt{"}\mathrm{digits}_n(a)\ \texttt{+}\ \mathrm{digits}_n(b)\ \texttt{=} \texttt{"},
\]
and asks the model to generate the reversed $(n{+}1)$-digit sum
\[
s_{\text{out}} := \texttt{"}\operatorname{rev}\big(\mathrm{digits}_{n+1}(a+b)\big)\texttt{"},
\]
so that the least significant digit is produced first. We train on widths sampled uniformly from $\{1,\dots,32\}$ and optimize masked next-token log-likelihood on the target suffix only.

Table~\ref{tab:arithmetics-digit-generalization} reports in-distribution validation accuracy and out-of-distribution teacher-forced target-suffix accuracy averaged over 33--48 digits. \Falcon-3A.3 achieves the best extrapolation performance, with 87.2 mean accuracy, followed by \Falcon-1A.3 at 85.9. Both outperform the baselines reported here, including RetNet/LightningAttn and the Transformer. We view this experiment as supporting evidence rather than the paper's primary result: it isolates the memory-writing behavior of the recurrent state and shows that the proposed shifted, normalized updates extrapolate well when storage and carry propagation dominate.

\begin{table}[ht!]
\centering
\tiny
\setlength{\tabcolsep}{3.6pt}
\renewcommand{\arraystretch}{1.12}
\caption{
Teacher-forced length generalization on variable-digit addition. Higher is better.
}
\label{tab:arithmetics-digit-generalization}
\begin{tabular}{l S[table-format=4.0] S[table-format=3.1] S[table-format=2.1] c}
\toprule
\textbf{Model}
& \textbf{Best step}
& \textbf{Val. acc.}
& \textbf{Mean acc.}
& \textbf{Acc@d33/d48} \\
\midrule

\hspace{2mm} Transformer (w.\ RoPE)
& 2000 & 100.0 & 65.8 & 97.0/49.0 \\

\addlinespace[0.25em]\midrule\addlinespace[0.15em]

\hspace{2mm} RetNet/LightningAttn
& 2000 & 99.7 & 82.9 & 99.0/63.0 \\
\hspace{2mm} Mamba-2
& 2000 & 100.0 & 75.2 & 100.0/51.0 \\

\addlinespace[0.25em]\midrule\addlinespace[0.15em]

\multicolumn{5}{l}{\textit{Ours}}\\
\addlinespace[0.05em]
\rowcolor{black!2}
\hspace{2mm}\textbf{Falcon-1A.1 (QK-$\ell_2$-norm, ctx$\beta$-ctx$\lambda$)}
& 1900 & 100.0 & 80.6 & 100.0/59.0 \\
\rowcolor{black!2}
\hspace{2mm}\textbf{Falcon-1A.2 (QK-$\ell_2$-norm, ctx$\eta$-ctx$\lambda$)}
& 2000 & 100.0 & 85.2 & 100.0/63.0 \\
\rowcolor{black!2}
\hspace{2mm}\textbf{Falcon-1A.3 (QK-RMSNorm, ctx$\eta$-ctx$\lambda$)}
& 1900 & 99.8 & 85.9 & 100.0/69.0 \\
\rowcolor{black!2}
\hspace{2mm}\textbf{Falcon-3A.3 (QK-RMSNorm, ctx$\eta$-ctx$\lambda$)}
& 2000 & 99.9 & \multicolumn{1}{c}{\bfseries 87.2} & 100.0/69.0 \\
\rowcolor{black!2}
\hspace{2mm}\textbf{Falcon-1.3 (QK-RMSNorm, ctx$\eta$-ctx$\lambda$)}
& 2000 & 100.0 & 68.8 & 100.0/48.0 \\
\bottomrule
\end{tabular}
\end{table}

\section{Related Work}

\paragraph{Efficient sequence models.}
Standard Transformers~\citep{vaswani2017attention} incur quadratic $\mathcal{O}(N^2)$ compute and memory costs due to the materialization of the attention matrix. To address this, Linear Attention~\citep{katharopoulos2020transformers} reorders the matrix multiplication via kernel feature maps, effectively treating the context as a recurrent accumulation of outer products. Other subquadratic attention replacements include random-feature approximations to softmax attention, such as Performer~\citep{choromanski2020rethinking}, long-convolution operators such as Hyena~\citep{poli2023hyena}, and retention-style recurrent attention~\citep{sun2023retentive} or RNN–Transformer hybrids~\citep{peng2023rwkv}. Parallel development in Structured State Space Models (SSMs), such as S4~\citep{gu2021efficiently} and Mamba~\citep{gu2023mamba}, utilized discretized continuous-time dynamics to achieve similar linear scaling.
Recently, Mamba-2~\citep{dao2024transformers} unified these paradigms under Structured State Space Duality (SSD), proving that selective SSMs are algorithmically dual to linear attention with semi-separable masks.
While Mamba-2 and related SSD-style models focus on hardware utilization (via chunk-parallel scans) and expressive discretizations, they largely retain additive or gated accumulation as the state write rule. In contrast, our work re-examines the objective that induces the state update. We derive autoregressively aligned, NLMS-stabilized regression updates and sliding-window variants that remain compatible with SSD-style chunk-parallel training.

\paragraph{Fast Weights and Delta Networks.}
Fast-weight models~\citep{schmidhuber1992learning} maintain a high-capacity short-term memory matrix updated by the input stream. In \emph{Linear Transformers Are Secretly Fast Weight Programmers}, \citet{schlag2021linear} introduced the Delta Network, whose state update is a gradient step on a value-reconstruction error rather than a purely Hebbian write. \citet{yang2024gated} extended this mechanism with Mamba-style gating. Building on that optimization view, our work focuses on autoregressive feature/target alignment, objective-matched normalization, per-column and sliding-window variants, and exact chunk-parallel implementations.

\paragraph{Adaptive Filtering and Online Regression.} 
The delta-rule updates used in fast-weight models are closely related to classical adaptive filtering, where the LMS delta rule and its normalized variant (NLMS) are standard algorithms for scale-robust online regression, and second-order methods such as recursive least squares (RLS) provide exact online ridge updates at higher cost \citep{sayed2011adaptive}. Our NLMS-based regression family can be viewed as importing these stability and normalization principles into fast weight attention under strict causality, while remaining compatible with modern parallel implementations of linear recurrences \citep{yang2024parallelizing, dao2024transformers, cirone2025parallelflow}.

\paragraph{In-Context Learning as Implicit Optimization.}
Recent work interprets in-context learning and test-time adaptation as implicit optimization in feature space~\citep{von2023transformers, ahn2023transformers, cheng2023transformers, sun2024learning, wang2025test}. MesaNet~\citep{von2025mesanet} makes this viewpoint explicit by solving least-squares problems inside the forward pass, while Titans and ATLAS optimize internal memory objectives over the stream~\citep{behrouz2024titans, behrouz2025atlas}. Our setting is stricter: the adapted object is a fixed-size fast-memory state, updated online under causal next-latent alignment. This yields normalized first-order and sliding-window rules that remain compatible with chunk-parallel training.

\section{Conclusion}

We recast recurrent sequence modeling as online continual learning with an explicit fast-memory objective. Under read-after-write semantics, the prefix-aligned causal training pair is $\phi(\mathbf{k}_{t-1})\to\mathbf{v}_t$, yielding a unified family of normalized fast-weight updates: \Falcon-1/\Falcon-2/\Falcon-3 for regression and \Falcon-1A/\Falcon-2A/\Falcon-3A for inner-product writes. The number denotes scalar, per-column, and sliding-window dynamics, respectively, while the suffix ``A'' denotes the inner-product objective. This viewpoint separates temporal alignment, plasticity, forgetting, and bounded rehearsal while remaining compatible with chunk-parallel training. In the current empirical study, representative scalar regression and scalar/sliding inner-product variants remain competitive in language modeling, and the best inner-product variants improve arithmetic length extrapolation.

% \section*{Acknowledgement}

% We thank Sanjeev Arora, Tri Dao, Karthik Narasimhan, Eric Song, and Gon Buzaglo for helpful discussions and constructive feedback. We thank Princeton AI Lab and Princeton Language Intelligence for providing computing resources. Steve Ta deeply appreciates Vikram Ramaswamy for believing in him, and Vikram's support helped Steve discover his own path in machine learning. We used Large Language Models to help refine this paper. Their role was limited to improving the clarity and readability of the text.

\vspace{5ex}

\bibliographystyle{plainnat}
\bibliography{reference}

\begin{thebibliography}{41}
\providecommand{\natexlab}[1]{#1}
\providecommand{\url}[1]{\texttt{#1}}
\expandafter\ifx\csname urlstyle\endcsname\relax
  \providecommand{\doi}[1]{doi: #1}\else
  \providecommand{\doi}{doi: \begingroup \urlstyle{rm}\Url}\fi

\bibitem[Ahn et~al.(2023)Ahn, Cheng, Daneshmand, and Sra]{ahn2023transformers}
Kwangjun Ahn, Xiang Cheng, Hadi Daneshmand, and Suvrit Sra.
\newblock Transformers learn to implement preconditioned gradient descent for in-context learning.
\newblock \emph{Advances in Neural Information Processing Systems}, 36:\penalty0 45614--45650, 2023.

\bibitem[Ba et~al.(2016)Ba, Hinton, Mnih, Leibo, and Ionescu]{ba2016using}
Jimmy Ba, Geoffrey~E Hinton, Volodymyr Mnih, Joel~Z Leibo, and Catalin Ionescu.
\newblock Using fast weights to attend to the recent past.
\newblock \emph{Advances in neural information processing systems}, 29, 2016.

\bibitem[Behrouz et~al.(2024)Behrouz, Zhong, and Mirrokni]{behrouz2024titans}
Ali Behrouz, Peilin Zhong, and Vahab Mirrokni.
\newblock Titans: Learning to memorize at test time.
\newblock \emph{arXiv preprint arXiv:2501.00663}, 2024.

\bibitem[Behrouz et~al.(2025)Behrouz, Li, Kacham, Daliri, Deng, Zhong, Razaviyayn, and Mirrokni]{behrouz2025atlas}
Ali Behrouz, Zeman Li, Praneeth Kacham, Majid Daliri, Yuan Deng, Peilin Zhong, Meisam Razaviyayn, and Vahab Mirrokni.
\newblock Atlas: Learning to optimally memorize the context at test time.
\newblock \emph{arXiv preprint arXiv:2505.23735}, 2025.

\bibitem[Bischof and Van~Loan(1987)]{bischof1987wy}
Christian Bischof and Charles Van~Loan.
\newblock The wy representation for products of householder matrices.
\newblock \emph{SIAM Journal on Scientific and Statistical Computing}, 8\penalty0 (1):\penalty0 s2--s13, 1987.

\bibitem[Cheng et~al.(2023)Cheng, Chen, and Sra]{cheng2023transformers}
Xiang Cheng, Yuxin Chen, and Suvrit Sra.
\newblock Transformers implement functional gradient descent to learn non-linear functions in context.
\newblock \emph{arXiv preprint arXiv:2312.06528}, 2023.

\bibitem[Choromanski et~al.(2020)Choromanski, Likhosherstov, Dohan, Song, Gane, Sarlos, Hawkins, Davis, Mohiuddin, Kaiser, et~al.]{choromanski2020rethinking}
Krzysztof Choromanski, Valerii Likhosherstov, David Dohan, Xingyou Song, Andreea Gane, Tamas Sarlos, Peter Hawkins, Jared Davis, Afroz Mohiuddin, Lukasz Kaiser, et~al.
\newblock Rethinking attention with performers.
\newblock \emph{arXiv preprint arXiv:2009.14794}, 2020.

\bibitem[Cirone and Salvi(2025)]{cirone2025parallelflow}
Nicola~Muca Cirone and Cristopher Salvi.
\newblock Parallelflow: Parallelizing linear transformers via flow discretization.
\newblock \emph{arXiv preprint arXiv:2504.00492}, 2025.

\bibitem[Dao and Gu(2024)]{dao2024transformers}
Tri Dao and Albert Gu.
\newblock Transformers are ssms: Generalized models and efficient algorithms through structured state space duality.
\newblock \emph{arXiv preprint arXiv:2405.21060}, 2024.

\bibitem[Fu et~al.(2022)Fu, Dao, Saab, Thomas, Rudra, and R{\'e}]{fu2022hungry}
Daniel~Y Fu, Tri Dao, Khaled~K Saab, Armin~W Thomas, Atri Rudra, and Christopher R{\'e}.
\newblock Hungry hungry hippos: Towards language modeling with state space models.
\newblock \emph{arXiv preprint arXiv:2212.14052}, 2022.

\bibitem[Grazzi et~al.(2024)Grazzi, Siems, Zela, Franke, Hutter, and Pontil]{grazzi2024unlocking}
Riccardo Grazzi, Julien Siems, Arber Zela, J{\"o}rg~KH Franke, Frank Hutter, and Massimiliano Pontil.
\newblock Unlocking state-tracking in linear rnns through negative eigenvalues.
\newblock \emph{arXiv preprint arXiv:2411.12537}, 2024.

\bibitem[Gu and Dao(2023)]{gu2023mamba}
Albert Gu and Tri Dao.
\newblock Mamba: Linear-time sequence modeling with selective state spaces.
\newblock \emph{arXiv preprint arXiv:2312.00752}, 2023.

\bibitem[Gu et~al.(2021)Gu, Goel, and R{\'e}]{gu2021efficiently}
Albert Gu, Karan Goel, and Christopher R{\'e}.
\newblock Efficiently modeling long sequences with structured state spaces.
\newblock \emph{arXiv preprint arXiv:2111.00396}, 2021.

\bibitem[Hinton and Plaut(1987)]{hinton1987using}
Geoffrey~E Hinton and David~C Plaut.
\newblock Using fast weights to deblur old memories.
\newblock In \emph{Proceedings of the ninth annual conference of the Cognitive Science Society}, pages 177--186, 1987.

\bibitem[Ho and K{\'a}lm{\'a}n(1966)]{ho1966effective}
BL~Ho and Rudolf~E K{\'a}lm{\'a}n.
\newblock Effective construction of linear state-variable models from input/output functions: Die konstruktion von linearen modeilen in der darstellung durch zustandsvariable aus den beziehungen f{\"u}r ein-und ausgangsgr{\"o}{\ss}en.
\newblock \emph{at-Automatisierungstechnik}, 14\penalty0 (1-12):\penalty0 545--548, 1966.

\bibitem[Hu et~al.(2025)Hu, Pan, Du, Lan, Tang, Wen, Liang, and Sun]{hu2025comba}
Jiaxi Hu, Yongqi Pan, Jusen Du, Disen Lan, Xiaqiang Tang, Qingsong Wen, Yuxuan Liang, and Weigao Sun.
\newblock Comba: Improving bilinear rnns with closed-loop control.
\newblock \emph{arXiv preprint arXiv:2506.02475}, 2025.

\bibitem[Kaiser and Sutskever(2015)]{kaiser2015neural}
{\L}ukasz Kaiser and Ilya Sutskever.
\newblock Neural gpus learn algorithms.
\newblock \emph{arXiv preprint arXiv:1511.08228}, 2015.

\bibitem[Kalman and Bucy(1961)]{kalman1961new}
Rudolph~E Kalman and Richard~S Bucy.
\newblock New results in linear filtering and prediction theory, 1961.

\bibitem[Kalman(1960)]{kalman1960new}
Rudolph~Emil Kalman.
\newblock A new approach to linear filtering and prediction problems, 1960.

\bibitem[Katharopoulos et~al.(2020)Katharopoulos, Vyas, Pappas, and Fleuret]{katharopoulos2020transformers}
Angelos Katharopoulos, Apoorv Vyas, Nikolaos Pappas, and Fran{\c{c}}ois Fleuret.
\newblock Transformers are rnns: Fast autoregressive transformers with linear attention.
\newblock In \emph{International conference on machine learning}, pages 5156--5165. PMLR, 2020.

\bibitem[Kung(1978)]{kung1978new}
Sun-Yuan Kung.
\newblock A new identification and model reduction algorithm via singular value decomposition.
\newblock In \emph{Proc. 12th asilomar conf. on circuits, systems and computer}, pages 705--714, 1978.

\bibitem[Kung and Lin(1981)]{kung1981state}
Sun-Yuan Kung and D~Lin.
\newblock A state-space formulation for optimal hankel-norm approximations.
\newblock \emph{IEEE Transactions on Automatic Control}, 26\penalty0 (4):\penalty0 942--946, 1981.

\bibitem[Liu et~al.(2024{\natexlab{a}})Liu, Wang, Wu, Feng, Stone, and Liu]{liu2024longhorn}
Bo~Liu, Rui Wang, Lemeng Wu, Yihao Feng, Peter Stone, and Qiang Liu.
\newblock Longhorn: State space models are amortized online learners.
\newblock \emph{arXiv preprint arXiv:2407.14207}, 2024{\natexlab{a}}.

\bibitem[Liu et~al.(2024{\natexlab{b}})Liu, Li, Wang, Wang, Liu, and Li]{liu2024short}
Zicheng Liu, Siyuan Li, Li~Wang, Zedong Wang, Yunfan Liu, and Stan~Z Li.
\newblock Short-long convolutions help hardware-efficient linear attention to focus on long sequences.
\newblock \emph{arXiv preprint arXiv:2406.08128}, 2024{\natexlab{b}}.

\bibitem[Penedo et~al.(2024)Penedo, Kydl{\'\i}{\v{c}}ek, Lozhkov, Mitchell, Raffel, Von~Werra, Wolf, et~al.]{penedo2024fineweb}
Guilherme Penedo, Hynek Kydl{\'\i}{\v{c}}ek, Anton Lozhkov, Margaret Mitchell, Colin Raffel, Leandro Von~Werra, Thomas Wolf, et~al.
\newblock The fineweb datasets: Decanting the web for the finest text data at scale.
\newblock \emph{Advances in Neural Information Processing Systems}, 37:\penalty0 30811--30849, 2024.

\bibitem[Peng et~al.(2023)Peng, Alcaide, Anthony, Albalak, Arcadinho, Biderman, Cao, Cheng, Chung, Grella, et~al.]{peng2023rwkv}
Bo~Peng, Eric Alcaide, Quentin Anthony, Alon Albalak, Samuel Arcadinho, Stella Biderman, Huanqi Cao, Xin Cheng, Michael Chung, Matteo Grella, et~al.
\newblock Rwkv: Reinventing rnns for the transformer era.
\newblock \emph{arXiv preprint arXiv:2305.13048}, 2023.

\bibitem[Poli et~al.(2023)Poli, Massaroli, Nguyen, Fu, Dao, Baccus, Bengio, Ermon, and R{\'e}]{poli2023hyena}
Michael Poli, Stefano Massaroli, Eric Nguyen, Daniel~Y Fu, Tri Dao, Stephen Baccus, Yoshua Bengio, Stefano Ermon, and Christopher R{\'e}.
\newblock Hyena hierarchy: Towards larger convolutional language models.
\newblock In \emph{International Conference on Machine Learning}, pages 28043--28078. PMLR, 2023.

\bibitem[Qin et~al.(2024)Qin, Sun, Li, Shen, Sun, and Zhong]{qin2024lightning}
Zhen Qin, Weigao Sun, Dong Li, Xuyang Shen, Weixuan Sun, and Yiran Zhong.
\newblock Lightning attention-2: A free lunch for handling unlimited sequence lengths in large language models.
\newblock \emph{arXiv preprint arXiv:2401.04658}, 2024.

\bibitem[Sayed(2011)]{sayed2011adaptive}
Ali~H Sayed.
\newblock \emph{Adaptive filters}.
\newblock John Wiley \& Sons, 2011.

\bibitem[Schlag et~al.(2021)Schlag, Irie, and Schmidhuber]{schlag2021linear}
Imanol Schlag, Kazuki Irie, and J{\"u}rgen Schmidhuber.
\newblock Linear transformers are secretly fast weight programmers.
\newblock In \emph{International conference on machine learning}, pages 9355--9366. PMLR, 2021.

\bibitem[Schmidhuber(1992)]{schmidhuber1992learning}
J{\"u}rgen Schmidhuber.
\newblock Learning to control fast-weight memories: An alternative to dynamic recurrent networks.
\newblock \emph{Neural Computation}, 4\penalty0 (1):\penalty0 131--139, 1992.

\bibitem[Sun et~al.(2024)Sun, Li, Dalal, Xu, Vikram, Zhang, Dubois, Chen, Wang, Koyejo, et~al.]{sun2024learning}
Yu~Sun, Xinhao Li, Karan Dalal, Jiarui Xu, Arjun Vikram, Genghan Zhang, Yann Dubois, Xinlei Chen, Xiaolong Wang, Sanmi Koyejo, et~al.
\newblock Learning to (learn at test time): Rnns with expressive hidden states.
\newblock \emph{arXiv preprint arXiv:2407.04620}, 2024.

\bibitem[Sun et~al.(2023)Sun, Dong, Huang, Ma, Xia, Xue, Wang, and Wei]{sun2023retentive}
Yutao Sun, Li~Dong, Shaohan Huang, Shuming Ma, Yuqing Xia, Jilong Xue, Jianyong Wang, and Furu Wei.
\newblock Retentive network: A successor to transformer for large language models.
\newblock \emph{arXiv preprint arXiv:2307.08621}, 2023.

\bibitem[Team et~al.(2025)Team, Zhang, Lin, Yao, Hu, Meng, Liu, Men, Yang, Li, et~al.]{team2025kimilinear}
Kimi Team, Yu~Zhang, Zongyu Lin, Xingcheng Yao, Jiaxi Hu, Fanqing Meng, Chengyin Liu, Xin Men, Songlin Yang, Zhiyuan Li, et~al.
\newblock Kimi linear: An expressive, efficient attention architecture.
\newblock \emph{arXiv preprint arXiv:2510.26692}, 2025.

\bibitem[Vaswani et~al.(2017)Vaswani, Shazeer, Parmar, Uszkoreit, Jones, Gomez, Kaiser, and Polosukhin]{vaswani2017attention}
Ashish Vaswani, Noam Shazeer, Niki Parmar, Jakob Uszkoreit, Llion Jones, Aidan~N Gomez, {\L}ukasz Kaiser, and Illia Polosukhin.
\newblock Attention is all you need.
\newblock \emph{Advances in neural information processing systems}, 30, 2017.

\bibitem[Von~Oswald et~al.(2023)Von~Oswald, Niklasson, Randazzo, Sacramento, Mordvintsev, Zhmoginov, and Vladymyrov]{von2023transformers}
Johannes Von~Oswald, Eyvind Niklasson, Ettore Randazzo, Jo{\~a}o Sacramento, Alexander Mordvintsev, Andrey Zhmoginov, and Max Vladymyrov.
\newblock Transformers learn in-context by gradient descent.
\newblock In \emph{International Conference on Machine Learning}, pages 35151--35174. PMLR, 2023.

\bibitem[von Oswald et~al.(2025)von Oswald, Scherrer, Kobayashi, Versari, Yang, Schlegel, Maile, Schimpf, Sieberling, Meulemans, et~al.]{von2025mesanet}
Johannes von Oswald, Nino Scherrer, Seijin Kobayashi, Luca Versari, Songlin Yang, Maximilian Schlegel, Kaitlin Maile, Yanick Schimpf, Oliver Sieberling, Alexander Meulemans, et~al.
\newblock Mesanet: Sequence modeling by locally optimal test-time training.
\newblock \emph{arXiv preprint arXiv:2506.05233}, 2025.

\bibitem[Wang et~al.(2025)Wang, Shi, and Fox]{wang2025test}
Ke~Alexander Wang, Jiaxin Shi, and Emily~B Fox.
\newblock Test-time regression: a unifying framework for designing sequence models with associative memory.
\newblock \emph{arXiv preprint arXiv:2501.12352}, 2025.

\bibitem[Yang et~al.(2023)Yang, Wang, Shen, Panda, and Kim]{yang2023gated}
Songlin Yang, Bailin Wang, Yikang Shen, Rameswar Panda, and Yoon Kim.
\newblock Gated linear attention transformers with hardware-efficient training.
\newblock \emph{arXiv preprint arXiv:2312.06635}, 2023.

\bibitem[Yang et~al.(2024{\natexlab{a}})Yang, Kautz, and Hatamizadeh]{yang2024gated}
Songlin Yang, Jan Kautz, and Ali Hatamizadeh.
\newblock Gated delta networks: Improving mamba2 with delta rule.
\newblock \emph{arXiv preprint arXiv:2412.06464}, 2024{\natexlab{a}}.

\bibitem[Yang et~al.(2024{\natexlab{b}})Yang, Wang, Zhang, Shen, and Kim]{yang2024parallelizing}
Songlin Yang, Bailin Wang, Yu~Zhang, Yikang Shen, and Yoon Kim.
\newblock Parallelizing linear transformers with the delta rule over sequence length.
\newblock \emph{arXiv preprint arXiv:2406.06484}, 2024{\natexlab{b}}.

\end{thebibliography}

\clearpage
\appendix

\renewcommand{\appendixpagename}{\centering \huge Appendix}
\appendixpage
\counterwithin{theorem}{section}

\startcontents[section]
\printcontents[section]{l}{1}{\setcounter{tocdepth}{2}}
\clearpage

\section{More on Background}

\noindent\textbf{Notation.}
We reuse the main-text regression notation: $\xb_t$ is the write feature, typically $\phi(\mathbf{k}_{t-1})$ under next-latent alignment; $\yb_t$ is the target, typically $\vb_t$; and $\Sbb_t$ is the fast-weight state. The residual is $\mathbf{r}_t:=\yb_t-\Sbb_{t-1}^\top\xb_t$.

\subsection{Recursive Least Squares}
Recursive least squares (RLS) realizes the exact cumulative ridge-regression solution online via the matrix inversion lemma. We record the constant-ridge, zero-prior-mean case $\lambda>0$ with $\Sbb_0=\mathbf{0}$. This is a reference solver, not the exact closed-form solution for the time-varying $\lambda_t$ recurrences used elsewhere in the paper.
Let
\begin{align*}
\Pb_{t-1} := \big(\lambda \Ib + \textstyle\sum_{s=1}^{t-1}\xb_s\xb_s^\top\big)^{-1},
\qquad \Pb_0:=\lambda^{-1}\Ib \ \ (\lambda>0).
\end{align*}
Define the gain and covariance updates
\begin{equation}
\label{eq:rls-gain}
\mathbf{g}_t=\frac{\Pb_{t-1}\xb_t}{1+\xb_t^\top \Pb_{t-1}\xb_t},
\qquad
\Pb_t=\Pb_{t-1}-\mathbf{g}_t \xb_t^\top \Pb_{t-1}.
\end{equation}
Then the parameter update is
\begin{equation}
\label{eq:rls-param}
\mathbf{r}_t := \yb_t-\Sbb_{t-1}^\top\xb_t,\qquad
\Sbb_t = \Sbb_{t-1} + \mathbf{g}_t\,\mathbf{r}_t^\top.
\end{equation}
These recursions cost $O(d_x^2 + d_x d_v)$ per step, where $d_x:=\dim(\xb_t)$, and satisfy that $\Sbb_t$ is the exact ridge-regression solution after processing $t$ samples, i.e. the minimizer of the cumulative ridge objective on $\{(\xb_s,\yb_s)\}_{s=1}^{t}$ with zero prior mean. A nonzero initial state corresponds instead to a prior-centered ridge objective. Exact online solvers of this form are much harder to parallelize than first-order recurrences, which is why MesaNet~\citep{von2025mesanet} uses an approximate chunk-parallel approach.

\subsection{First-Order Online Ridge}
\label{app:ridge_reference}

Algorithm~\ref{alg:ridge-sgd} records the reference sequential first-order update used throughout the main text.

\begin{algorithm}[t]
\caption{Online ridge via SGD with batch size $1$}
\label{alg:ridge-sgd}
\begin{algorithmic}[1]
\Require Keys $\{\mathbf{k}_t\}_{t=1}^{T}$, feature map $\phi$ (default identity), values $\{\vb_t\}_{t=1}^{T}$, boundary write feature $\mathbf{x}_1:=\mathbf{0}$ and $\mathbf{x}_t:=\phi(\mathbf{k}_{t-1})$ for $t\ge 2$, ridge coefficients $\{\lambda_t\}_{t=1}^{T}$ with $\lambda_t\ge 0$, gain $\beta_t\in(0,2)$, stabilizer $\varepsilon\ge 0$ (set $\eta_t{:=}0$ if the denominator is $0$), init $\Sbb_0=\mathbf{0}$.
\For{$t=1, \ldots, T$}
\State $\mathbf{x}_t \gets \mathbf{0}$ if $t=1$, else $\phi(\mathbf{k}_{t-1})$ \Comment{Write feature}
\State $\mathbf{y}_t \gets \vb_{t}$ \Comment{Target}
\State $\widehat{\mathbf{y}}_{t} \gets \Sbb_{t-1}^\top \mathbf{x}_{t}$ \Comment{Fast-memory prediction}
\State $\mathbf{r}_{t} \gets \mathbf{y}_{t} - \widehat{\mathbf{y}}_{t}$ \Comment{Residual}
\State $\eta_t \gets 0$ if $t=1$ or $\|\mathbf{x}_{t}\|_2^2+\lambda_t+\varepsilon=0$, else $\beta_t/(\|\mathbf{x}_{t}\|_2^2+\lambda_t+\varepsilon)$
\State $\Sbb_{t} \gets (1 - \eta_t \lambda_t)\Sbb_{t-1} + \eta_{t}\,\mathbf{x}_{t}\mathbf{r}_{t}^\top$
\EndFor
\end{algorithmic}
\end{algorithm}

\subsection{SSM Discretization}
\label{app:ssm_discretization}

Over step size $\Delta_t$, the exact zero-order-hold discretization of the continuous-time linear system is
\begin{equation}
\hb_t = \bar{\Ab}_t \hb_{t-1} + \bar{\Bb}_t x_t,
\qquad
\bar{\Ab}_t = e^{\Delta_t \Ab_t},
\qquad
\bar{\Bb}_t = \int_{0}^{\Delta_t} e^{(\Delta_t-s)\Ab_t}\Bb_t\,ds.
\end{equation}
When $\Ab_t$ is diagonal, the input integral admits the elementwise closed form
\[
\bar{\Bb}_t
= \left(\Ab_t^{-1}(e^{\Delta_t \Ab_t}-\Ib)\right)\Bb_t,
\]
where the factor $\Ab_t^{-1}(e^{\Delta_t \Ab_t}-\Ib)$ is interpreted elementwise and uses the limit $(e^{\Delta a}-1)/a\to\Delta$ as $a\to0$. A first-order approximation is
\[
\bar{\Bb}_t\approx \Delta_t\Bb_t,
\qquad
\hb_t\approx e^{\Delta_t \Ab_t}\hb_{t-1}+\Delta_t\Bb_t x_t,
\]
valid when $\|\Delta_t\Ab_t\|$ is small. 

\section{Experimental Setup}
\label{app:exp_setup}

\subsection{Language Model Training Setup}
For the language model runs in Section~\ref{sec:results}, all models are trained in bfloat16 with AdamW, tied input/output embeddings, Pre-Norm RMSNorm, no bias terms, and no dropout. We use $\mu$P-style width scaling, base learning rate $10^{-3}$ with cosine decay, 2{,}000 warmup steps, $(\beta_1,\beta_2)=(0.9,0.95)$, weight decay 0.1, and gradient clipping at 1.0. Each run is performed on a single 4-GPU node with NVIDIA H100 or H200 GPUs.

\section{Implementation Details}
\label{app:general_impl_positioning}

\subsection{Scaled Fast Weight Recurrences}
\label{app:scaled_fast_weight_details}

The normalized counterpart of the denominator-free scaled linear-attention recurrence in Section~\ref{sec:scaled_fast_weight} is
\[
\ob_t=\frac{\Sbb_t^\top \phi(\qb_t)}{\zb_t^\top \phi(\qb_t)+\varepsilon_{\rm attn}},
\qquad
\Sbb_t=\gamma_t\Sbb_{t-1}+\eta_t\,\mathbf{x}_t\vb_t^\top,
\qquad
\zb_t=\gamma_t\zb_{t-1}+\eta_t\,\mathbf{x}_t,
\]
\[
\alpha_t:=\min(\eta_t\lambda_t,1-\varepsilon_\gamma),
\qquad
\gamma_t:=1-\alpha_t.
\]
When the clamp is inactive, this is exactly the update with carry $1-\eta_t\lambda_t$. If the clamp activates, the shrinkage path is instead the positive-decay surrogate with effective coefficient $\alpha_t/\eta_t$ for $\eta_t>0$. With $\zb_0\ge 0$ and elementwise nonnegative feature maps, this normalized form remains attention-like only when $\gamma_t\ge 0$ for all $t$; the clamp enforces $\gamma_t\ge\varepsilon_\gamma>0$. The additive normalized case in Eq.~\eqref{eq:lin_attn_rec_shifted} is recovered by $\eta_t\equiv 1$ and $\lambda_t\equiv 0$.

For scaled non-sliding variants, let
\[
\bar{\lambda}_t := \lambda_{\text{scale}}\,\sigma(\tilde{\lambda}_t),
\qquad
\lambda_t^{\text{eff}} := \bar{\lambda}_t\,\|\mathbf{x}_t\|_2^2.
\]
Then
\[
\eta_t=\frac{\beta_t}{\|\mathbf{x}_t\|_2^2+\lambda_t^{\text{eff}}+\varepsilon},
\qquad
\gamma_t=1-\eta_t\lambda_t^{\text{eff}}.
\]
When $\varepsilon=0$ and $\|\mathbf{x}_t\|_2>0$, both the scalar decay fraction and the normalized rank-one edit term are invariant to uniform rescaling of $\mathbf{x}_t$. Sliding regression variants replace $\|\mathbf{x}_t\|_2^2$ with $\mu_t^{(B)}:=\lambda_{\max}(\bar{\Cb}_t^{(B)})$; sliding inner-product variants use the write-energy statistic $\bar E_t^{(B)}$.

\subsection{Signed-Feature Normalizers}
\label{app:normalized_linattn_caveat}

The normalized linear-attention read
\[
\ob_t=\frac{\Sbb_t^\top\phi(\qb_t)}{\zb_t^\top\phi(\qb_t)+\varepsilon_{\rm attn}}
\]
is mathematically safest when the read denominator is nonnegative, as with elementwise nonnegative feature maps satisfying $\phi(\cdot)\ge0$ and $\zb_t\ge0$. In signed-feature settings, adding a positive constant does not by itself prevent the denominator from vanishing or changing sign:
\[
\zb_t^\top\phi(\qb_t)+\varepsilon_{\rm attn}
\]
can still be zero or negative. Thus denominator-free inner-product reads are the default signed-feature interpretation in this paper. If a normalized signed-feature read is desired, the implementation must use an explicit safe denominator, clamp, or other sign-stable normalization.

\subsection{Step Size and Gating Conventions}
\label{app:deltanet_step_and_gating}

Prior DeltaNet literature often writes the raw gradient step size as $\beta_t$. We reserve $\eta_t$ for the actual step size and use $\beta_t$ for the dimensionless NLMS gain. This convention separates the learned plasticity control from the normalization statistic:
\[
\eta_t=\frac{\beta_t}{\|\mathbf{x}_t\|_2^2+\lambda_t+\varepsilon}
\]
for the rank-one regression rule, and analogously for the windowed and inner-product variants.

The standard Delta update
\[
\Sbb_t=(\Ib-\eta_t\mathbf{k}_t\mathbf{k}_t^\top)\Sbb_{t-1}
+\eta_t\mathbf{k}_t\vb_t^\top
\]
contains only implicit forgetting through the rank-one edit. Gated Delta Networks~\citep{yang2024gated} add an explicit scalar carry:
\begin{equation}
\Sbb_t
=g_t(\Ib-\eta_t\mathbf{k}_t\mathbf{k}_t^\top)\Sbb_{t-1}
+\eta_t\mathbf{k}_t\vb_t^\top,
\qquad
g_t\in[0,1].
\end{equation}
We write this gate as $g_t$ to avoid overloading the Falcon notation, where $\alpha_t:=\eta_t\lambda_t$ is a derived decay fraction and $\gamma_t:=1-\alpha_t$ is the carry. The gated Delta rule and Falcon-style ridge shrinkage both provide global state decay; the main distinction is that Falcon ties the carry to the local objective and the normalized step-size parameterization.

\subsection{Relation to Test-Time Training and Titans}
\label{app:ttt_titans_relation}

Test-Time Training (TTT) updates a subset of model parameters at inference by taking gradient steps on an internal, self-supervised objective, and recent work formalizes this mechanism as \emph{test-time regression} in a feature space~\citep{sun2024learning, wang2025test}. Our fast-weight update is a constrained, single-step instance of this paradigm: the recurrent state $\Sbb$ is the adapted parameter and Eq.~\eqref{eq:inst-loss} is the inner objective. Relative to generic TTT, we enforce strict autoregressive causality by updating only after $\vb_t$ is revealed and by writing it under the prefix write feature available at prediction time, $\mathbf{x}_t=\phi(\mathbf{k}_{t-1})$. Titans and ATLAS similarly view the recurrent state as fast memory optimized online with an internal objective over the stream~\citep{behrouz2024titans, behrouz2025atlas}; Falcon-3 can be read as a sliding-window specialization of that view. Our contribution is to make the write-feature/target alignment explicit and to derive normalized first-order rules and sliding-window variants compatible with SSD-style chunk-parallel training.

\subsection{Timing and Boundary Conventions}
\label{app:timing_boundary}

We use the read-after-write (RAW) indexing convention throughout the paper: after token $t$ is observed and written, the updated state $\Sbb_t$ is read to predict token $t{+}1$. This is the standard Transformer indexing shifted by one step relative to the read-before-write (RBW) view that predicts token $t$ from prefix $1{:}t{-}1$. Under next-latent alignment, the RAW update therefore uses the causal pair $(\phi(\mathbf{k}_{t-1}),\vb_t)$, or equivalently $(\phi(\mathbf{k}_i),\vb_{i+1})$ under standard indexing. Figure~\ref{fig:timing_ablation} visualizes the RAW/RBW timing conventions used throughout the paper.

The four cells in Fig.~\ref{fig:timing_ablation} correspond to different local-objective conventions. Under RAW, the shifted pairing $(\phi(\mathbf{k}_{t-1}),\vb_t)$ is aligned with the prefix feature that was available when $\vb_t$ was predicted. The unshifted RAW pairing $(\phi(\mathbf{k}_{t}),\vb_t)$ is still causal for next-token prediction, but it optimizes a different local objective based on same-step features. Likewise, the shifted RBW pairing is internally consistent, but it updates memory after the read that produced the position-$t$ prediction.

We initialize $\Sbb_0=\mathbf{0}$ and impose the feature-space boundary condition $\mathbf{x}_1:=\mathbf{0}$. In the identity-feature case this can be realized by a zero raw \textsc{BOS} key; for a generic feature map the condition is imposed directly in feature space. Since $\mathbf{x}_1=\mathbf{0}$ suppresses only the data write, not necessarily ridge shrinkage, we also set $\eta_1:=0$ at this boundary sentinel. Equivalently, $(\alpha_1,\gamma_1)=(0,1)$ in the derived-variable notation used for log-space unrolling. This convention is essential when a nonzero state is carried across segments.

With this convention, the internal fast-memory prediction at step $t$ is $\hat{\mathbf{y}}_t:=\Sbb_{t-1}^\top \mathbf{x}_t$, whereas the model readout at position $t$ uses the updated state $\Sbb_t$. The two quantities should not be conflated.

% =========================================================
% FIGURE: Formal Timing Semantics + 2x2 Ablation
% =========================================================
\begin{figure}[htbp!]
\centering
\resizebox{\linewidth}{!}{%
\begin{tikzpicture}[>=Latex, font=\small]

% ---------------------------------------------------------
% PANEL: Timing semantics (left)  -- balanced / explicit boxes
% ---------------------------------------------------------
\begin{scope}[local bounding box=timingPanel]

% Section label
\node[section_label] (timingTitle) at (7.2,3.95)
{\textbf{Autoregressive Fast Weight Updates}};

% =========================================================
% RBW row  (explicitly show READ and prediction box)
% =========================================================
\node[base_node, draw=appleGray!70, fill=appleLightGray, rounded corners=8pt,
      minimum width=1.8cm, font=\bfseries] (rbwTag) at (-0.2,2.35) {RBW};

\node[base_node, draw=appleGray!60, fill=white, minimum width=2.5cm] (rbwPrefix) at (2.1,2.35)
{Prefix $1{:}t-1$};

\node[state, scale=0.9, minimum width=2.3cm] (rbwRead) at (4.9,2.35)
{READ $\mathbf{S}_{t-1}$};

\node[base_node, dashed, draw=appleGray!70, fill=white, minimum width=2.2cm] (rbwPred) at (7.7,2.35)
{predict $t$};

\node[base_node, draw=appleRed!45, fill=appleRed!6, rounded corners=8pt,
      minimum width=2.6cm, font=\small] (rbwObs) at (10.7,2.35)
{observe $t$ \\ target $\mathbf{v}_t$};

\node[state, scale=0.9, minimum width=2.3cm] (rbwWrite) at (13.8,2.35)
{WRITE $\mathbf{S}_t$};

\draw[arrow] (rbwPrefix) -- (rbwRead);
\draw[arrow] (rbwRead) -- (rbwPred);
\draw[arrow] (rbwPred) -- (rbwObs);
\draw[arrow] (rbwObs) -- (rbwWrite);

% Pairing choices for RBW write
\node[tensor, scale=0.75] (rbwKshift) at (12.2,3.15) {$\mathbf{k}_{t-1}$};
\node[tensor, scale=0.75] (rbwKunsh)  at (12.2,1.55) {$\mathbf{k}_{t}$};

\draw[signal_arrow] (rbwKshift.east) -- ++(0.35,0) -| (rbwWrite.north);
\draw[signal_arrow] (rbwKunsh.east)  -- ++(0.35,0) -| (rbwWrite.south);

\node[font=\scriptsize, text=appleGray, anchor=west] at (15.35,2.95)
{shifted: $(\phi(\mathbf{k}_{t-1}),\mathbf{v}_t)$};
\node[font=\scriptsize, text=appleGray, anchor=west] at (15.35,1.75)
{unshifted: $(\phi(\mathbf{k}_{t}),\mathbf{v}_t)$};

% =========================================================
% RAW row (ours)  -- same number of explicit boxes
% =========================================================
\node[base_node, draw=appleBlue!80, fill=appleLightBlue, rounded corners=8pt,
      minimum width=1.8cm, font=\bfseries, text=appleBlue] (rawTag) at (-0.2,-0.35) {RAW};

% Left member of causal pair
\node[tensor, scale=0.9] (rawK) at (2.1,-0.35) {$\mathbf{k}_{t-1}$};

% Right member of causal pair (observation/target becomes available)
\node[base_node, draw=appleRed!55, fill=appleRed!8, rounded corners=8pt,
      minimum width=2.6cm, font=\small, align=center] (rawObs) at (4.9,-0.35)
{observe $t$ \\ target $\mathbf{v}_t$};

% Write / Read / Predict
\node[state, draw=appleBlue, fill=appleLightBlue, scale=0.9, minimum width=2.3cm] (rawWrite) at (8.1,-0.35)
{WRITE $\mathbf{S}_t$};

\node[state, draw=appleBlue, fill=appleLightBlue, scale=0.9, minimum width=2.2cm] (rawRead) at (10.9,-0.35)
{READ $\mathbf{S}_t$};

\node[base_node, dashed, draw=appleGray, fill=white, minimum width=2.4cm] (rawPred) at (13.8,-0.35)
{predict $t+1$};

% Arrows
\draw[arrow] (rawK) -- (rawObs);
\draw[arrow] (rawObs) -- (rawWrite);
\draw[arrow] (rawWrite) -- (rawRead);
\draw[arrow] (rawRead) -- (rawPred);

% Variable-role labels (put ABOVE the individual boxes)
\node[font=\bfseries\scriptsize, text=appleBlue, anchor=south] at ([yshift=2pt]rawK.north)
{$\mathbf{x}_t$};

\node[font=\bfseries\scriptsize, text=appleRed, anchor=south] at ([yshift=1pt]rawObs.north)
{$\mathbf{y}_t$};

% Optional: reinforce causal signal on the write edges
\draw[signal_arrow] (rawK.east) -- ++(0.25,0);
\draw[signal_arrow, draw=appleRed!70] (rawObs.east) -- ++(0.25,0);

% Dashed box enclosing the causal pair
\node[draw=appleBlue!70, dashed, rounded corners=8pt, inner sep=12pt, fit=(rawK)(rawObs)] (causalPair) {};

% Dashed-box title (relationship label, not a variable label)
\node[font=\bfseries\scriptsize, text=appleBlue, anchor=south] at ([yshift=6pt]causalPair.north)
{causal pair $(\mathbf{x}_t,\mathbf{y}_t)$ at step $t$};

% Timing note
\node[base_node, fill=white, draw=appleGray!50, rounded corners=8pt, inner sep=10pt, align=left]
(timingNote) at (7.7,-2.7) {%
$
\begin{aligned}
\textbf{Shifted (ours):}\;& \mathbf{x}_t=\phi(\mathbf{k}_{t-1}),\;\mathbf{y}_t=\mathbf{v}_t\\
\textbf{Unshifted:}\;& \mathbf{x}_t=\phi(\mathbf{k}_{t}),\;\mathbf{y}_t=\mathbf{v}_t
\end{aligned}
$
};

% Timing note
\node[base_node, fill=white, draw=appleGray!50, rounded corners=8pt, inner sep=10pt, align=left]
(capturedNode) at (0.7,-2.2) {%
$
\begin{aligned}
\text{Captured prefix }1:t-1 
\end{aligned}
$
};

\draw[arrow] (capturedNode) -- (rawK);

% Legend-ish note
\node[font=\scriptsize, text=appleGray, align=left, anchor=west] at (12.0,-2.7) {%
\textbf{RBW}: read then write\\
\textbf{RAW}: write then read (used in \Falcon)
};

\end{scope}

% ---------------------------------------------------------
% PANEL: 2x2 ablation (right)
% ---------------------------------------------------------
\begin{scope}[shift={(5.0,-9.2)}, local bounding box=ablationPanel]

\node[section_label] (ablTitle) at (3.9,3.45)
{\textbf{Pairings Ablation}};

% Outer box
\node[base_node, draw=appleGray!50, fill=white, rounded corners=10pt,
      minimum width=9.2cm, minimum height=6.2cm] (ablOuter) at (3.9,0.1) {};

% Column headers
\node[font=\bfseries\small, text=appleGray] at (3,2.2) {RBW};
\node[font=\bfseries\small, text=appleBlue] at (6.1,2.2) {RAW};

% Row headers
\node[font=\bfseries\scriptsize, text=appleGray, align=center] at (0.7,1.0)
{$\phi(\mathbf{k}_t)$ (unshifted)};
\node[font=\bfseries\scriptsize, text=appleBlue, align=center] at (0.7,-1.0)
{$\phi(\mathbf{k}_{t-1})$ (shifted)};

% Top-left cell
\node[base_node, draw=appleGray!50, fill=appleLightGray!40, rounded corners=8pt,
      minimum width=2.6cm, minimum height=1.4cm] (c11) at (3,1.0) {};
\node[font=\bfseries\scriptsize, text=appleGray] at (c11.center) {Run 1 (Original)};

% Top-right cell
\node[base_node, draw=appleGray!50, fill=appleLightGray!20, rounded corners=8pt,
      minimum width=2.6cm, minimum height=1.4cm] (c12) at (6.1,1.0) {};
\node[font=\bfseries\scriptsize, text=appleGray] at (c12.center) {Run 2};

% Bottom-left cell
\node[base_node, draw=appleGray!50, fill=appleLightGray!20, rounded corners=8pt,
      minimum width=2.6cm, minimum height=1.4cm] (c21) at (3,-1.0) {};
\node[font=\bfseries\scriptsize, text=appleGray] at (c21.center) {Run 3};

% Bottom-right cell (ours)
\node[base_node, draw=appleBlue!70, fill=appleLightBlue, rounded corners=8pt,
      minimum width=2.6cm, minimum height=1.4cm] (c22) at (6.1,-1.0) {};
\node[font=\bfseries\scriptsize, text=appleBlue] at (c22.center) {Run 4 (Ours)};

% Small arrows / outcome note
\draw[arrow, appleGray!60] (c11.south) -- ++(0,-0.35);
\draw[arrow, appleGray!60] (c12.south) -- ++(0,-0.35);
\draw[arrow, appleGray!60] (c21.south) -- ++(0,-0.35);
\draw[arrow, appleBlue!70] (c22.south) -- ++(0,-0.35);
\end{scope}

% ---------------------------------------------------------
% Backgrounds (same style)
% ---------------------------------------------------------
\begin{scope}[on background layer]
\node[draw=appleGray!30, left color=appleLightGray!30, right color=appleLightGray,
      rounded corners=16pt, fit=(timingPanel), inner sep=18pt] (timingBG) {};
\node[draw=appleGray!30, left color=appleLightGray!30, right color=appleLightGray,
      rounded corners=16pt, fit=(ablationPanel), inner sep=18pt] (ablBG) {};
\end{scope}

\end{tikzpicture}
} % end resizebox
\caption{\textbf{Formal timing conventions}.
Above: We explicitly distinguish \emph{read-before-write} (RBW) and \emph{read-after-write} (RAW) autoregressive fast-weight updates. Under the RAW convention used in \Falcon, the causally aligned training pair at step $t$ is $(\phi(\mathbf{k}_{t-1}), \mathbf{v}_t)$, i.e., a prefix write feature paired with the newly revealed target. Below: A 2$\times$2 design over timing convention $\{\text{RBW},\text{RAW}\}$ and write pairing \{same-step, shifted\} separates causal timing from mere index shifting; in the kernelized setting these pairings are understood in write-feature space, i.e.\ $\{\phi(\mathbf{k}_t),\phi(\mathbf{k}_{t-1})\}$.}
\label{fig:timing_ablation}
\end{figure}

\subsection{Implementation Notes and Related Update Rules}
\label{app:extra_impl_remarks}

\paragraph{Gain and step size.}
We use $\beta_t\in(0,2)$ for the dimensionless NLMS gain and $\eta_t$ for the resulting normalized step size. Some prior DeltaNet papers use $\beta_t$ for the raw step size; we keep the two quantities distinct. In the unclamped $\varepsilon=0$ analysis, $\beta_t>1$ induces a negative eigenvalue along the current write-feature direction, i.e., a stable sign flip that can improve state tracking~\citep{grazzi2024unlocking}.

\paragraph{Ridge as decay.}
For $\lambda_t>0$, the ridge term contributes the scalar carry $\gamma_t:=1-\eta_t\lambda_t$ in Eq.~\eqref{eq:ogd}. The full linear transition is $\Ab_t=\gamma_t\Ib_{d_x}-\eta_t\mathbf{x}_t\mathbf{x}_t^\top$, so the subspace orthogonal to $\mathbf{x}_t$ sees eigenvalue $\gamma_t$, while the write-feature direction sees eigenvalue $1-\eta_t(\|\mathbf{x}_t\|_2^2+\lambda_t)$. With $\varepsilon=0$ and Eq.~\eqref{eq:nlms-stepsize}, this directional eigenvalue equals $1-\beta_t$. When log-space unrolling requires positive decay, we clamp the derived fraction $\alpha_t:=\eta_t\lambda_t$; if the clamp activates, the implemented shrinkage may be interpreted as using an effective coefficient $\tilde\lambda_t:=\alpha_t/\eta_t\le\lambda_t$ whenever $\eta_t>0$ (and $0$ when $\eta_t=0$).

\paragraph{Log-space chunk-local renormalization.}
In long-context settings, direct products of positive decay factors can become numerically fragile in mixed precision. For scalar-decay recurrences of the form
\[
\Sbb_t
=
\big(\gamma_t\Ib-\eta_t\mathbf{x}_t\mathbf{x}_t^\top\big)\Sbb_{t-1}
\;+\;
\eta_t\,\mathbf{x}_t\mathbf{y}_t^\top,
\qquad
\gamma_t>0,
\]
with $\mathbf{y}_t$ the write target (e.g.\ $\mathbf{v}_t$), the unclamped carry is $\gamma_t=1-\eta_t\lambda_t$. Whenever positivity is not guaranteed a priori, we instead define
\[
\alpha_t:=\eta_t\lambda_t,
\qquad
\alpha_t\leftarrow \min(\alpha_t,1-\varepsilon_\gamma),
\qquad
\gamma_t:=1-\alpha_t,
\]
and compute
\[
\log\gamma_t:=\operatorname{log1p}(-\alpha_t)
\]
in fp32. Let
\[
c_0:=1,\qquad c_t:=\prod_{r=1}^{t}\gamma_r,\qquad \tilde{\Sbb}_t:=\Sbb_t/c_t.
\]
Then
\[
\tilde{\Sbb}_t
=
\big(\Ib-\tilde\eta_t\,\mathbf{x}_t\mathbf{x}_t^\top\big)\tilde{\Sbb}_{t-1}
\;+\;
\tilde\eta_t\,\mathbf{x}_t\tilde{\mathbf{y}}_t^\top,
\qquad
\tilde\eta_t:=\frac{\eta_t}{\gamma_t},
\qquad
\tilde{\mathbf{y}}_t:=\frac{\mathbf{y}_t}{c_{t-1}}.
\]
Hence exact equivalence requires \emph{both} step-size rescaling and inverse-decay rescaling of the write target; rescaling $\eta_t$ alone is not sufficient. In chunk-parallel implementations, we do not form global products $c_t$; instead, we accumulate $\log\gamma_t$ within each chunk and apply the equivalent local renormalization to boundary states and write targets. For normalized readouts of the form
\[
\ob_t=\frac{\Sbb_t^\top \phi(\qb_t)}{\zb_t^\top \phi(\qb_t)+\varepsilon_{\rm attn}},
\]
exact output preservation additionally requires forming the ratio after undoing the common rescaling, or equivalently, rescaling the stabilizer by the same factor. With a fixed $\varepsilon_{\rm attn}>0$ and directly using the renormalized states inside the ratio, the equivalence is only approximate. The denominator-free recurrence is unaffected by this caveat.

\paragraph{Optional local convolution.}
In some implementations, we apply a short causal convolution before forming $(\qb,\mathbf{k},\vb)$, following prior observations that local mixing can improve associative recall~\citep{fu2022hungry, liu2024short}. This choice is orthogonal to the fast-memory update itself and is not required by the theory.

\paragraph{Identity-matrix notation.}
We write $\Ib_{d_x}$ for the $d_x\times d_x$ identity in feature space, where $d_x\triangleq \dim(\mathbf{x}_t)$ (so $d_x=d$ for raw keys and $d_x=m$ for kernelized write-features), and $\Ib_{d_v}$ for the $d_v\times d_v$ identity in value space. When the dimension is unambiguous, we write $\Ib$.

\paragraph{Scale-coupled ridge.}
Appendix~\ref{app:scaled_fast_weight_details} gives the exact non-sliding scale-invariance identities. Sliding regression uses the smoothness statistic $\mu_t^{(B)}:=\lambda_{\max}(\bar{\Cb}_t^{(B)})$, while sliding inner-product writes use the window energy $\bar E_t^{(B)}$. In the current sliding-state implementation, this multiplier is treated as statistics-only (detached / stop-gradient) when forming the actual shrinkage coefficient; the step-size denominator still uses the live statistic.

\paragraph{Small-Gram smoothness for Falcon-3.}
For the sliding regression rule, the denominator uses
\[
\mu_t^{(B)}=\lambda_{\max}(\bar{\Cb}_t^{(B)})=\frac{\lambda_{\max}(\mathbf{X}_t^\top\mathbf{X}_t)}{B_t},
\]
where $\mathbf{X}_t\in\mathbb{R}^{d_x\times B_t}$ stacks the active-window write-features. Because $B_t\le B$ is small in the intended regime, one can either diagonalize the $B_t\times B_t$ Gram matrix exactly or approximate its top eigenvalue with a few power iterations; the theory in the main text is stated with the exact quantity.

\paragraph{Relation to RWKV-7 and Kimi Linear Attention.}
RWKV-style recurrent LLMs~\citep{peng2023rwkv} and more recent delta-rule variants (e.g., RWKV-7 and Kimi Linear Attention~\citep{team2025kimilinear}) enrich Delta-style fast weights with learned gating and in-context learning-rate control. Our formulation is complementary: we ground the write rule in a causal next-latent regression objective, make the required one-step key shift explicit, and stabilize learning with NLMS normalization. The same normalization/alignment can be used as a drop-in replacement for the local update inside delta-style blocks, independent of their particular gating or mixing parameterization.

\subsection{Sliding-Window Boundary State}
\label{app:sliding_window_remarks}

For Falcon-3 and Falcon-3A, the matrix state $\Sbb_t$ alone is not Markov for exact continuation. Exact continuation across a segment boundary also requires a fixed-width tail containing the last $B{-}1$ causal pairs $\{(\mathbf{x}_j,\vb_j)\}_{j=\max(2,t-B+2)}^{t}$, or an equivalent rolling-window representation. Resetting this tail changes the near-boundary update and should be treated as an explicit boundary reset.

For Falcon-3, Algorithm~\ref{alg:Falcon3_reference} is written for a fresh sequence with $\mathbf{x}_1=\mathbf{0}$. Algorithm~\ref{alg:Falcon3_forward} realizes the same overlap offline by gathering the extended write-pair set $J=\{p,\ldots,b\}$ at the chunk entry. Passing only the matrix boundary state between disjoint segments is therefore not exact.

For Falcon-3A, aggregated statistics such as $(\bar{\Nb}_t^{(B)},\bar E_t^{(B)},B_t)$ are not sufficient for indefinite continuation, because the next step must remove the oldest contribution exactly. The masked-attention unrolling in Section~\ref{sec:mb_ip_parallel} assumes the same overlap is available.

Away from boundaries, when $\lambda_t=0$ and $\eta_t=\eta$ is constant, each token appears in exactly $B$ consecutive window-averages with weight $1/B$, so its cumulative injection coefficient is $\sum_{s=j}^{j+B-1}\eta/B=\eta$, independent of $B$. This does not make the memory strictly local: after those $B$ direct injections, the token's effect can persist through later carry factors. The window therefore controls an \emph{effective} memory horizon rather than strict local support; strict locality would require explicit truncation or a hard zero-carry override.

\section{Falcon-3A Mask and Backward Pass}
\label{app:Falcon3a_details}

\subsection{Mask Identities and Chunk-Local Evaluation}
\label{app:Falcon3a_mask}

This subsection records the mask formulas underlying Section~\ref{sec:mb_ip_parallel}. Starting from
\[
\Ob
=
\operatorname{Diag}(\boldsymbol{\delta})\,\Qb\Sbb_0
\;+\;
(\Qb\mathbf{X}^\top\odot \Mb)\Vb,
\qquad
\boldsymbol{\delta}:=(\delta_1,\ldots,\delta_L)^\top,
\]
the fast-weight contribution is a masked-attention term and the boundary contribution is a decayed-history term. The causal mask is step dependent (and input-conditioned through $\eta_s$ when the write energy varies): each coefficient $M_{t,j}$ is a sum of at most $B$ decayed write coefficients $\eta_s/B_s$, and it is generally nonzero for all $j\le t$. Moreover, $M_{t,j}$ need not be monotone in the lag $t-j$ because token $j$ is re-injected into $\Nb_s^{(B)}$ for $s=j,\ldots,j+B-1$ before subsequent decay acts on the state.

In the stationary case $\eta_s=\eta$ and $\gamma_s=\gamma\in(0,1)$,
\[
M_{t,j}=\frac{\eta}{B}\sum_{u=\max(0,\,t-j-B+1)}^{t-j}\gamma^{u},
\]
for full-window interior positions where $B_s=B$ throughout the contributing range. Near sequence or segment boundaries, replace $B$ by the realized $B_s$ in the sum. This makes the moving-sum then exponential-tail structure explicit. All expressions here follow the paper-wide read-after-write convention: the readout at position $t$ uses the updated state $\Sbb_t$ and is used to predict token $t{+}1$.

\paragraph{Mask structure.}
The mask $M_{t,j}$ can be written explicitly as
\begin{align}
M_{t,j} = \sum_{s=j}^{\min(t, j+B-1)} \frac{\eta_s}{B_s}\prod_{r=s+1}^{t} (1-\eta_r\lambda_r),
\end{align}
with the implemented recurrence replacing $1-\eta_r\lambda_r$ by the clamped $\gamma_r$ when needed. This separates the window-average accumulation from the global multiplicative carry.

\paragraph{Log-space cumulative decay.}
Directly forming $\prod_{r=s+1}^{t}\gamma_r$ can underflow over long sequences, especially in reduced precision. Define the log cumulative decay
\[
\alpha_t := \min(\eta_t\lambda_t,\,1-\varepsilon_\gamma),
\qquad
\zeta_0 := 0,\qquad \zeta_t := \zeta_{t-1} + \log\gamma_t \quad (t\ge 1),
\]
with $\log\gamma_t=\operatorname{log1p}(-\alpha_t)$ computed in fp32, so that $\prod_{r=s+1}^{t}\gamma_r = \exp(\zeta_t-\zeta_s)$ and
\begin{align}
M_{t,j} = \sum_{s=j}^{\min(t, j+B-1)} \frac{\eta_s}{B_s}\,\exp(\zeta_t-\zeta_s).
\label{eq:mb_ip_mask_logspace}
\end{align}

\paragraph{Chunk-local renormalization.}
Even with the log representation, explicitly materializing global $\zeta_t$ and then forming $\exp(\zeta_t-\zeta_s)$ at very large lags can underflow in reduced precision and is unnecessary for chunked kernels. All chunk-parallel algorithms in this paper therefore reset the log-prefix to $0$ at chunk boundaries, use only within-chunk differences $u_i-u_s$ with $u_0=0$, and propagate boundary states via the chunk-exit decay $g^{(k)}=\prod_{t\in\text{chunk }k}\gamma_t$. This is exactly the local log-decay trick used by Algorithm~\ref{alg:Falcon3a_chunked_attn}.

\paragraph{Chunk-parallel evaluation.}
For implementation, it is helpful to expose the mask as the composition of (i) a causal decay kernel and (ii) a $B$-banded moving-average window operator:
\[
D_{t,s}:=\mathbb{I}[s\le t]\prod_{r=s+1}^{t}\gamma_r,
\qquad
B_s^+:=\max(1,B_s),
\qquad
A_{s,j}:=\frac{\mathbb{I}[j\in\mathcal{I}_s]}{B_s^+},
\qquad
\Mb = \Db\,\operatorname{Diag}(\boldsymbol{\eta})\,\Ab,
\]
where $\boldsymbol{\eta}:=(\eta_1,\ldots,\eta_L)^\top$. Under our boundary convention ($\mathcal{I}_1=\emptyset$ and $\eta_1=0$), the $s=1$ row of $\Ab$ is all zeros, and for all $s\ge 2$ we have $B_s^+=B_s$, so this agrees with $A_{s,j}=\mathbb{I}[j\in\mathcal{I}_s]/B_s$ while avoiding a $0/0$ definition at $s=1$. This yields the three-phase chunked algorithm used in the main text: per-chunk precomputation (parallel), inter-chunk boundary propagation (short recurrence/scan), and final output materialization (parallel).

\subsection{Backward Pass}
\label{app:Falcon3a_backward}

The backward pass is reverse-mode differentiation through Algorithm~\ref{alg:Falcon3a_chunked_attn}. In the final $(\beta,\lambda,\eta)$ parameterization, the only nontrivial scalar chain rules come from
\[
\eta_t = \frac{\beta_t}{d_t},
\qquad
d_t:=\bar E_t^{(B)}+\lambda_t+\varepsilon,
\qquad
\alpha_t = \min(\eta_t\lambda_t,1-\varepsilon_\gamma),
\qquad
\bar E_t^{(B)} = \frac{1}{B_t}\sum_{j\in\mathcal{I}_t}\|\mathbf{x}_j\|_2^2.
\]

Let $\bar\eta_t$ and $\bar\alpha_t$ denote the adjoints arriving from the structured-mask and log-decay paths, respectively. For unclamped steps ($\eta_t\lambda_t<1-\varepsilon_\gamma$),
\[
\bar\eta_t \mathrel{+}= \lambda_t\,\bar\alpha_t,
\qquad
\bar\lambda_t \mathrel{+}= \eta_t\,\bar\alpha_t.
\]
Then
\[
\bar\beta_t \mathrel{+}= \frac{\bar\eta_t}{d_t},
\qquad
\bar d_t = -\frac{\beta_t}{d_t^2}\,\bar\eta_t,
\qquad
\bar\lambda_t \mathrel{+}= \bar d_t,
\qquad
g_t^{(E)} \mathrel{+}= \bar d_t,
\]
where $g_t^{(E)}$ denotes the adjoint of $\bar E_t^{(B)}$.

Finally, the window-energy path contributes
\[
\bar{\mathbf{x}}_j \mathrel{+}= 2\left(\sum_{t:\,j\in\mathcal{I}_t}\frac{g_t^{(E)}}{B_t}\right)\mathbf{x}_j.
\]
Equivalently, in the chunked notation of Algorithm~\ref{alg:Falcon3a_chunked_attn},
\[
\bar{\mathbf{X}}_{\rm ext}^{(k)}
\mathrel{+}= 2\big((\Ab^{(k)\top}\mathbf g^{(k,E)})\mathbf 1_{d_x}^\top\big)\odot \mathbf{X}_{\rm ext}^{(k)},
\]
where $\mathbf g^{(k,E)}\in\mathbb{R}^{C}$ stores the adjoints of the chunk-local window energies.

If the same scale-coupled base-ridge parameterization as in \Falcon-3 is used, first run the backward pass above with respect to the actual coefficients $\lambda_t$. Then apply the outer chain rule through
\[
\lambda_t=\bar\lambda_t\,\bar E_t^{(B)}.
\]
Thus, the base-ridge gradient is the gradient with respect to the actual coefficient multiplied by $\bar E_t^{(B)}$; if the multiplier is live, add the corresponding $\bar\lambda_t$-scaled term to the energy gradient. With the statistics-only / stop-gradient multiplier used in the current sliding-state implementation, only the base-ridge path remains, and $\bar E_t^{(B)}$ is treated as detached.

When the clamp is active, multiply the $\bar\alpha_t$ path by the indicator $\mathbb{I}[\eta_t\lambda_t<1-\varepsilon_\gamma]$. The remaining derivatives through the block-causal mask, local log-prefix decays, boundary propagation, and output materialization are unchanged once $\eta_t$ and $\gamma_t$ have been materialized. Because the boundary step is hard-set to $\eta_1=0$ and $\gamma_1=1$, it carries no gradient to $(\beta_1,\lambda_1)$.

\section{DeltaNet WY Parallelization}
\label{sec:delta-parallel}

The basic Delta Network update in Section~\ref{sec:autoregressive} has the form
\begin{align}
\Sbb_t
&= \big(\Ib - \eta_{t}\mathbf{x}_{t}\mathbf{x}_{t}^\top\big)\Sbb_{t-1}
+ \eta_{t}\mathbf{x}_{t}\vb_t^\top,
\label{eq:delta-basic-affine}
\end{align}
where $\mathbf{x}_t:=\phi(\mathbf{k}_{t-1})$ is the shifted write feature. This corresponds to the no-ridge rank-one case $\lambda_t=0$ of Eq.~\eqref{eq:ogd}. When ridge regularization / weight decay is enabled, Eq.~\eqref{eq:ogd} changes the transition to $\Ab_t=\gamma_t\Ib-\eta_t\mathbf{x}_{t}\mathbf{x}_{t}^\top$ with $\gamma_t:=1-\eta_t\lambda_t$. For $\gamma_t>0$ one can factor $\Ab_t=\gamma_t(\Ib-\ub'_t{\ub'_t}^\top)$ where $\ub'_t:=\sqrt{\eta_t/\gamma_t}\,\mathbf{x}_{t}$, and absorb $\prod \gamma_t$ into a cumulative rescaling of boundary states and bias terms. For clarity, the WY derivation below presents the no-ridge rank-one case $\lambda_t=0$.

\paragraph{Explicit reduction to the no-ridge rank-one case (for $\lambda_t>0$).}
Because $\gamma_t$ is a \emph{scalar}, one can reuse the WY machinery unchanged via a simple rescaling.
Let $c_0:=1$ and $c_t:=\prod_{r=1}^{t}\gamma_r$ and define
$\tilde{\Sbb}_t:=\Sbb_t/c_t$, $\tilde{\eta}_t:=\eta_t/\gamma_t$, and $\tilde{\vb}_t:=\vb_t/c_{t-1}$.
Then the decayed recurrence is equivalent to the no-ridge rank-one recurrence
\[
\tilde{\Sbb}_t
= \big(\Ib-\tilde{\eta}_t\,\mathbf{x}_{t}\mathbf{x}_{t}^\top\big)\tilde{\Sbb}_{t-1} + \tilde{\eta}_t\,\mathbf{x}_{t}\tilde{\vb}_t^\top,
\qquad \ob_t=c_t\tilde{\ob}_t.
\]
\noindent\textbf{Chunk-local renormalization.}
Forming $c_t$ (or $c_{t-1}^{-1}$) over long sequences can underflow or overflow; all chunk-wise kernels therefore apply this rescaling \emph{within each chunk} by resetting the log-prefix decay to $0$ at chunk boundaries (see Algorithm~\ref{alg:deltanet2-parallelized-decay} and Algorithm~\ref{alg:deltanet2_lite_fwd}).

\paragraph{Step-size convention.}
Throughout this appendix, $\eta_t$ denotes the actual step size used in the rank-one update (e.g., the NLMS step size from Eq.~\eqref{eq:nlms-stepsize}).
In the \emph{no-ridge rank-one} setting treated here ($\lambda_t=0$), a gain $\beta_t\in(0,2)$ corresponds to
\[
\eta_t=\frac{\beta_t}{\|\mathbf{x}_{t}\|_2^2+\varepsilon},
\]
with the usual convention $\eta_t:=0$ when the denominator vanishes. Hence $\eta_t\ge 0$ and $\sqrt{\eta_t}$ is well-defined.

\subsection{Affine WY Form}
We explicitly define the affine structure. Write
\begin{align}
\Ab_t := \Ib - \eta_{t}\mathbf{x}_{t}\mathbf{x}_{t}^\top\in\mathbb{R}^{d_x\times d_x},
\qquad
\Bb_t := \eta_{t}\mathbf{x}_{t}\vb_t^\top\in\mathbb{R}^{d_x\times d_v},
\label{eq:delta-A-B-def}
\end{align}
so that the recurrence becomes $\Sbb_t = \Ab_t \Sbb_{t-1} + \Bb_t$ (no-ridge rank-one; $\lambda_t=0$).
Eq.~\eqref{eq:delta-A-B-def} reveals a rank-one structure $\Ab_t = \Ib - \ub_t\ub_t^\top$ where $\ub_t := \sqrt{\eta_t}\,\mathbf{x}_{t}$.
\citet{bischof1987wy} show that the product of such matrices over a block can be written as a single WY transform (or UT transform). For a chunk of size $C$, the accumulated transition matrix $\widehat{\Ab} = \Ab_{t+C} \cdots \Ab_{t+1}$ is:
\begin{align}
\widehat{\Ab} = \Ib - \Ub \Tb \Ub^\top,
\label{eq:wy-block}
\end{align}
where $\Ub \in\mathbb{R}^{d_x\times C}$ collects the update vectors $\ub$, and $\Tb \in \mathbb{R}^{C \times C}$ is a triangular mixing matrix.

\subsection{Chunk-Parallel Forward Pass}

Similar to \citet{yang2024parallelizing}, we have the following three-phase parallel algorithm. Phase 1 computes the local operators in parallel. Phase 2 performs a fast recurrence over chunk boundaries. Phase 3 materializes the outputs by combining the propagated state with a local causal attention mechanism.

\begin{algorithm}[H]
\caption{Parallel DeltaNet Training (Forward, $\lambda_t=0$)}
\label{alg:deltanet-parallel}
\begin{algorithmic}[1]
\Require Shifted write features $\mathbf{x}\in\mathbb{R}^{L\times d_x}$ (with $\mathbf{x}_t:=\phi(\mathbf{k}_{t-1})$), queries $\Qb\in\mathbb{R}^{L\times d_x}$, values $\Vb\in\mathbb{R}^{L\times d_v}$, step sizes $\boldsymbol{\eta}\in\mathbb{R}^{L}$ with $\eta_t\ge 0$ (so $\sqrt{\eta_t}$ is defined), initial state $\Sbb_{\text{init}}\in\mathbb{R}^{d_x\times d_v}$ (default $\mathbf{0}$). Chunk size $C$ (assume $C\mid L$). \Comment{No-ridge rank-one case $\lambda_t=0$.}
\State Reshape inputs into $M = L/C$ chunks: $\Ub_{\rm raw}^{(k)}, \Qb^{(k)} \in \mathbb{R}^{d_x \times C}$ and $\Vb^{(k)} \in \mathbb{R}^{C \times d_v}$, with $\boldsymbol{\eta}^{(k)} \in \mathbb{R}^{C}$.
\State \Comment{Here $\mathbf{x}_t$ is the shifted write-feature stream.}

\State \Comment{\textbf{Phase 1: Intra-chunk structural pre-computation (Parallel)}}
\For{$k = 1$ \textbf{to} $M$ \textbf{in parallel}}
\State $\Ub^{(k)} \gets \Ub_{\rm raw}^{(k)} \cdot \operatorname{diag}(\sqrt{\boldsymbol{\eta}^{(k)}})$ \Comment{Scale key-columns}
\State $\tilde{\Vb}^{(k)} \gets \operatorname{diag}(\sqrt{\boldsymbol{\eta}^{(k)}})\,\Vb^{(k)}$ \Comment{Scale value-rows}
\State $\Gb^{(k)} \gets {\Ub^{(k)}}^\top \Ub^{(k)}$ \Comment{Gram Matrix}
\State $\Lb^{(k)} \gets \operatorname{tril}(\Gb^{(k)}, -1) + \Ib$ \Comment{Implicit $\Tb^{-1}$}
\EndFor

\State \Comment{\textbf{Phase 2: Inter-chunk Recurrence (Sequential)}}
\State $\Sbb^{(1)}_{\text{in}} \gets \Sbb_{\text{init}}$
\For{$k = 1$ \textbf{to} $M$}
\State $\Pb^{(k)} \gets {\Ub^{(k)}}^\top \Sbb^{(k)}_{\text{in}}$
\State $\Rb^{(k)} \gets \tilde{\Vb}^{(k)} - \Pb^{(k)}$
\State $\Bb^{(k)} \gets \operatorname{solve\_triangular}(\Lb^{(k)}, \Rb^{(k)})$ \Comment{single residual solve}
\State $\Sbb^{(k+1)}_{\text{in}} \gets \Sbb^{(k)}_{\text{in}} + \Ub^{(k)}\Bb^{(k)}$
\EndFor
\State $\Sbb_{\text{final}}\gets \Sbb^{(M+1)}_{\text{in}}$ \Comment{Final state after processing length $L$}

\State \Comment{\textbf{Phase 3: Materialize Outputs (Parallel)}}
\For{$k = 1$ \textbf{to} $M$ \textbf{in parallel}}
\State $\Hb^{(k)} \gets {\Qb^{(k)}}^\top \Sbb^{(k)}_{\text{in}}$ \Comment{$C \times d_v$: Query interaction with incoming state}
\State $\boldsymbol{\Lambda}^{(k)} \gets {\Qb^{(k)}}^\top \Ub^{(k)}$ \Comment{$C \times C$: Query-Key interaction matrix}
\State $\Mb^{(k)} \gets \operatorname{tril}(\boldsymbol{\Lambda}^{(k)}, 0)$ \Comment{Causal scores including diagonal}
\State $\Ob^{(k)} \gets \Hb^{(k)} + \Mb^{(k)} \Bb^{(k)}$
\EndFor
\State \Return $(\Ob,\Sbb_{\text{final}})$
\end{algorithmic}
\end{algorithm}

\paragraph{Single residual solve.}
The two-solve decomposition is algebraically unnecessary because the local value path and the projected incoming-state path share the same unit-lower-triangular matrix.
Let
\[
\Pb^{(k)}:={\Ub^{(k)}}^\top\Sbb^{(k)}_{\text{in}},
\qquad
\Rb^{(k)}:=\tilde{\Vb}^{(k)}-\Pb^{(k)},
\qquad
\Bb^{(k)}:=\Lb^{(k)-1}\Rb^{(k)}.
\]
By linearity,
\[
\Bb^{(k)}
=
\Lb^{(k)-1}\tilde{\Vb}^{(k)}-\Lb^{(k)-1}\Pb^{(k)},
\]
so $\Bb^{(k)}$ is exactly the difference between the older ``intra'' and ``history'' solves.
Hence
\[
\Sbb^{(k)}_{\text{out}}
=
\Sbb^{(k)}_{\text{in}}+\Ub^{(k)}\Bb^{(k)},
\qquad
\Ob^{(k)}
=
\Hb^{(k)}+\Mb^{(k)}\Bb^{(k)},
\]
which removes one triangular solve per chunk in the forward pass without changing the recurrence.

\subsection{Chunk-Parallel Backward Pass}
\label{app:deltanet-parallel-bwd}

\begin{breakablealgorithm}
\caption{Parallel DeltaNet Training (Backward, $\lambda_t=0$)}
\label{alg:deltanet-parallel-bwd}
\small
\begin{algorithmic}[1]
\Require Upstream gradient $\bar{\Ob}\in\mathbb{R}^{L\times d_v}$ and cached forward tensors from Algorithm~\ref{alg:deltanet-parallel} (no-ridge rank-one, $\lambda_t=0$) for each chunk $k\in\{1,\dots,M\}$:
unscaled keys $\Ub_{\rm raw}^{(k)}\in\mathbb{R}^{d\times C}$, queries $\Qb^{(k)}\in\mathbb{R}^{d\times C}$, values $\Vb^{(k)}\in\mathbb{R}^{C\times d_v}$, step sizes $\boldsymbol{\eta}^{(k)}\in\mathbb{R}^{C}$,
scaled keys $\Ub^{(k)}=\Ub_{\rm raw}^{(k)}\operatorname{diag}(\sqrt{\boldsymbol{\eta}^{(k)}})$,
scaled values $\tilde{\Vb}^{(k)}=\operatorname{diag}(\sqrt{\boldsymbol{\eta}^{(k)}})\Vb^{(k)}$,
$\Lb^{(k)}\in\mathbb{R}^{C\times C}$, residual solves $\Bb^{(k)}=\operatorname{solve\_triangular}\!\big(\Lb^{(k)},\,\tilde{\Vb}^{(k)}-{\Ub^{(k)}}^\top\Sbb^{(k)}_{\text{in}}\big)$,
and boundary states $\Sbb^{(k)}_{\text{in}}\in\mathbb{R}^{d\times d_v}$.
\Ensure Gradients $\bar{\Qb}\in\mathbb{R}^{L\times d}$, $\bar{\Ub}_{\rm raw}\in\mathbb{R}^{L\times d}$, $\bar{\Vb}\in\mathbb{R}^{L\times d_v}$, $\bar{\boldsymbol{\eta}}\in\mathbb{R}^{L}$, and $\bar{\Sbb}_{\text{init}}\in\mathbb{R}^{d\times d_v}$.

\State Reshape $\bar{\Ob}$ into chunks $\bar{\Ob}^{(k)}\in\mathbb{R}^{C\times d_v}$, $M=L/C$.
\State Initialize $\bar{\Qb}^{(k)},\bar{\Ub}^{(k)},\bar{\tilde{\Vb}}^{(k)},\bar{\Lb}^{(k)},\bar{\Sbb}^{(k)}_{\text{in}},\bar{\Bb}^{(k)},\bar{\Pb}^{(k)},\bar{\Rb}^{(k)} \gets \mathbf{0}$ for all $k$.
\State Set $\bar{\Sbb}^{(M+1)}_{\text{in}}$ to the upstream gradient on the final chunk-exit state (default $\mathbf{0}$ if unused by the loss).

\State \Comment{\textbf{Adjoint identity (triangular solve).} Let $L$ be unit-lower triangular with fixed diagonal. If $X=L^{-1}B$,}
\State \Comment{then $\bar{B}=L^{-T}\bar{X}$ and $\bar{L}\mathrel{-{=}}\operatorname{tril}(\bar{B}X^\top,-1)$.}

\State \Comment{\textbf{Adjoint identity (Gram).} If $G=U^\top U$, then $\bar{U}\gets \bar{U}+U(\bar{G}+\bar{G}^\top)$.}

\State \Comment{\textbf{Phase 3$^\top$: Backprop through output materialization (Parallel)}}
\For{$k=1$ \textbf{to} $M$ \textbf{in parallel}}
\State \Comment{No-ridge rank-one identity: $\Ob^{(k)}=\Hb^{(k)}+\Mb^{(k)}\Bb^{(k)}$ with $\Hb^{(k)}=(\Qb^{(k)})^\top \Sbb^{(k)}_{\text{in}}$ and $\Mb^{(k)}=\operatorname{tril}((\Qb^{(k)})^\top \Ub^{(k)},0)$.}
\State $\bar{\Hb}^{(k)} \gets \bar{\Ob}^{(k)}$
\State $\boldsymbol{\Lambda}^{(k)} \gets (\Qb^{(k)})^\top \Ub^{(k)}$
\State $\Mb^{(k)} \gets \operatorname{tril}(\boldsymbol{\Lambda}^{(k)}, 0)$
\State $\bar{\Mb}^{(k)} \gets \bar{\Ob}^{(k)}(\Bb^{(k)})^\top$
\State $\bar{\Mb}^{(k)} \gets \operatorname{tril}(\bar{\Mb}^{(k)}, 0)$ \Comment{$\Mb^{(k)}$ is causal (includes diagonal)}
\State $\bar{\Bb}^{(k)} \gets \bar{\Bb}^{(k)} + (\Mb^{(k)})^\top \bar{\Ob}^{(k)}$

\State \Comment{Backprop $\Hb^{(k)}=(\Qb^{(k)})^\top \Sbb^{(k)}_{\text{in}}$}
\State $\bar{\Qb}^{(k)} \gets \bar{\Qb}^{(k)} + \Sbb^{(k)}_{\text{in}}(\bar{\Hb}^{(k)})^\top$
\State $\bar{\Sbb}^{(k)}_{\text{in}} \gets \bar{\Sbb}^{(k)}_{\text{in}} + \Qb^{(k)}\bar{\Hb}^{(k)}$

\State \Comment{Backprop $\Mb^{(k)}=\operatorname{tril}(\boldsymbol{\Lambda}^{(k)},0)$ and $\boldsymbol{\Lambda}^{(k)}=(\Qb^{(k)})^\top \Ub^{(k)}$}
\State $\bar{\boldsymbol{\Lambda}}^{(k)} \gets \bar{\Mb}^{(k)}$
\State $\bar{\Qb}^{(k)} \gets \bar{\Qb}^{(k)} + \Ub^{(k)}(\bar{\boldsymbol{\Lambda}}^{(k)})^\top$
\State $\bar{\Ub}^{(k)} \gets \bar{\Ub}^{(k)} + \Qb^{(k)}\bar{\boldsymbol{\Lambda}}^{(k)}$
\EndFor

\State \Comment{\textbf{Phase 2$^\top$: Backprop through inter-chunk recurrence (Sequential reverse)}}
\For{$k=M$ \textbf{down to} $1$}
\State $\bar{\Sbb}^{(k)}_{\text{next}} \gets \bar{\Sbb}^{(k+1)}_{\text{in}}$

\State \Comment{State update: $\Sbb^{(k+1)}_{\text{in}}=\Sbb^{(k)}_{\text{in}}+\Ub^{(k)}\Bb^{(k)}$}
\State $\bar{\Sbb}^{(k)}_{\text{in}} \gets \bar{\Sbb}^{(k)}_{\text{in}} + \bar{\Sbb}^{(k)}_{\text{next}}$
\State $\bar{\Ub}^{(k)} \gets \bar{\Ub}^{(k)} + \bar{\Sbb}^{(k)}_{\text{next}}(\Bb^{(k)})^\top$
\State $\bar{\Bb}^{(k)} \gets \bar{\Bb}^{(k)} + (\Ub^{(k)})^\top \bar{\Sbb}^{(k)}_{\text{next}}$

\State \Comment{Solve: $\Bb^{(k)}=\operatorname{solve\_triangular}(\Lb^{(k)},\Rb^{(k)})$}
\State $\bar{\Rb}^{(k)}_{\text{solve}} \gets \operatorname{solve\_triangular}(\Lb^{(k)\top}, \bar{\Bb}^{(k)})$
\State $\bar{\Rb}^{(k)} \gets \bar{\Rb}^{(k)} + \bar{\Rb}^{(k)}_{\text{solve}}$
\State $\bar{\Lb}^{(k)} \gets \bar{\Lb}^{(k)} - \operatorname{tril}\big(\bar{\Rb}^{(k)}_{\text{solve}}(\Bb^{(k)})^\top,\,-1\big)$

\State \Comment{Residual input: $\Rb^{(k)}=\tilde{\Vb}^{(k)}-\Pb^{(k)}$}
\State $\bar{\tilde{\Vb}}^{(k)} \gets \bar{\tilde{\Vb}}^{(k)} + \bar{\Rb}^{(k)}$
\State $\bar{\Pb}^{(k)} \gets \bar{\Pb}^{(k)} - \bar{\Rb}^{(k)}$

\State \Comment{Projected incoming state: $\Pb^{(k)}=(\Ub^{(k)})^\top\Sbb^{(k)}_{\text{in}}$}
\State $\bar{\Ub}^{(k)} \gets \bar{\Ub}^{(k)} + \Sbb^{(k)}_{\text{in}}(\bar{\Pb}^{(k)})^\top$
\State $\bar{\Sbb}^{(k)}_{\text{in}} \gets \bar{\Sbb}^{(k)}_{\text{in}} + \Ub^{(k)}\bar{\Pb}^{(k)}$
\EndFor
\State $\bar{\Sbb}_{\text{init}} \gets \bar{\Sbb}^{(1)}_{\text{in}}$

\State \Comment{\textbf{Phase 1$^\top$: Backprop through local structure and input scaling (Parallel)}}
\For{$k=1$ \textbf{to} $M$ \textbf{in parallel}}
\State $\bar{\Lb}^{(k)} \gets \operatorname{tril}(\bar{\Lb}^{(k)},-1)$ \Comment{diag of $\Lb^{(k)}$ is constant ($=\Ib$)}
\State $\bar{\Gb}^{(k)} \gets \bar{\Lb}^{(k)}$ \Comment{since $\Lb^{(k)}=\operatorname{tril}(\Gb^{(k)},-1)+\Ib$}
\State $\bar{\Ub}^{(k)} \gets \bar{\Ub}^{(k)} + \Ub^{(k)}\big(\bar{\Gb}^{(k)}+(\bar{\Gb}^{(k)})^\top\big)$ \Comment{backprop $\Gb^{(k)}=(\Ub^{(k)})^\top\Ub^{(k)}$}

\State $\boldsymbol{\sigma}^{(k)} \gets \sqrt{\boldsymbol{\eta}^{(k)}}$
\State \Comment{Unscale: $\Ub^{(k)}=\Ub_{\rm raw}^{(k)}\operatorname{diag}(\boldsymbol{\sigma}^{(k)})$ and $\tilde{\Vb}^{(k)}=\operatorname{diag}(\boldsymbol{\sigma}^{(k)})\Vb^{(k)}$}
\State $\bar{\Ub}_{\rm raw}^{(k)} \gets \bar{\Ub}^{(k)}\operatorname{diag}(\boldsymbol{\sigma}^{(k)})$
\State $\bar{\Vb}^{(k)} \gets \operatorname{diag}(\boldsymbol{\sigma}^{(k)})\,\bar{\tilde{\Vb}}^{(k)}$

\State \Comment{Column/row dot definitions:
($\operatorname{col\_dot}(A,B))_i=\langle A_{:,i},B_{:,i}\rangle$,
($\operatorname{row\_dot}(A,B))_i=\langle A_{i,:},B_{i,:}\rangle$.}
\State $\bar{\boldsymbol{\sigma}}^{(k)} \gets \operatorname{col\_dot}(\Ub_{\rm raw}^{(k)},\bar{\Ub}^{(k)}) + \operatorname{row\_dot}(\Vb^{(k)},\bar{\tilde{\Vb}}^{(k)})$
\State $\mathbf{m}^{(k)} \gets \mathbb{I}[\boldsymbol{\sigma}^{(k)} > 0]$ \Comment{element-wise mask}
\State $\bar{\boldsymbol{\eta}}^{(k)} \gets \mathbf{m}^{(k)} \odot \big(\bar{\boldsymbol{\sigma}}^{(k)} \,/\, (2\boldsymbol{\sigma}^{(k)})\big)$
\Comment{Safe when some $\eta_t=0$ (e.g., boundary $\xb_t=\mathbf{0}$): set $\bar{\eta}_t=0$ instead of forming $0/0$.}
\EndFor

\State \Return $\bar{\Qb},\bar{\Ub}_{\rm raw},\bar{\Vb},\bar{\boldsymbol{\eta}},\bar{\Sbb}_{\text{init}}$ (reshape chunk grads back to length $L$).
\end{algorithmic}
\end{breakablealgorithm}

\section{Falcon-2 Parallel Implementation}
\label{app:deltanet2_parallel}

This appendix details the parallel implementation of \textbf{Falcon-2}
in the multi-head form used in our experiments and codebase.
We write the derivation for a single head, the full multi-head attention (MHA) layer applies the same update independently to each head.
Unless explicitly stated otherwise, the symbols $d$ and $d_v$ in this appendix therefore denote the per-head key/query feature dimension and value
dimension, respectively.
Within one head, Falcon-2 introduces column-wise adaptive learning rates
$\boldsymbol{\eta}_t$.
This implies that every value channel $j \in \{1, \dots, d_v\}$ follows a unique
trajectory in the head-local state space, seemingly preventing the use of shared
block-transition matrices.

However, we observe that while the update learning rate varies per column, the geometry of the updates is determined solely by the keys $\mathbf{k}_t$,
which are shared across all value channels within the head.
By exploiting this structure, we can vectorize the chunk-wise recurrence while
retaining per-dimension adaptivity.

\subsection{Per-Channel Dynamics and Shared Geometry}

For one head, let $\Sbb_t \in \mathbb{R}^{d \times d_v}$ be the head-local state.
The Falcon-2 update for the $j$-th column $\mathbf{s}_{t,j}$ is:
\begin{align}
\mathbf{s}_{t,j}
&= \mathbf{s}_{t-1,j} + \eta_{t,j}\,\mathbf{k}_{t-1}\,\Big(v_{t,j} - \langle \mathbf{k}_{t-1}, \mathbf{s}_{t-1,j}\rangle\Big) \nonumber\\
&= \Big(\Ib - \eta_{t,j}\,\mathbf{k}_{t-1}\mathbf{k}_{t-1}^\top\Big)\mathbf{s}_{t-1,j} + \eta_{t,j}\, v_{t,j}\, \mathbf{k}_{t-1},
\end{align}
where we assume $\eta_{t,j}\ge 0$ so that $\sqrt{\eta_{t,j}}$ is well-defined. This makes explicit that the rank-one update matrix $\Ib-\eta_{t,j}\mathbf{k}_{t-1}\mathbf{k}_{t-1}^\top$ depends on the per-channel step size $\eta_{t,j}$, which determines the chunk-local WY system. To parallelize this over a chunk of size $C$, we utilize the dual (Gram) formulation.

\paragraph{No-ridge rank-one form.}
For clarity, we present the no-ridge rank-one case (no explicit per-column decay factor). When ridge/weight decay is enabled in the main text, the unclamped $j$-th column update includes a scalar decay $\gamma_{j,t}:=1-\lambda_t\eta_{j,t}$:
\[
\mathbf{s}_{t,j}=(\gamma_{t,j}\Ib-\eta_{t,j}\mathbf{k}_{t-1}\mathbf{k}_{t-1}^\top)\mathbf{s}_{t-1,j}+\eta_{t,j}v_{t,j}\mathbf{k}_{t-1},
\qquad \gamma_{t,j}:=1-\lambda_t\eta_{t,j}.
\]
The implementation uses the clamped positive-decay version described below whenever this unclamped carry is not guaranteed positive.
Since each value channel evolves independently, this decay can be removed by a channel-wise cumulative rescaling (equivalently, right-multiplying $\Sbb_t$ by a diagonal matrix of inverse cumulative decays), reducing back to the no-ridge rank-one form; cf.\ Appendix~\ref{app:deltanet2_lite} for the shared-decay special case.

\paragraph{Expanded edit decomposition.}
Expanding the compact main-text update gives
\[
\Sbb_t
= \Sbb_{t-1}\Big(\Ib_{d_v}-\lambda_t\,\operatorname{Diag}(\boldsymbol{\eta}_t)\Big)
- \mathbf{x}_t\Big(\mathbf{x}_t^\top \Sbb_{t-1}\operatorname{Diag}(\boldsymbol{\eta}_t)\Big)
+ \mathbf{x}_t\big(\boldsymbol{\eta}_t \odot \mathbf{y}_t\big)^\top.
\]
This isolates the per-column right-multiplicative shrinkage from the feature-direction edit, but it is algebraically identical to the compact update in the main text.

\paragraph{Shared Gram Matrix.}
Let $\mathbf{K} \in \mathbb{R}^{d \times C}$ be the matrix of (shifted) write keys in the current chunk. The core correlation structure is given by the Gram matrix $\mathbf{G} = \mathbf{K}^\top \mathbf{K} \in \mathbb{R}^{C \times C}$. Crucially, $\mathbf{G}$ is independent of the value dimension $d_v$ and needs to be computed only once per chunk.
For channel-wise step sizes, the actual triangular system differs per channel via a simple element-wise modulation of this shared base Gram.

\paragraph{Batched Triangular Solve.}
The variations in per-channel step sizes $\eta_{t,j}$ modulate this Gram matrix. For each channel $j$, the implicit system matrix $\mathbf{L}_j$ for the WY representation is:
\begin{equation}
\mathbf{L}_j = \operatorname{tril}\left( \mathbf{G} \odot (\boldsymbol{\sqrt{\eta}}_j \boldsymbol{\sqrt{\eta}}_j^\top), -1 \right) + \Ib_C,
\end{equation}
where $\boldsymbol{\sqrt{\eta}}_j \in \mathbb{R}^C$ is the vector of step-size square-roots for channel $j$ over the chunk, and $\odot$ denotes element-wise multiplication (broadcasting).
Equivalently, letting $\mathbf{U}_j := \mathbf{K}\operatorname{diag}(\boldsymbol{\sqrt{\eta}}_j)$, we have
\[
\mathbf{G}_j := \mathbf{U}_j^\top \mathbf{U}_j
= \mathbf{G}\odot(\boldsymbol{\sqrt{\eta}}_j \boldsymbol{\sqrt{\eta}}_j^\top),
\qquad
\mathbf{L}_j = \operatorname{tril}(\mathbf{G}_j,-1)+\Ib_C.
\]

Because the chunk size $C$ is typically small (e.g., $C=64$ or $128$), we can efficiently perform a batched triangular solve over the batch dimension $d_v$.

\subsection{Chunk-Parallel Algorithm}

Algorithm~\ref{alg:deltanet2-parallelized-decay} gives the single-head chunk-wise
forward pass in a form directly parallel to the scalar-step-size WY kernel in
Appendix~\ref{sec:delta-parallel}, but batched over value channels and written with
the positive-decay renormalization used in the implementation.
When $\lambda_t=0$, the decay factors are identically one, and the algorithm
reduces to the no-ridge rank-one WY/Gram kernel. The only difference from the
shared-step-size case is that the unit-lower-triangular system $\mathbf{L}_j$ is
channel-dependent (via $\boldsymbol{\eta}_{:,j}$), so the single residual solve is
carried out in batch over $j\in\{1,\ldots,d_v\}$.

\noindent\textbf{Chunk read/write formulas.}
Let $\Sbb_{\mathrm{in}}\in\mathbb{R}^{d\times d_v}$ be the incoming state for one
head in a chunk, and let $\mathbf{K},\mathbf{Q}\in\mathbb{R}^{d\times C}$ and
$\mathbf{V}\in\mathbb{R}^{C\times d_v}$ be the chunk keys, queries, and values
(with causal read-after-write ordering inside the chunk).
Define layout convention: inside a chunk, we use the feature-major
(column-major) layout $\mathbf{K},\mathbf{Q}\in\mathbb{R}^{d\times C}$ with tokens
along columns.
Concretely, if $\mathbf{K}_{(m)},\mathbf{Q}_{(m)}\in\mathbb{R}^{C\times d}$ are the standard time-major slices for chunk $m$, then we set $\mathbf{K}:=\mathbf{K}_{(m)}^\top$ and $\mathbf{Q}:=\mathbf{Q}_{(m)}^\top$.
We keep $\mathbf{V}\in\mathbb{R}^{C\times d_v}$ time-major, and similarly $\boldsymbol{\eta}\in\mathbb{R}^{C\times d_v}$ for per-channel step sizes.

\[
\mathbf{G}:=\mathbf{K}^\top\mathbf{K}\in\mathbb{R}^{C\times C},\qquad
\mathbf{M}:=\operatorname{tril}(\mathbf{Q}^\top\mathbf{K},0)\in\mathbb{R}^{C\times C},\qquad
\mathbf{H}:=\mathbf{Q}^\top\Sbb_{\mathrm{in}}\in\mathbb{R}^{C\times d_v},
\]
and the projected incoming state $\mathbf{P}:=\mathbf{K}^\top\Sbb_{\mathrm{in}}\in\mathbb{R}^{C\times d_v}$. With per-channel step sizes $\boldsymbol{\eta}\in\mathbb{R}^{C\times d_v}$ and $\boldsymbol{\Sigma}:=\sqrt{\boldsymbol{\eta}}$, define for each channel $j$ the unit-lower-triangular
\[
\mathbf{L}_j := \Ib_C + \operatorname{tril}\big(\mathbf{G}\odot(\boldsymbol{\Sigma}_{:,j}\boldsymbol{\Sigma}_{:,j}^\top),-1\big).
\]
Throughout we include the diagonal (read-after-write) via $\operatorname{tril}(\cdot,0)$.

\paragraph{Unscaled scores.}
Note that $\mathbf{M}$ is intentionally formed with the unscaled keys $\mathbf{K}$. All $\sqrt{\eta}$ factors are absorbed into the coefficient matrix $\mathbf{B}$ below; equivalently, for each channel $j$, $\mathbf{M}\big(\boldsymbol{\Sigma}_{:,j}\odot \cdot\big)=\operatorname{tril}\big(\mathbf{Q}^\top\mathbf{K}\operatorname{diag}(\boldsymbol{\Sigma}_{:,j}),0\big)(\cdot)$.

\paragraph{One-TriSolve reduction.}
A two-path implementation would solve
\[
\mathbf{a}_j:=\mathbf{L}_j^{-1}\big(\boldsymbol{\Sigma}_{:,j}\odot\mathbf{V}_{:,j}\big),
\qquad
\mathbf{c}_j:=\mathbf{L}_j^{-1}\big(\boldsymbol{\Sigma}_{:,j}\odot\mathbf{P}_{:,j}\big),
\]
and then form $\mathbf{B}_{:,j}=\boldsymbol{\Sigma}_{:,j}\odot(\mathbf{a}_j-\mathbf{c}_j)$.
Because both paths share the same $\mathbf{L}_j$, linearity gives the exact merged solve
\[
\widetilde{\mathbf{b}}_j
:=
\mathbf{L}_j^{-1}\Big(\boldsymbol{\Sigma}_{:,j}\odot(\mathbf{V}_{:,j}-\mathbf{P}_{:,j})\Big),
\qquad
\mathbf{B}_{:,j}:=\boldsymbol{\Sigma}_{:,j}\odot\widetilde{\mathbf{b}}_j.
\]
This is the form used throughout the paper.
It is algebraically identical to the older two-solve decomposition, but removes one batched TriSolve per chunk in the forward pass.
The chunk outputs and chunk-exit state are then
\[
\mathbf{O} = \mathbf{H} + \mathbf{M}\mathbf{B}\in\mathbb{R}^{C\times d_v},\qquad
\Sbb_{\mathrm{out}} = \Sbb_{\mathrm{in}} + \mathbf{K}\mathbf{B}\in\mathbb{R}^{d\times d_v}.
\]

\paragraph{Pseudocode.}
Algorithm~\ref{alg:deltanet2-parallelized-decay} in the main text gives the single consolidated one-TriSolve forward pass; this appendix focuses on the derivation and complexity.

\paragraph{Efficiency Analysis.}
Per chunk, forming the shared Gram and score matrices costs $\mathcal{O}(dC^2)$ for $\mathbf{K}^\top\mathbf{K}$ and $\mathbf{Q}^\top\mathbf{K}$.
Constructing the channel-specific systems $\{\mathbf{L}_j\}$ and performing the single batched residual solve cost $\mathcal{O}(d_v C^2)$.
Materializing the chunk outputs via $\mathbf{M}\mathbf{B}$ also costs $\mathcal{O}(d_v C^2)$, and the chunk-exit state update $\mathbf{K}\mathbf{B}$ costs $\mathcal{O}(d\,d_v\,C)$.
Relative to the older two-solve presentation, the asymptotic complexity is unchanged, but the dominant forward triangular-solve count drops from two to one per chunk.
Compared to a naive per-channel implementation that would require explicit $d\times d$ operators per value dimension, this confines the rate-dependence to small $C\times C$ systems.
Nevertheless, the $\mathcal{O}(d_v C^2)$ terms can still be substantial for large $d_v$, motivating the shared-dynamics Falcon variant in Appendix~\ref{app:deltanet2_lite}.

\paragraph{Including ridge/weight decay ($\lambda_t>0$).}
The main-text Algorithm~\ref{alg:deltanet2-parallelized-decay} already includes the $\lambda_t>0$ path. Algebraically, the decayed recurrence is reduced to the no-ridge rank-one WY/Gram kernel by a channel-wise cumulative rescaling.
With per-channel step sizes $\eta_{t,j}$ and scalar ridge $\lambda_t$, the $j$-th column evolves as
\[
    \mathbf{s}_{t,j}
    =\big((1-\lambda_t\eta_{t,j})\Ib-\eta_{t,j}\mathbf{k}_{t-1}\mathbf{k}_{t-1}^\top\big)\mathbf{s}_{t-1,j} + \eta_{t,j}v_{t,j}\mathbf{k}_{t-1}.
\]
For the unclamped derivation, define the per-channel decay $\gamma_{t,j}:=1-\lambda_t\eta_{t,j}$ and cumulative products
$c_{0,j}:=1$, $c_{t,j}:=\prod_{s=1}^{t}\gamma_{s,j}$, and $\Db_t:=\operatorname{Diag}(c_{t,1},\ldots,c_{t,d_v})$. In the implemented log-space path, $\gamma_{t,j}$ is replaced by the clamped positive carry defined in the note below.
Then the column-wise rescaled state $\tilde{\Sbb}_t:=\Sbb_t \Db_t^{-1}$ satisfies a no-ridge rank-one recurrence:
\[
    \tilde{\mathbf{s}}_{t,j}
    =\big(\Ib-\tilde{\eta}_{t,j}\mathbf{k}_{t-1}\mathbf{k}_{t-1}^\top\big)\tilde{\mathbf{s}}_{t-1,j}
    + \tilde{\eta}_{t,j}\tilde{v}_{t,j}\mathbf{k}_{t-1},
    \qquad
    \tilde{\eta}_{t,j}:=\eta_{t,j}/\gamma_{t,j},
    \qquad
    \tilde{v}_{t,j}:=v_{t,j}/c_{t-1,j}.
\]
Equivalently, the chunk kernel runs the no-ridge rank-one WY/Gram equations on the rescaled variables
$\tilde{\boldsymbol{\eta}}$ and $\tilde{\Vb}$ (working in the $\tilde{\Sbb}$ domain), and then recovers the original outputs and state by
$\Ob_t=\tilde{\Ob}_t\Db_t$ and $\Sbb_t=\tilde{\Sbb}_t\Db_t$ (i.e., element-wise multiplication by $c_{t,j}$ on each channel).

\noindent\textbf{Log-space note.} First form $\alpha_{t,j}:=\lambda_t\eta_{t,j}$, clamp $\alpha_{t,j}\leftarrow\min(\alpha_{t,j},1-\varepsilon_\gamma)$, and then compute $\log\gamma_{t,j}=\operatorname{log1p}(-\alpha_{t,j})$ in fp32 so that $\gamma_{t,j}>0$. In chunk-wise kernels, do \emph{not} form global products $c_{t,j}$ (or $\exp(\pm\zeta_{t,j})$) over the full sequence; instead, reset the log-prefix at each chunk boundary and use only chunk-local prefixes (Algorithm~\ref{alg:deltanet2-parallelized-decay}).
 
\paragraph{Main-text algorithm.}
The consolidated positive-decay pseudocode appears in Algorithm~\ref{alg:deltanet2-parallelized-decay}; no separate no-ridge rank-one pseudocode is needed because setting $\lambda_t=0$ makes all local decay prefixes equal to one.

\section{Falcon with Shared Dynamics}
\label{app:deltanet2_lite}

Appendix~\ref{app:deltanet2_parallel} derives the full per-channel Falcon-2 parallelization. That formulation is expressive but requires a distinct step-size trajectory for every value channel $j \in \{1, \dots, d_v\}$.

This creates a computational bottleneck: the implicit inverse matrix $\Tb$ (or the Cholesky factor $\Lb$) depends on the learning rates. Consequently, the full Falcon-2 requires solving $d_v$ distinct triangular systems of size $C \times C$ for every chunk. Although $d_v$ denotes the per-head value dimension in this appendix, the aggregate number of value channels over heads can be large, which prevents the use of a single efficient block solve and leads to high memory bandwidth usage.

In this section, we introduce \textbf{Falcon-1}, a hardware-efficient variant that enforces a shared adaptive learning rate across all value channels.
This constraint reduces the number of distinct $C\times C$ triangular systems that must be constructed from $d_v$ to $1$, enabling a single multi-right-hand-side triangular solve per chunk while keeping the per-step rank-one dynamics shared across value channels.
This shared-dynamics reduction is orthogonal to head factorization: in the practical MHA setting used here, one first applies the per-head factorization and then decides whether the step-size dynamics inside each head are full (per value channel) or shared.

Asymptotically, applying the state update still costs $\mathcal{O}(d\,d_v\,C)$ and solving a triangular system with $d_v$ right-hand sides costs $\mathcal{O}(d_v C^2)$.
The primary savings come from eliminating the need to construct and factor $d_v$ distinct $C\times C$ systems: Falcon-1 uses one shared system per chunk and a single high-throughput solve with many right-hand sides.

\subsection{Scalar versus Vector Step Sizes}

In the full Falcon-2, the state update is driven by a vector rate $\boldsymbol{\eta}_t \in \mathbb{R}^{d_v}$. In Falcon-1, we constrain this to a scalar $\eta_t \in \mathbb{R}$, derived from the global energy of the write feature $\mathbf{x}_t$:
\begin{equation}
\label{eq:lite_lr}
\eta_t = \frac{\beta_t}{\|\mathbf{x}_{t}\|_2^2 + \lambda_t + \varepsilon},
\qquad \beta_t\in(0,2),\ \lambda_t\ge 0,\ \varepsilon\ge 0,
\end{equation}
where $\beta_t$ is a learnable scalar NLMS gain, $\lambda_t$ is the ridge coefficient (default $\lambda_t>0$), and $\varepsilon$ is a small stabilizer (cf.\ Eq.~\eqref{eq:nlms-stepsize}).
The autoregressive update rule for the state matrix $\Sbb_t \in \mathbb{R}^{d \times d_v}$ simplifies to:
\begin{equation}
\Sbb_t
= \underbrace{\left((1-\eta_t\lambda_t)\Ib - \eta_t \mathbf{x}_{t}\mathbf{x}_{t}^\top\right)}_{\Ab_t}\Sbb_{t-1}
+ \eta_t\,\mathbf{x}_{t}\vb_t^\top,
\qquad \gamma_t^{\rm raw}:=1-\eta_t\lambda_t.
\end{equation}
Crucially, the transition matrix $\Ab_t$ is now \textit{shared} across all columns of $\Sbb_t$. This implies that while different features store different contents (values), they share the same dynamics (write/forget speeds).

\paragraph{Including weight decay.}
Falcon-2 uses $\lambda_t>0$ by default, inducing the unclamped shrinkage factor $\gamma_t^{\rm raw}:=1-\eta_t\lambda_t$.
For the log-space reduction below, we require a positive carry $\gamma_t>0$ (equivalently, the realized decay fraction must be $<1$).
Under our default RMSNorm scaling (so $\|\mathbf{x}_{t}\|_2^2\approx d$) and $\lambda_t\in[0,1]$, Eq.~\eqref{eq:lite_lr} implies $\eta_t\lambda_t\lesssim \beta_t/(d+1)<1$ and therefore $\gamma_t\in(0,1]$ automatically away from the boundary sentinel.
More generally (and in finite precision), we clamp the decay fraction $\alpha_t:=\eta_t\lambda_t$ as
$\alpha_t\leftarrow \min(\alpha_t,1-\varepsilon_\gamma)$, i.e.\ $\gamma_t\leftarrow \max(1-\eta_t\lambda_t,\varepsilon_\gamma)$, so that $\gamma_t\ge \varepsilon_\gamma>0$.
For the identity-feature Falcon setting, one may realize the boundary no-op via $\mathbf{k}_0=\mathbf{0}$ and hence $\xb_1=\mathbf{0}$; in the general kernelized setting, the mathematically correct convention is imposed directly as $\xb_1:=\mathbf{0}$. We take $\eta_1=0$ and $\gamma_1=1$ (no write/decay), matching the main-text convention that updates start at $t=2$. Because $\gamma_t$ is a scalar, one can reduce the decayed recurrence to the no-ridge rank-one case by rescaling.
Let $c_0:=1$ and $c_t:=\prod_{r=1}^t \gamma_r$, and define $\tilde{\Sbb}_t:=\Sbb_t/c_t$ (so under read-after-write, reads satisfy $\ob_t=c_t\tilde{\ob}_t$).
Then the decayed update is equivalent to running the no-ridge rank-one WY kernels on
\[
\tilde{\eta}_t:=\eta_t/\gamma_t,\qquad \tilde{\vb}_t:=\vb_t/c_{t-1},
\]
and rescaling outputs by $c_t$.
Algorithms~\ref{alg:deltanet2_lite_fwd}-\ref{alg:deltanet2_lite_bwd} implement this reduction explicitly.

\paragraph{Numerical stability.}
The algebraic reduction to the no-ridge rank-one WY kernels rescales targets by the inverse cumulative decay, $\tilde{\vb}_t=\vb_t/c_{t-1}$ with $c_t=\prod_{r\le t}\gamma_r$.
While correct in exact arithmetic, in finite precision $c_t$ may become extremely small when $\gamma_t$ is noticeably below $1$, making $c_{t-1}^{-1}=\exp(-\zeta_{t-1})$ overflow and yielding $\infty\times 0$ patterns (and NaNs) when rescaling back by $\exp(\zeta_t)$.

\paragraph{Stable chunk-local renormalization.}
To avoid ever forming $\exp(\pm\zeta_t)$ over the full sequence, our implementation performs the same reduction within each WY chunk of length $C$.
For a chunk $[a,b]$ (where $b=a{+}C{-}1$), define local log-prefix decays $u_0:=0$ and $u_i:=\sum_{r=0}^{i-1}\log\gamma_{a+r}$ for $i=1,\ldots,C$, with $\delta_i:=\exp(u_i)$.
We run the no-ridge rank-one WY kernel on the rescaled values $\hat{\vb}_{a+i-1}:=\vb_{a+i-1}\exp(-u_{i-1})=\vb_{a+i-1}/\delta_{i-1}$ and step sizes $\hat{\eta}_t:=\eta_t/\gamma_t$ (compute $\log\gamma_t=\operatorname{log1p}(-\alpha_t)$ in fp32 with $\alpha_t:=\min(\eta_t\lambda_t,1-\varepsilon_\gamma)$, then $\hat{\eta}_t=\eta_t\exp(-\log\gamma_t)$).
Outputs and the chunk-exit state are rescaled back by the \emph{forward} local decays:
$\ob_{a+i-1}=\delta_i\hat{\ob}_{a+i-1}$ and $\Sbb_{b}=\delta_C\hat{\Sbb}_{b}$.
This is algebraically equivalent to the global reduction but bounds exponentials by $O(C)$ rather than $O(L)$.

\subsection{Shared-WY Parallelization}

Falcon-1 uses a \emph{single scalar} step size $\eta_t$ shared across all value channels.
Consequently, within each chunk, the WY factors are shared across columns of $\Sbb$:
the per-chunk system matrix $\Lb^{(k)}$ is \emph{identical} for all $d_v$ columns.
This is precisely the setting of Appendix~\ref{sec:delta-parallel} (no-ridge rank-one, $\lambda_t=0$), where the WY implementation yields
one unit-lower-triangular matrix $\Lb^{(k)}\in\mathbb{R}^{C\times C}$ per chunk and, after merging the write and history paths into a single residual right-hand side, one standard \emph{multi-RHS} triangular solve with $d_v$ right-hand sides.

\paragraph{Forward/backward reuse.}
With shared dynamics, the chunk-parallel attention forward pass is exactly Algorithm~\ref{alg:deltanet-parallel}
and the backward pass through the WY machinery is Algorithm~\ref{alg:deltanet-parallel-bwd}.
The only additional ingredient in Falcon-1 is that the step size $\eta_t$ is not a free input: it is produced by the NLMS normalization
\begin{equation}
\eta_t = \frac{\beta_t}{\|\xb_t\|_2^2 + \lambda_t + \varepsilon},
\qquad \xb_t:=\phi(\mathbf{k}_{t-1}) \text{ is the shifted write feature},
\label{eq:lite_lr_repeat}
\end{equation}
and the default $\lambda_t>0$ introduces the additional scalar pathway $\gamma_t=1-\eta_t\lambda_t$.
Backpropagation therefore includes chain-rule steps through both $\eta_t$ and the cumulative decay $c_t$ (Algorithm~\ref{alg:deltanet2_lite_bwd}).

\subsection{Forward and Backward Algorithms}

\paragraph{Forward.}
Compute $(\eta_t,\log\gamma_t)$ (with clamping) and run the single-solve no-ridge rank-one WY kernel on chunk-locally renormalized inputs:
within each chunk compute local prefixes $u_i$ and $\delta_i=\exp(u_i)$, use $\hat\eta_t=\eta_t/\gamma_t$ and $\hat\vb_t=\vb_t/\delta_{i-1}$ inside that chunk, and rescale outputs/state by $\delta_i$ (Algorithm~\ref{alg:deltanet2_lite_fwd}).

\begin{algorithm}[H]
\caption{Falcon-1 (Forward)}
\label{alg:deltanet2_lite_fwd}
\begin{algorithmic}[1]
\Require Shifted write features $\xb\in\mathbb{R}^{L\times d}$ (boundary $\xb_1=\mathbf{0}$), queries $\Qb\in\mathbb{R}^{L\times d}$, values $\Vb\in\mathbb{R}^{L\times d_v}$, gains $\beta\in(0,2)^L$, ridge $\lambda\in\mathbb{R}_{\ge 0}^L$, stabilizer $\varepsilon>0$, clamp $\varepsilon_\gamma>0$, dummy $\varepsilon_\eta>0$, chunk size $C$ (assume $C\mid L$), initial state $\Sbb_{\rm init}$.
\Ensure Outputs $\Ob$ for the decayed recurrence with $\eta_t=\beta_t/(\|\xb_t\|_2^2+\lambda_t+\varepsilon)$, $\alpha_t:=\min(\eta_t\lambda_t,1-\varepsilon_\gamma)$, and $\gamma_t:=1-\alpha_t>0$. If the clamp is active, the shrinkage path corresponds to the effective coefficient $\alpha_t/\eta_t$ for $\eta_t>0$.
\State Compute $\eta_1\gets 0$, and for $t\ge 2$: $d_t\gets \|\xb_t\|_2^2+\lambda_t+\varepsilon$, $\eta_t\gets \beta_t/d_t$ (fp32 for norms/ratios).
\State $\alpha^{\rm raw}_t\gets \eta_t\lambda_t$, $\alpha_t\gets \min(\alpha^{\rm raw}_t,1-\varepsilon_\gamma)$ for $t\ge 2$.
\State $\log\gamma_1\gets 0$ and for $t\ge 2$: $\log\gamma_t\gets \operatorname{log1p}(-\alpha_t)$ (fp32), $\gamma_t\gets \exp(\log\gamma_t)$.
\State $\hat\eta_1\gets \varepsilon_\eta$ (dummy; no effect since $\xb_1=\mathbf{0}$), and for $t\ge 2$: $\hat\eta_t\gets \eta_t/\gamma_t=\eta_t\exp(-\log\gamma_t)$.
\State Partition into $M=L/C$ chunks $[a_k,b_k]=[(k{-}1)C{+}1,kC]$. Set $\Sbb_{\rm in}\gets \Sbb_{\rm init}$.
\For{$k=1,\ldots,M$}
\State $a\gets a_k$, $b\gets b_k$.
\State $u_0\gets 0$ \Comment{Local log-prefix decays inside the chunk}
\For{$i=1,\ldots,C$}
\State $u_i\gets u_{i-1}+\log\gamma_{a+i-1}$,\quad $\delta_i\gets \exp(u_i)$ \Comment{fp32}
\EndFor
\State $\hat\Vb_{a+i-1}\gets \Vb_{a+i-1}\exp(-u_{i-1})$ for $i=1,\ldots,C$ \Comment{$\hat\vb=\vb/\delta_{i-1}$}
\State Run the single-solve no-ridge rank-one WY kernel on the length-$C$ sequence $(\xb_{a:b},\Qb_{a:b},\hat\Vb_{a:b},\hat\eta_{a:b})$ with init $\Sbb_{\rm in}$ (Algorithm~\ref{alg:deltanet-parallel} with $L=C$), producing $(\hat\Ob_{a:b},\hat\Sbb_{\rm out})$.
\State $\Ob_{a+i-1}\gets \delta_i\,\hat\Ob_{a+i-1}$ for $i=1,\ldots,C$.
\State $\Sbb_{\rm in}\gets \delta_C\,\hat\Sbb_{\rm out}$ \Comment{Propagate original chunk-exit state}
\EndFor
\State \Return $\Ob$.
\end{algorithmic}
\end{algorithm}

\paragraph{Backward.}
Algorithm~\ref{alg:deltanet-parallel-bwd} provides gradients for the one-solve no-ridge rank-one kernel while treating $(\xb,\boldsymbol{\eta})$ as independent.
Falcon additionally backpropagates through the NLMS map $\eta_t=\beta_t/(\|\xb_t\|_2^2+\lambda_t+\varepsilon)$ and through the chunk-local log-decay renormalization (the local prefixes $u_i$ and $\delta_i$) used to handle $\gamma_t$ (Algorithm~\ref{alg:deltanet2_lite_bwd}).

\begin{breakablealgorithm}
\caption{Falcon-1 (Backward)}
\label{alg:deltanet2_lite_bwd}
\small
\begin{algorithmic}[1]
\Require Upstream gradient $\bar{\Ob}$, chunk-local caches from Algorithm~\ref{alg:deltanet2_lite_fwd} for each chunk $k$: local prefixes $\{u_i,\delta_i\}_{i=0}^C$, $\log\gamma_{a_k:b_k}$, $(\eta,\beta,\lambda)$, clamp mask $m_t=\mathbb{I}[\eta_t\lambda_t<1-\varepsilon_\gamma]$, and WY caches from running the no-ridge rank-one kernel on $(\xb_{a_k:b_k},\Qb_{a_k:b_k},\hat\Vb_{a_k:b_k},\hat\eta_{a_k:b_k})$.
\Ensure Gradients $(\bar{\Qb},\bar{\xb},\bar{\Vb},\bar{\beta},\bar{\lambda},\bar{\Sbb}_{\rm init})$.
\State Initialize all gradients to zero. Set boundary $\bar{\Sbb}_{\rm in}^{(M+1)}\gets \mathbf{0}$.
\For{$k=M,\ldots,1$} \Comment{reverse over chunks}
\State $a\gets a_k$, $b\gets b_k$.
\State Initialize $\bar{u}_i\gets 0$ for $i=0,\ldots,C$, $\bar{\log\gamma}_t\gets 0$ and $\bar{\eta}_t\gets 0$ for $t=a,\ldots,b$.
\State \Comment{Unscale outputs: $\Ob_t=\delta_i\,\hat\Ob_t$}
\For{$t=a,\ldots,b$}
\State $i\gets t-a+1$
\State $\bar{\hat\Ob}_t \gets \delta_i\,\bar{\Ob}_t$
\State $\bar{u}_i \mathrel{+}= \langle \bar{\hat\Ob}_t,\hat\Ob_t\rangle$ \Comment{$\partial \delta_i/\partial u_i=\delta_i$}
\EndFor
\State \Comment{Boundary: next chunk sees $\Sbb_{\rm in}^{(k+1)}=\delta_C\,\hat\Sbb_{\rm out}$}
\State $\bar{\hat\Sbb}_{\rm out} \gets \delta_C\,\bar{\Sbb}_{\rm in}^{(k+1)}$
\State $\bar{u}_C \mathrel{+}= \langle \bar{\hat\Sbb}_{\rm out},\hat\Sbb_{\rm out}\rangle$
\State Run the no-ridge rank-one WY backward on this length-$C$ chunk (Algorithm~\ref{alg:deltanet-parallel-bwd} with $L=C$), using upstream $(\bar{\hat\Ob}_{a:b},\bar{\hat\Sbb}_{\rm out})$, to obtain $(\bar{\Qb}_{a:b},\bar{\xb}_{a:b},\bar{\hat\Vb}_{a:b},\bar{\hat\eta}_{a:b},\bar{\Sbb}_{\rm in}^{(k)})$.
\State \Comment{Unscale values: $\hat\Vb_t=\Vb_t\,\exp(-u_{i-1})$}
\For{$t=a,\ldots,b$}
\State $i\gets t-a+1$
\State $s \gets \exp(-u_{i-1})$
\State $\bar{\Vb}_t \mathrel{+}= s\,\bar{\hat\Vb}_t$
\State $\bar{u}_{i-1} \mathrel{+}= -\langle s\,\bar{\hat\Vb}_t,\Vb_t\rangle$
\EndFor
\State \Comment{Unscale step sizes: $\hat\eta_t=\eta_t\,\exp(-\log\gamma_t)$}
\For{$t=\max(a,2),\ldots,b$}
\State $g^{-1}\gets \exp(-\log\gamma_t)$
\State $\bar{\eta}_t \mathrel{+}= g^{-1}\,\bar{\hat\eta}_t$
\State $\bar{\log\gamma}_t \mathrel{+}= -(\eta_t\,g^{-1})\,\bar{\hat\eta}_t$
\EndFor
\State \Comment{Backprop local prefix sums: $u_i=u_{i-1}+\log\gamma_{a+i-1}$}
\For{$i=C$ \textbf{down to} $1$}
\State $t\gets a+i-1$
\If{$t\ge 2$}
\State $\bar{\log\gamma}_t \mathrel{+}= \bar{u}_i$
\EndIf
\State $\bar{u}_{i-1} \mathrel{+}= \bar{u}_i$
\EndFor
\State \Comment{Backprop $\log\gamma_t=\operatorname{log1p}(-\alpha_t)$ with $\alpha_t=\min(\eta_t\lambda_t,1-\varepsilon_\gamma)$}
\For{$t=\max(a,2),\ldots,b$}
\State $\bar{\alpha}_t \gets -\exp(-\log\gamma_t)\,\bar{\log\gamma}_t$
\State $\bar{\alpha}_t \gets m_t\,\bar{\alpha}_t$ \Comment{zero gradient when clamped}
\State $\bar{\eta}_t \mathrel{+}= \lambda_t\,\bar{\alpha}_t$
\State $\bar{\lambda}_t \mathrel{+}= \eta_t\,\bar{\alpha}_t$
\EndFor
\State \Comment{Backprop $\eta_t=\beta_t/(\|\xb_t\|_2^2+\lambda_t+\varepsilon)$}
\For{$t=\max(a,2),\ldots,b$}
\State $d_t\gets \|\xb_t\|_2^2+\lambda_t+\varepsilon$
\State $\bar{\beta}_t \mathrel{+}= \bar{\eta}_t/d_t$
\State $\bar{d}_t \gets -\beta_t\,\bar{\eta}_t/d_t^2$
\State $\bar{\xb}_t \mathrel{+}= 2\,\bar{d}_t\,\xb_t$
\State $\bar{\lambda}_t \mathrel{+}= \bar{d}_t$
\EndFor
\State \Comment{Boundary constants $\hat\eta_1:=\varepsilon_\eta$ and $\log\gamma_1:=0$ carry no gradient.}
\EndFor
\State \Return $(\bar{\Qb},\bar{\xb},\bar{\Vb},\bar{\beta},\bar{\lambda},\bar{\Sbb}_{\rm init})$ with $\bar{\Sbb}_{\rm init}=\bar{\Sbb}_{\rm in}^{(1)}$.
\end{algorithmic}
\end{breakablealgorithm}

\subsection{Complexity Analysis}

Table~\ref{tab:complexity_lite} summarizes the per-chunk costs.
The residual formulation removes one forward TriSolve per chunk in both Falcon-2 and Falcon-1 kernels, but Falcon-1 retains the larger practical gain because it also shares the system build across all value channels.
Sharing the dynamics therefore eliminates the rate-dependent system construction overhead that scales as $\mathcal{O}(d_v C^2)$ in the full Falcon-2 (building/factorizing $d_v$ distinct $C\times C$ triangular systems).
The remaining triangular solve still scales as $\mathcal{O}(d_v C^2)$ because the state has $d_v$ columns, but it is a single highly optimized multi-RHS solve, which is substantially more GPU-friendly in practice.

\begin{table}[ht!]
\centering
\caption{Complexity per Chunk (chunk size $C$, feature dim $d$, value dim $d_v$).}
\label{tab:complexity_lite}
\begin{tabular}{l l l}
\toprule
\textbf{Component} & \textbf{Falcon-2 (Full)} & \textbf{Falcon-1} \\
\midrule
Gram Matrix & $\mathcal{O}(d C^2)$ & $\mathcal{O}(d C^2)$ \\
\textbf{Rate-dependent system build} ($\Lb$) & $\mathbf{\mathcal{O}(d_v C^2)}$ & $\mathbf{\mathcal{O}(C^2)}$ \\
\textbf{Forward residual TriSolve} & $\mathcal{O}(d_v C^2)$ & $\mathcal{O}(d_v C^2)$ \\
State update / output projection & $\mathcal{O}(d d_v C)$ & $\mathcal{O}(d d_v C)$ \\
\bottomrule
\end{tabular}
\end{table}

Falcon-1 removes the need to construct and factor $d_v$ distinct $C\times C$ systems by enforcing shared dynamics.
After the residual merge, the remaining forward solve is a single high-throughput multi-RHS triangular solve, which is substantially more GPU-friendly in practice.

%%%
\section{Falcon-3 ParallelFlow Implementation}
\label{app:deltanet3_parallel}

Falcon-3 uses a sliding regression window of size $B$, which turns the rank-one DeltaNet-style update into a rank-$B$ affine recurrence. To train this recurrence without materializing dense state-transition matrices or scanning token by token, we use the ParallelFlow framework~\citep{cirone2025parallelflow}.

This appendix states the chunk map used in the scan, summarizes the low-rank controlled-differential-equation view, and maps the Falcon-3 update to the \texttt{tensorInv} solve used in Algorithm~\ref{alg:Falcon3_forward}.

\paragraph{Mask conventions.}
Within \texttt{tensorInv}, we use a block-strict-causal mask over $(\text{time},\text{rank})$ pairs, so same-time rank components do not interact, and every residual is evaluated at the pre-update state $\Sbb_{t-1}$. Output materialization then uses an inclusive block-causal mask to implement read-after-write inside each chunk.

\subsection{Affine Chunk Map}
\label{app:Falcon3_chunk_map}

To make the scan claim explicit, define for chunk $k$
\[
\mathbf{M}^{(k)} := \delta_C^{(k)}\big(\Ib_{d_x}+\mathbf{A}^{(k)}\mathbf{W}^{(k)}\big),
\qquad
\mathbf{b}^{(k)} := \delta_C^{(k)}\mathbf{A}^{(k)}\mathbf{U}^{(k)}.
\]
Then the boundary update is the affine map
\[
\Sbb_{\rm in}^{(k+1)} = \mathbf{M}^{(k)}\Sbb_{\rm in}^{(k)} + \mathbf{b}^{(k)}.
\]
These chunk maps compose associatively,
\[
(\mathbf{M}_2,\mathbf{b}_2)\circ(\mathbf{M}_1,\mathbf{b}_1)
=
(\mathbf{M}_2\mathbf{M}_1,\,\mathbf{M}_2\mathbf{b}_1+\mathbf{b}_2),
\]
so Phase~2 of Algorithm~\ref{alg:Falcon3_forward} may be implemented either as the simple sequential loop shown in the main text or as an associative scan over chunk maps.

\subsection{Matrix-Valued CDEs and Low-Rank Drivers}

Standard linear recurrences can be viewed as discretizations of a continuous-time process. Let $\Sbb_t \in \mathbb{R}^{d \times d_v}$ be the hidden state. We adopt the left-multiplicative convention and model its evolution as a matrix-valued CDE driven by $\boldsymbol{\omega}$ and $\boldsymbol{\xi}$:
\begin{equation}
\label{eq:cde_main}
d\Sbb_t = d\boldsymbol{\omega}_t \Sbb_t + d\boldsymbol{\xi}_t,
\end{equation}
where $\boldsymbol{\omega}_t \in \mathbb{R}^{d \times d}$ and $\boldsymbol{\xi}_t \in \mathbb{R}^{d \times d_v}$ are matrix-valued paths. The solution over an interval $[s, t]$ factors through a linear propagator (flow) $\Pb_{t\leftarrow s}$:
\begin{equation}
\Sbb_t = \Pb_{t\leftarrow s}\Sbb_s + \int_s^t \Pb_{t\leftarrow r}\, d\boldsymbol{\xi}_r,
\quad \text{where } \Pb_{t\leftarrow s} = \Ib + \int_s^t d\boldsymbol{\omega}_r \Pb_{r\leftarrow s}.
\end{equation}
This decomposition separates the temporal dynamics ($\Pb$) from the computation. ParallelFlow parallelizes this by partitioning the sequence into chunks $[t_{k-1}, t_k]$. The system computes local propagators and accumulated input injections for each chunk in parallel, then links them via a global associative scan.

\paragraph{Transpose-equivalent form.}
ParallelFlow is often written after transposing the state. Defining $\widehat{\Sbb}_t:=\Sbb_t^\top\in\mathbb{R}^{d_v\times d}$, the same low-rank update becomes the shared-right-factor recurrence
\[
\widehat{\Sbb}_{t+1}
= \widehat{\Sbb}_t
+ \widehat{\Sbb}_t\,\mathbf{B}_t\mathbf{A}_t^\top
+ \tilde{\mathbf{B}}_t\mathbf{A}_t^\top.
\]
Transposing back yields the shared-left-factor form
\[
\Sbb_{t+1} = \Sbb_t + \mathbf{A}_t\mathbf{B}_t^\top\Sbb_t + \mathbf{A}_t\tilde{\mathbf{B}}_t^\top.
\]
We use the shared-left-factor form throughout this appendix because the sliding regression update naturally shares the window matrix of write features.

\paragraph{Low-Rank Drivers.}
The core difficulty lies in computing the propagator $\Pb_{s \to t}$, which is typically an expensive matrix ODE. However, efficient computation is possible if the drivers possess a low-rank structure. We assume rank-$R$ drivers with a shared left factor:
\begin{equation}
\label{eq:low_rank_drivers}
d\boldsymbol{\omega}_t = \vec{\Ab}_t \vec{\Bb}_t^\top dt, \qquad
d\boldsymbol{\xi}_t = \vec{\Ab}_t \tilde{\vec{\Bb}}_t^\top dt,
\end{equation}
with $\vec{\Ab}_t,\vec{\Bb}_t\in\mathbb{R}^{d\times R}$ and $\tilde{\vec{B}}_t\in\mathbb{R}^{d_v\times R}$. This structure allows for an efficiently computable, compact representation of the flow.
For discrete tokens, under a forward-Euler discretization (with the step size absorbed into $\vec{B},\tilde{\vec{B}}$, equivalently $\Delta t=1$), the update becomes:
\begin{equation}
\Sbb_{t_{k+1}} = \Sbb_{t_k} + \vec{\Ab}_{t_k}\vec{\Bb}_{t_k}^\top \Sbb_{t_k} + \vec{\Ab}_{t_k}\tilde{\vec{\Bb}}_{t_k}^\top.
\end{equation}

\paragraph{The \texttt{tensorInv} Algorithm.}
Solving this low-rank system over a chunk of length $L_c$ can be reduced to a Triangular Tensor Inversion. Let $\mathcal{C} \in \mathbb{R}^{(L_c \times R) \times (L_c \times R)}$ be a tensor representing causal interactions between rank-components. For indices $s, t \in \{1, \dots, L_c\}$ and ranks $i, j \in \{1, \dots, R\}$:
\begin{equation}
[\mathcal{C}]_{t, i}^{s, j} = \delta_{s,t}\delta_{i,j} - \mathbb{I}(s < t) \left[ \vec{\Bb}_{t}^\top \vec{\Ab}_{s} \right]_{i, j}.
\end{equation}
\citet{cirone2025parallelflow} shows that the propagator can be computed by applying the structured inverse tensor $\mathcal{D}=\mathcal{C}^{-1}$ under the tensor-contraction product, thereby avoiding explicit dense $d_x\times d_x$ chunk propagators. In our setting, the computation is governed by an $(L_cR)\times(L_cR)$ structured causal solve together with matrix multiplications involving $d_x$ and $d_v$; the compressed asymptotic $\mathcal{O}(L_c^2R+d)$ is therefore misleading here because it hides the dominant dimension-dependent terms.

\paragraph{\texttt{tensorInv} outputs (the objects used in Algorithm~\ref{alg:Falcon3_forward}).}
In practice, we do not materialize the full inverse tensor $\mathcal{D}$.
Instead, ParallelFlow applies $\mathcal{C}^{-1}$ to two right-hand sides corresponding to the low-rank drivers.
For a chunk of length $L_c$ and rank $R$, stack (flatten time$\times$rank) the discrete drivers as
\[
\vec{\Ab}\in\mathbb{R}^{d_x\times (L_cR)},\qquad
\vec{\Bb}\in\mathbb{R}^{d_x\times (L_cR)},\qquad
\tilde{\vec{B}}\in\mathbb{R}^{d_v\times (L_cR)}.
\]
Let $\vec{M}\in\{0,1\}^{(L_cR)\times(L_cR)}$ denote the strictly-causal block mask that is $1$ for interactions from earlier times $s<t$ and $0$ otherwise; equivalently, in $(t,i)$ indexing it is exactly the indicator $\mathbb{I}(s<t)$ used in the definition of $\mathcal{C}$ above. In particular, after flattening time$\times$rank, same-time rank components do not interact inside the triangular solve.
Then the chunk-local triangular system can be written compactly as
\[
\mathcal{C} = \mathbf{Id} - \vec{M}\odot(\vec{B}^\top \vec{A}),
\]
and the two solves returned by \texttt{tensorInv} are
\begin{equation}
\label{eq:pf_WU_def}
\vec{W} := \mathcal{C}^{-1}\vec{B}^\top \in \mathbb{R}^{(L_cR)\times d_x},
\qquad
\vec{U} := \mathcal{C}^{-1}\tilde{\vec{B}}^\top \in \mathbb{R}^{(L_cR)\times d_v}.
\end{equation}
We denote these solutions by
\[
(\vec{W},\vec{U})=\texttt{tensorInv}(\vec{A},\vec{B},\tilde{\vec{B}}).
\]
Given an incoming chunk-boundary state $\Sbb_{\rm in}\in\mathbb{R}^{d_x\times d_v}$, define
$\vec{Z}:=\vec{U}+\vec{W}\Sbb_{\rm in}\in\mathbb{R}^{(L_cR)\times d_v}$.
Then the chunk-exit state is
\begin{equation}
\label{eq:pf_chunk_update}
\Sbb_{\rm out}=\Sbb_{\rm in}+\vec{A}\vec{Z},
\end{equation}
which is the form used by Algorithm~\ref{alg:Falcon3_forward} (with the additional scalar decay $\gamma_t$ handled there via chunk-local renormalization).

\subsection{Mapping Falcon-3 to ParallelFlow}

We now explicitly map the \textbf{Falcon-3} (Sliding Regression) update rule derived in Section~\ref{sec:methods} to this low-rank CDE framework.

\paragraph{The Falcon-3 Recurrence.}
Recall the mini-batch update with window size $B$ (ignoring the scalar decay $\gamma_t = 1 - \eta_t \lambda_t$, which our chunked implementations handle via the same chunk-local log-decay renormalization used throughout):
\begin{align}
\label{eq:deltanet3_recurrence}
\Sbb_t &= \underbrace{\left( \Ib - \eta_t \bar{\Cb}_t^{(B)} \right)}_{\text{Transition Matrix}} \Sbb_{t-1} + \underbrace{\eta_t \bar{\Nb}_t^{(B)}}_{\text{Input Injection}} \nonumber \\
&= \left( \Ib - \frac{\eta_t}{B_t}\sum_{j \in \mathcal{I}_t} \mathbf{x}_{j}\mathbf{x}_{j}^\top \right) \Sbb_{t-1} + \frac{\eta_t}{B_t}\sum_{j \in \mathcal{I}_t} \mathbf{x}_{j}\vb_{j}^\top,
\end{align}
where $B_t:=|\mathcal{I}_t|\le B$, $\mathbf{x}_j:=\phi(\mathbf{k}_{j-1})$ is the shifted write feature, $\bar{\Cb}_t^{(B)}:=\Cb_t^{(B)}/B_t$, and $\bar{\Nb}_t^{(B)}:=\Nb_t^{(B)}/B_t$. The unkernelized case has $\phi$ equal to the identity.

\paragraph{Identification of Drivers.}
To match the fixed-rank \texttt{tensorInv} kernel used in Algorithm~\ref{alg:Falcon3_forward}, we zero-pad every active window to width $B$. Let $\mathcal{I}_t=\{j \mid \max(2,t-B+1)\le j\le t\}$, let $B_t:=|\mathcal{I}_t|\le B$, and enumerate $\mathcal{I}_t$ increasingly as $(j_1,\dots,j_{B_t})$. Define the padded write-feature and value blocks
\begin{align}
\widetilde{\vec X}_t
&:=
\begin{bmatrix}
\mathbf{x}_{j_1} & \mathbf{x}_{j_2} & \dots & \mathbf{x}_{j_{B_t}} & \mathbf{0} & \dots & \mathbf{0}
\end{bmatrix}
\in \mathbb{R}^{d_x\times B},\\
\widetilde{\vec V}_t
&:=
\begin{bmatrix}
\vb_{j_1} & \vb_{j_2} & \dots & \vb_{j_{B_t}} & \mathbf{0} & \dots & \mathbf{0}
\end{bmatrix}
\in \mathbb{R}^{d_v\times B},
\end{align}
where padded columns are zero. Then, for $t\ge 2$,
\[
\bar{\Cb}_t^{(B)}=\frac{1}{B_t}\widetilde{\vec X}_t\widetilde{\vec X}_t^\top,
\qquad
\bar{\Nb}_t^{(B)}=\frac{1}{B_t}\widetilde{\vec X}_t\widetilde{\vec V}_t^\top,
\]
since the padded columns vanish identically.
Substituting into Eq.~\eqref{eq:deltanet3_recurrence} yields the canonical left-multiplicative low-rank form
\begin{equation}
\Sbb_t
= \Sbb_{t-1}
+ \underbrace{\Big(\frac{\eta_t}{B_t}\widetilde{\vec X}_t\Big)}_{\vec{A}_t}
\underbrace{(-\widetilde{\vec X}_t^\top)}_{\vec{B}_t^\top} \Sbb_{t-1}
+ \underbrace{\Big(\frac{\eta_t}{B_t}\widetilde{\vec X}_t\Big)}_{\vec{A}_t}
\underbrace{\widetilde{\vec V}_t^\top}_{\tilde{\vec{B}}_t^\top},
\qquad (t\ge 2).
\end{equation}
Thus the \texttt{tensorInv} kernel always uses the fixed rank parameter $R=B$ after padding:
\[
\vec{A}_t = \frac{\eta_t}{B_t}\widetilde{\vec X}_t \in \mathbb{R}^{d_x\times B},\qquad
\vec{B}_t = -\widetilde{\vec X}_t \in \mathbb{R}^{d_x\times B},\qquad
\tilde{\vec{B}}_t = \widetilde{\vec V}_t \in \mathbb{R}^{d_v\times B}.
\]
For the boundary step $t=1$, we set $\vec{A}_1=\mathbf{0}$ so the update is a strict no-op. As in the main text, the stacked triangular solve uses the block-strict-causal mask over $(\text{time},\text{rank})$ pairs, so same-time padded-window columns do not interact, and every residual is still evaluated at the pre-update state $\Sbb_{t-1}$.
In Algorithm~\ref{alg:Falcon3_forward}, after the positive-decay renormalization of Eq.~\eqref{eq:deltanet3_residual_renorm}, one simply replaces $\eta_t$ by $\hat\eta_t$ and rescales the entire step-$t$ value block by the common factor $(\delta^{(k)}_{i-1})^{-1}$ inside chunk $k$.

\paragraph{Complexity.}
Because the discrete Falcon-3 update is exactly a rank-$B$ low-rank affine recurrence after padding (and may be viewed as a forward-Euler discretization of a corresponding CDE), we can utilize the \texttt{tensorInv} algorithm without approximation at the discrete level. For the small sliding windows used in our experiments (e.g., $B=4$), the rank-$B$ overhead is modest in practice and remains far cheaper than full $\mathcal{O}(L^2)$ attention while avoiding dense $d_x\times d_x$ state propagation. This derivation shows that Falcon-3 can use chunk-parallel scan methods while retaining the sliding-window regressor update exactly at the discrete level.

\end{document}